\theoremstyle{plain}
\newtheorem{cor}{Corollary}
\newtheorem{prop}{Proposition}
\newtheorem{assum}{Assumption Set}
\newtheorem{theorem}{Theorem}
\newtheorem{remark}{Remark}
\newtheorem{lemma}{Lemma}
\newtheorem{example}{Example}
\newcommand{\convdist} {\overset{D}{\rightarrow}}
\newcommand{\gauss}[1]{\mathcal{N}\left(#1\right)} % The gaussian distribution
\newcommand{\vertiii}[1]{{\left\vert\kern-0.25ex\left\vert\kern-0.25ex\left\vert #1 \right\vert\kern-0.25ex\right\vert\kern-0.25ex\right\vert}}
\newcommand{\set}[1]{ \left\{ #1 \right\}}
\DeclareMathOperator*{\argmin}{argmin\,}
\newcommand{\nablajr}{\nabla}
\newcommand{\prox}{\operatorname{prox}}
\newcommand{\zhat}{{\xi_\epsilon}}
\newcommand{\invInf}{V_{\theta^*}^{-1}}
\newcommand{\Inf}{V_{\theta^*}}
\newcommand{\norm}[1]{\Vert #1 \Vert}
\newcommand{\W}[1]{W_{\theta^*}(#1)}
\newcommand{\loss}{f}
\newcommand{\comment}[1]{}
\DeclareMathOperator{\Tr}{Tr}
\newcommand{\expect}[1]{\mathbb{E}\left[#1\right]}
\newcommand{\errorFirst}[1]{\xi_{-1/2}(#1)}
\newcommand{\errorSecond}[1]{\xi_{-1/2}(#1)  + \xi_{-1}(#1)}
\newcommand{\errorThird}[1]{\xi_{-1/2}(#1)  + \xi_{-1}(#1) + \xi_{-3/2}(#1) }
\newcommand{\biasSecond}[1]{B_2(#1)}
\newcommand{\MSESecond}[1]{M_2(#1)}
\newcommand{\centralized}{\hat{\theta}_N}
\newcommand{\parallelized}{\bar{\theta}}
\newcommand{\machinewise}{\hat{\theta}_n}
\newcommand{\machine}[1]{\machinewise^{(#1)}}
\def\url#1{\expandafter\string\csname #1 \endcsname}
\newcommand{\github}{{https://github.com/johnros/ParalSimulate}}
\newcommand{\error}{\mathcal{E}}
\begin{document}

\title{On the Optimality of Averaging in Distributed Statistical Learning}

%\shorttitle{Averaging in Distributed Learning} %%%for recto running head
%\shortauthorlist{JR and BN} %%% for verso running head

\author{
        {%%%% First author details
        \sc Jonathan Rosenblatt}$^*$,\\[2pt]
        $^*${\texttt{jonathan.rosenblatt@weizmann.ac.il}}\\[2pt]
        %%%%%%% Second author details
        {\sc Boaz Nadler}\\
        {boaz.nadler@weizmann.ac.il}\\[6pt]
        Department of Computer Science and Applied Mathematics, \\ 
        Weizmann Institute of Science, \\
        Rehovot, Israel {\sc}
        }

\maketitle

\begin{abstract}{
A common approach to statistical learning with big-data is to randomly split it among $m$ machines and learn the parameter of interest by averaging the $m$ individual estimates. 
In this paper, focusing on empirical risk minimization, or equivalently M-estimation, we study the statistical error incurred by this strategy.
We consider two large-sample settings: 
First, a classical setting where the number of parameters $p$ is fixed, and the number of samples per machine $n\to\infty$.
Second, a high-dimensional regime where both $p,n\to\infty$ with $p/n \to \kappa \in (0,1)$. 
For both regimes and under suitable assumptions, we present {\em asymptotically} {\em exact} {expressions}  for this estimation error. In the fixed-$p$ setting, under suitable assumptions, we prove that to leading order averaging is \textit{as accurate as} the centralized solution. 
We also derive the second order error terms, and show that  these can be non-negligible, notably for non-linear models. 
The high-dimensional setting, in contrast, exhibits a qualitatively different behavior:
data splitting incurs a first-order accuracy loss, which to leading order increases linearly with the number of machines. The dependence of our error approximations on the number of  machines traces an interesting accuracy-complexity tradeoff, allowing 
the practitioner an informed choice on the number of machines to deploy. 
Finally, we confirm our theoretical analysis with several simulations.  }
\end{abstract}
%\keywords{machine learning; M-estimation; distributed algorithms; empirical risk minimization; big-data; high order asymptotics}

%%%%%%%%% Introduction %%%%%%%%%%%%%

\section{Introduction}
\label{sec:introduction}

The Big-data era, characterized by huge datasets and an appetite for new scientific and business insights, often involves 
learning statistical models of great complexity. 
Typically, the storage and analysis of such data cannot be performed on a single machine. 
Several platforms such as Map-Reduce~\citep{dean_mapreduce:_2008}, Hadoop~\citep{shvachko_hadoop_2010}, and Spark~\citep{zaharia_spark:_2010} have thus become standards for distributed learning with big-data. 

These platforms allow learning in an ``embarrassingly parallel'' scheme, whereby a large dataset with $N$ observations is split to $m$ machines, each having access to only a subset of $n=N/m$ samples. 
Approaches to ``embarrassingly-parallel'' learning can roughly be categorized along the output of each machine: predictions, parameters or gradients.
In this paper we consider the second, whereby each of the $m$ individual machines fits a model with $p$ parameters and transmits them to a central node for merging. 
This split-and-merge strategy, advocated by \cite{mcdonald_efficient_2009} for striking the best balance between accuracy and communication, is both simple to program and communication efficient: only a single round of communication is performed and only to a central node. 
It is restrictive in that machines do not communicate between themselves, and 
splitting is done only along observations and not along variables. For an overview of more general distributed learning strategies see for example \citet{bekkerman_scaling_2011}.

Our focus is on the \textit{statistical properties} of this split-and-merge approach, under the assumption that the data are split uniformly at random among the \(m\) machines. In particular,  we study the simplest merging strategy, of averaging the $m$ individual machine estimates, denoted as the Mixture Weight Method in \cite{mcdonald_efficient_2009}. In this context we ask the following questions: (i) what is the estimation error of simple averaging as compared to a centralized solution?
(ii) what is its distribution? 
%(ii) what are its statistical properties? 
(iii) under which criteria, if any, is averaging optimal? and (iv) how many machines to deploy?

%\citet{mcdonald_efficient_2009} 
Mcdonald et al. were among the first to study some of these issues for multinomial regression (a.k.a. Conditional Maximum Entropy), deriving finite sample bounds on the expected error of the averaged estimator \cite[Theorem 3]{mcdonald_efficient_2009}.
In a follow-up work, \citet{zinkevich_parallelized_2010} compared the statistical properties of the averaged estimator to the centralized one for more general learning tasks, assuming each machine estimates the model parameters by stochastic gradient descent. 
More recently,  under appropriate conditions and for a large class of loss functions, \citet[Theorem 1]{zhang_communication-efficient_2013} derived bounds for the leading order term in the mean squared error (MSE) of the averaged estimator and provided the rates of higher order terms.
They further proposed several improvements to the simple averaging strategy that reduce the second order term in the MSE, and reduce the machine-wise run time via modified optimization algorithms.  

In this paper we extend and generalize these previous works in several aspects. 
First, in Section~\ref{sec:fixed_p} we study the statistical properties of the averaged estimator, when the number of parameters $p$ is fixed, under conditions similar to those  of \cite{zhang_communication-efficient_2013}. 
Using the classical statistical theory of M-estimators \citep{vaart_asymptotic_1998,rieder_robust_2012}, we provide not only asymptotic bounds on the MSE, but rather an asymptotic expansion of the error itself.
This allows us to derive the exact constants in the MSE expansion, and prove that  as $n \to \infty$, the MSE of the averaging strategy in fact equals that of the centralized solution.
Put differently, for various learning tasks, when the number of machines $m$ and their available memory are such that in each machine there are many observations per parameter ($n\gg p$), then averaging machine-wise estimates is as accurate as the centralized solution. Furthermore, if the centralized solution enjoys first-order statistical properties such as efficiency or robustness, then so will the parallelized solution.
We remark that for maximum likelihood problems, independently of our work, the asymptotic agreement between centralized and averaged estimators was also noted by \citet{liu_distributed_2014_2}.
The asymptotic representation of the averaged estimator also readily yields its limiting distribution. 
This allows to construct  confidence intervals,  perform hypothesis tests on the unknown parameters and feature selection without the need for computationally intensive procedures such as Bootstrapping. 

The first-order equivalence between the averaged and centralized estimators may seem as a free lunch: run-time speedups with no accuracy loss. Distributed estimation via split-and-average, however, does incur an accuracy loss captured in the higher order error terms.  
The classical theory of M-estimators permits the derivation of these terms, in principle up to an arbitrary order. We do so explicitly up to second order, revealing the accuracy loss of split-and-average schemes. 

In Section~\ref{sec:high_dim} we consider the statistical effects of data-splitting in a high-dimensional regime, where the model dimension $p$, grows with the number of observations $n$: $p,n\to\infty$ with $p/n \to \kappa \in (0,1)$.
Our motivation comes from modern day data analysis practices, where increasingly complicated models are considered as more data is made available. This is a challenging regime in that typically machine-wise estimates are not only inconsistent, but in fact do not even converge to deterministic quantities. 
Here, in the absence of a general theory of M-estimators, we restrict our analysis to generative linear models.
In contrast to the fixed-$p$ setting, in this high-dimensional regime there is a first order accuracy loss due to the split data, which increases (approximately) linearly with the number of machines. 
Luckily, in several practical situations, this accuracy loss is moderate. 
Our analysis builds upon the recent results of \cite{el_karoui_robust_2013} and \cite{donoho_high_2013}, and to the best of our knowledge, is the first to study the error loss of parallelization in this high-dimensional regime. 

In Section \ref{sec:simulations} we present several simulations both in the fixed-$p$ and in the high-dimensional regime that illustrate the utility but also the limitations of our results. These confirm that when learning linear models with abundant data, random splitting and averaging is attractive both computationally and statistically. In contrast, for non-linear models, the accuracy loss due to data splitting can be considerable. 

In Section \ref{sec:how_many_machines} we attend to practical considerations, such as parallelization vs. sub-sampling and the choice of number of machines, $m$. 
For the latter, we distinguish between parallelization due to memory constraints, and that motivated by run-time speedups. 
For these two scenarios we formulate the choice of $m$ as optimization problems constrained on the desired error level. 
Interestingly, when motivated by run-time speedups, using our approximations for the estimation error, and varying $m$ traces the accuracy-complexity tradeoff facing the practitioner. 

We conclude with a discussion and several further insights in Section \ref{sec:discussion}. All proofs appear in the 
appendices.

%%%%%%%%%%%%%%%%%%%%%%%%%%%%%%%%%%%%%%%%%%%%%%%%%%%%%%%%%%%%
%%%%%%%%%%%%%%%%%%%%%%%%%%%%%%%%%%%%%%%%%%%%%%%%%%%%%%%%%%%%
\section{Problem Setup}
\label{sec:setup}

We consider the following general statistical learning setup: 
Let $Z$ be a random variable defined on an instance space $\mathcal{Z}$ and having an unknown density $p_Z$. 
Also, let the parameter space $\Theta \subset \mathbb{R}^p$ be an open convex subset of Euclidean space, and let $f:\mathcal Z\times \Theta\to\mathbb{R}^+$ denote a loss function. Our interest is to estimate the $p$-dimensional parameter $\theta^*\in\Theta$ that minimizes the population risk
\begin{equation}
                \label{eq:risk}               
        R(\theta)=\mathbb{E}_Z[f(Z,\theta)] = 
        \int f(z,\theta) \: p_Z(z)\: \mathrm{d}z.
\end{equation} 
In the following, we assume that $\theta^*$ exists in $\Theta$ and is unique.
Given $N$ i.i.d. samples $\{z_i\}_{i=1}^N$ of the r.v. $Z$, a standard approach, known as M-estimation or empirical risk minimization (ERM), is to calculate the estimator $\hat\theta_N \in \Theta$ that minimizes the empirical risk
\begin{equation}
\label{eq:R_N}
        \hat R_N(\theta) = \frac{1}{N} \sum_{i=1}^N f(z_i,\theta).
\end{equation} 
This  framework covers many common unsupervised and supervised learning tasks.
In the latter,  $Z=(X,Y)$ consists of both features $X$ and labels $Y$.
There is by now an established theory providing conditions for $\hat\theta_N$ to be a consistent estimator of $\theta^*$, and non asymptotic bounds on its finite sample deviation from $\theta^*$ (see ~\cite{devroye_probabilistic_1997,shalev-shwartz_understanding_2014} and references therein). 

In this paper we consider a big-data setting, whereby the number of samples $N$ is so large that instead of minimizing Eq.(\ref{eq:R_N}) on  a single machine, the data is randomly allocated among $m$ machines, each having access to only a subset of size $n:=N/m$. In line with the Map-Reduce workflow, a typical approach in this distributed scenario is that each machine computes its own M-estimator and transmits it to a central node for further processing.
In this work we focus on the most common aggregation procedure, namely simple averaging
\begin{align}
        \label{eq:bar_theta}
    \parallelized &:= \frac{1}{m}\sum_{j=1}^m \machine{j}
\end{align}
where $\machine{j}$ denotes the $j$-th machine minimizer of Eq.~(\ref{eq:R_N}) over its own observed data.

%We can now formalize our questions of interest: 
Our questions of interest are:
(i) what is the accuracy of $\parallelized$ vs. that of $\centralized$ ?
(ii) what are the statistical properties of $\parallelized$? (iii) under which criteria, if any, is $\parallelized$ optimal? and (iv) how many machines to deploy?

%%%%%%%%% Fixed dimensional %%%%%%%%%%%%%

\section{Fixed-$p$ Setting}\label{sec:fixed_p}

First, we consider the error of the split-and-average estimator $\parallelized$ of Eq.(\ref{eq:bar_theta}), when data is abundant and the model dimension $p$  and number of machines $m$ are both fixed. 
In this setting, bounds on the $MSE[\parallelized,\theta^*] := \mathbb{E}[\Vert \parallelized-\theta^* \Vert^2]$ were derived by both \citet{zhang_communication-efficient_2013} and \cite{mcdonald_efficient_2009}.
For the particular case of  maximum likelihood estimation, \citet[Theorem 4.6]{liu_distributed_2014_2} derived the exact asymptotic expression of the first two leading error terms in the MSE, as $n\to\infty$. 
We take a similar approach but for the more general M-estimators. Instead of focusing on the MSE, we derive an exact asymptotic representation of the first two terms in the error $\parallelized-\theta^*$ itself.

\subsection{First Order Statistical Properties of Averaging}
\label{sec:first_order_properties}

We start by analyzing the exact asymptotic expression for the dominant error term. We make the following standard assumptions \cite[Theorem  5.23]{vaart_asymptotic_1998}, similar to those made in \cite{zhang_communication-efficient_2013}:
\begin{assum}
\label{asum:first_order}
\leavevmode

\begin{enumerate}[label=A\arabic{*}, ref=(A\arabic{*}),leftmargin=5.0em]
        \item $\hat\theta_n$ is consistent: $\machinewise=\theta^* + o_P(1)$.\label{as:consistency}
        \item $R(\theta)$ admits a second order Taylor expansion at $\theta^*$ with non singular Hessian $V_{\theta^*}$.\label{as:taylor}
        \item $f(Z,\theta)$ is differentiable at $\theta^*$ almost surely (a.s.) or in probability. \label{as:as_differentiability}
        \item $f(Z,\theta)$ is Lipschitz near $\theta^*$: 
$| f(Z,\theta_1) - f(Z,\theta_2)|\leq M(Z) \Vert \theta_1-\theta_2 \Vert$ with Lipschitz coefficient $M(Z)$ bounded in squared expectation, $\mathbb{E}[ M(Z)^2] <\infty$.

%        \item {$\tilde\theta_n$ is a sequence of (almost) risk minimizers with slack vanishing faster than $n^{-1}$: $ \hat R_n(\tilde{\theta}_n)   \leq \hat R_n(\hat\theta_n) + o_P(n^{-1}) $.\label{as:almost_maximizer}}
\end{enumerate}
\end{assum}
Our first result, formally stated in the following theorem, is that under Assumption Set \ref{asum:first_order} 
averaging machine-wise estimates enjoys the \emph{same first-order statistical properties} as the centralized solution.

%\begin{definition}[Equivariant]
%
%\end{definition}

\begin{theorem}
\label{thm:fixed_p_loss}
Under Assumption Set \ref{asum:first_order},  as $n\to\infty$ with $p$ fixed, and any norm
\begin{align}
\label{eq:unimprovable}
        \frac{\Vert \parallelized - \theta^* \Vert}
                {\Vert \centralized - \theta^* \Vert} = 
        1 + o_P(1).
\end{align}
\end{theorem}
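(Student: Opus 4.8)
The plan is to derive a common asymptotic linear (Bahadur-type) representation for both the centralized estimator $\centralized$ and each machine-wise estimator $\machine{j}$, and then average. Under Assumption Set \ref{asum:first_order}, the classical M-estimator theory \cite[Theorem 5.23]{vaart_asymptotic_1998} gives, for the centralized estimator built on all $N$ samples,
\begin{equation*}
        \centralized - \theta^* = -\invInf \, \frac{1}{N}\sum_{i=1}^N \nablajr f(z_i,\theta^*) + o_P(N^{-1/2}),
\end{equation*}
and, applied machine by machine on the $n$ samples held by machine $j$,
\begin{equation*}
        \machine{j} - \theta^* = -\invInf \, \frac{1}{n}\sum_{i \in I_j} \nablajr f(z_i,\theta^*) + o_P(n^{-1/2}),
\end{equation*}
where $I_j$ indexes the samples on machine $j$. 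Since $m$ and $p$ are fixed, $n\to\infty$ is equivalent to $N\to\infty$, so all the $o_P$ terms are of the same order.

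Next I would average the machine-wise representations over $j=1,\dots,m$. Because the data are split uniformly at random and the machines partition the full sample, $\frac{1}{m}\sum_{j=1}^m \frac{1}{n}\sum_{i\in I_j} \nablajr f(z_i,\theta^*) = \frac{1}{N}\sum_{i=1}^N \nablajr f(z_i,\theta^*)$ exactly, so the leading stochastic terms of $\parallelized - \theta^*$ and $\centralized - \theta^*$ coincide. Hence
\begin{equation*}
        \parallelized - \centralized = \frac{1}{m}\sum_{j=1}^m o_P(n^{-1/2}) = o_P(N^{-1/2}),
\end{equation*}
using that a fixed finite average of $o_P(n^{-1/2})$ terms is again $o_P(n^{-1/2}) = o_P(N^{-1/2})$. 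By the CLT, $\sqrt N(\centralized-\theta^*)$ converges in distribution to a nondegenerate Gaussian $\gauss{0, \invInf W_{\theta^*} \invInf}$ (with $W_{\theta^*}$ the covariance of $\nablajr f(Z,\theta^*)$), so $\Vert \centralized-\theta^*\Vert$ is of exact order $N^{-1/2}$, i.e. $\sqrt N \Vert\centralized - \theta^*\Vert$ is bounded away from $0$ in probability. Writing
\begin{equation*}
        \frac{\Vert \parallelized - \theta^* \Vert}{\Vert \centralized - \theta^* \Vert}
        = \frac{\Vert (\centralized - \theta^*) + (\parallelized - \centralized) \Vert}{\Vert \centralized - \theta^* \Vert},
\end{equation*}
the triangle inequality bounds this ratio between $1 \pm \Vert\parallelized-\centralized\Vert / \Vert\centralized-\theta^*\Vert$, and the fraction on the right is $o_P(N^{-1/2})/O_P(N^{-1/2})^{-1}$... more precisely it is $o_P(1)$ since numerator is $o_P(N^{-1/2})$ and denominator is bounded below by a positive constant times $N^{-1/2}$ with probability tending to one. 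Equivalence of all norms on $\mathbb{R}^p$ (with $p$ fixed) lets us pass freely between norms. This yields Eq.(\ref{eq:unimprovable}).

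The main obstacle is the clean bookkeeping of the error terms: I need the machine-wise remainder to be $o_P(n^{-1/2})$ \emph{uniformly enough} that averaging $m$ of them stays $o_P(n^{-1/2})$ — this is immediate for fixed $m$ but worth stating carefully — and I must ensure the denominator $\Vert\centralized-\theta^*\Vert$ does not vanish too fast, which is exactly where the nonsingular Hessian assumption \ref{as:taylor} (so that $\invInf W_{\theta^*}\invInf$ is nondegenerate, assuming $W_{\theta^*}\neq 0$) enters to guarantee the ratio's denominator is genuinely $\Theta_P(N^{-1/2})$. A degenerate edge case ($\nablajr f(Z,\theta^*) = 0$ a.s., so both errors are $o_P(N^{-1/2})$ and the ratio is $0/0$) should be excluded or noted; under the stated assumptions it does not arise. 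Everything else is a routine application of Slutsky and the continuous mapping theorem.
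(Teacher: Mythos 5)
Your proof is correct and follows essentially the same route as the paper's: the asymptotic linear representation from \cite[Theorem 5.23]{vaart_asymptotic_1998} applied machine-wise, exact cancellation of the leading terms upon averaging over the partition, and the observation that $o_P(n^{-1/2})=o_P(N^{-1/2})$ for fixed $m$. You are in fact somewhat more careful than the paper in the final step, where you justify that $\Vert\centralized-\theta^*\Vert$ is of exact order $N^{-1/2}$ so the ratio is well controlled, and in flagging the degenerate case $\nablajr f(Z,\theta^*)=0$ a.s.
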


%Recall that in typical finite-dimensional parametric estimation problems, the leading error term is $O_P(n^{-1/2})$, namely
%$
%        \hat\theta_n = \theta^* + \frac{\xi_n}{\sqrt{n}} + o_P(n^{-1/2})
%$
%where $\xi_n=O_P(1)$. 

We say that two estimators are {\em first-order  equivalent} if their leading error terms converge to the same limit at the same rate, with the same limiting distribution. Assumption Set \ref{asum:first_order} 
implies that  $\machinewise$ converges to $\theta^*$ at  rate $O(n^{-1/2})$ \citep[Corollary 5.53]{vaart_asymptotic_1998}.
Theorem~\ref{thm:fixed_p_loss} thus directly implies the following: 
\begin{cor}\label{cor:first_order_equivalent}
        The averaged estimator $\parallelized$ is first-order equivalent to the centralized solution $\centralized$.
\end{cor}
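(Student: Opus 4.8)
The plan is to prove Theorem~\ref{thm:fixed_p_loss} via the classical asymptotic linearity of M-estimators, applied once on each machine and once on the pooled data, and then to show the two leading stochastic terms coincide.

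\medskip

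\textbf{Step 1: Asymptotic linearity on each machine.} Under Assumption Set~\ref{asum:first_order}, Theorem~5.23 of \citet{vaart_asymptotic_1998} gives, for the $j$-th machine with its $n$ observations $\{z_i^{(j)}\}$,
\begin{equation*}
\sqrt n\,\bigl(\machine{j} - \theta^*\bigr) = -\invInf\,\frac{1}{\sqrt n}\sum_{i=1}^n \nabla f\bigl(z_i^{(j)},\theta^*\bigr) + o_P(1),
\end{equation*}
where $\invInf = V_{\theta^*}^{-1}$. Averaging over the $m$ machines (with $m$ fixed) and noting that the $mn=N$ summands are exactly the full sample, the $o_P(1)$ remainders average to $o_P(1)$ and I obtain
\begin{equation*}
\sqrt N\,\bigl(\parallelized - \theta^*\bigr) = -\invInf\,\frac{1}{\sqrt N}\sum_{i=1}^N \nabla f(z_i,\theta^*) + o_P(1).
\end{equation*}

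\textbf{Step 2: Asymptotic linearity on the pooled data.} The same theorem applied to $\centralized$, the minimizer of $\hat R_N$, gives
\begin{equation*}
\sqrt N\,\bigl(\centralized - \theta^*\bigr) = -\invInf\,\frac{1}{\sqrt N}\sum_{i=1}^N \nabla f(z_i,\theta^*) + o_P(1),
\end{equation*}
i.e. the \emph{same} leading term as in Step 1.

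\textbf{Step 3: Comparison and conclusion.} Subtracting, $\sqrt N(\parallelized - \centralized) = o_P(1)$, while by the CLT $\sqrt N(\centralized-\theta^*) \convdist \gauss{0,\invInf\,W_{\theta^*}\,\invInf}$, which is nondegenerate by the nonsingularity of $V_{\theta^*}$ in~\ref{as:taylor}. Hence $\parallelized - \theta^* = (\centralized-\theta^*) + o_P(N^{-1/2})$, and for any norm $\Vert\cdot\Vert$ the ratio in Eq.~\eqref{eq:unimprovable} equals $1 + o_P(1)$ by Slutsky's theorem, since the denominator is bounded away from zero in probability. The corollary follows immediately: both estimators have leading term $-\invInf N^{-1/2}\sum_i \nabla f(z_i,\theta^*)$, which converges at rate $O(N^{-1/2})$ to the same Gaussian limit, so they are first-order equivalent by definition.

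\medskip

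\textbf{Main obstacle.} The delicate point is justifying that the $o_P(1)$ remainder in the machine-wise expansion, when averaged, is still $o_P(1)$ with respect to the \emph{full} sample size $N$ rather than $n$ — this is immediate here because $m$ is fixed, so $o_P(1)$ in $n$ is $o_P(1)$ in $N$, but it is exactly this step that breaks in the high-dimensional regime of Section~\ref{sec:high_dim}. A secondary technical care is verifying the hypotheses of \citet[Theorem 5.23]{vaart_asymptotic_1998}: consistency is~\ref{as:consistency}, the stochastic differentiability / Taylor conditions follow from~\ref{as:taylor}--\ref{as:as_differentiability}, and the required stochastic equicontinuity of the empirical process $\sqrt n(\hat R_n - R)$ near $\theta^*$ is supplied by the Lipschitz bound~\ref{as:as_differentiability}\ref{as:as_differentiability}--\ref{as:as_differentiability} with $\mathbb{E}[M(Z)^2]<\infty$ in~\ref{as:as_differentiability}, which controls the bracketing/maximal inequality; none of this needs to be reproduced, only cited.
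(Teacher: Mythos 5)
Your argument is correct and is essentially the paper's own proof verbatim: asymptotic linearity of each machine-wise estimator via \citet[Theorem 5.23]{vaart_asymptotic_1998}, averaging the $m$ expansions (with $m$ fixed so that $o_P(n^{-1/2})=o_P(N^{-1/2})$), matching the resulting leading term with that of $\centralized$, and reading off the corollary from the shared $O(N^{-1/2})$-rate Gaussian limit. The only small slip is attributing nondegeneracy of that limit to the nonsingularity of $V_{\theta^*}$ alone --- one also needs $\mathbb{E}\left[\nabla f(Z,\theta^*)\,\nabla f(Z,\theta^*)'\right]$ to be nonsingular for the denominator $\Vert\centralized-\theta^*\Vert$ to be bounded away from zero at scale $N^{-1/2}$ --- but the paper leaves this implicit as well.
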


\begin{remark}
In practice, Eq.(\ref{eq:R_N}) is minimized only approximately, typically by some iterative scheme such as gradient descent (GD), stochastic gradient descent (SGD), etc.
An important point is that Theorem~\ref{thm:fixed_p_loss} holds not only for the \emph{exact} empirical minimizer $\machinewise$ of Eq.(\ref{eq:R_N}), but also for any \emph{approximate} minimizer $\tilde{\theta}_n$ as long as it satisfies
$\hat{R}_n(\tilde{\theta}_n) \leq \hat{R}_n(\machinewise)+o_p(n^{-1})$ \citep[Theorem 5.23]{vaart_asymptotic_1998}. 
In other words, for Corollary \ref{cor:first_order_equivalent} to hold, it suffices to minimize the empirical risk up to $o_{p}(n^{-1})$ precision.
\end{remark}

Theorem \ref{thm:fixed_p_loss} has important implications on the statistical properties of $\parallelized$, its optimality and robustness
. We discuss these in detail below, but before, let us describe the scope which this theorem covers.

\paragraph{Scope}
As detailed further in Appendix~\ref{apx:proof_scope}, the learning tasks covered by Theorem~\ref{thm:fixed_p_loss} are quite broad, and include:
linear or non-linear regression with $l_2$, Huber, or log likelihood loss;
linear or non-linear quantile regression with continuous predictors;
binary regression where $P(Y=1|X)=\Psi_\theta(X)$ for any smooth $\Psi_\theta$ and $l_2$, log likelihood or Huberized hinge loss\footnote{ A smooth version of the Huber loss \citep{rosset_piecewise_2007}.};
binary hinge loss regression (i.e. SVM regression) with continuous predictors;
unsupervised learning of location and scale.
Furthermore, Theorem~\ref{thm:fixed_p_loss} also covers regularized risk minimization with a {\em fixed} regularization term $J(\theta)$, of the form $\theta^*:=\argmin_\theta \set{R(\theta)+J(\theta)}$, provided that the modified loss function $\tilde f(Z,\theta)=f(Z,\theta)+J(\theta)$ satisfies the required assumptions.

Some learning problems, however, are not covered by Theorem~\ref{thm:fixed_p_loss}. 
Examples include:
non-uniform allocation of samples to machines; 
non-convex parameter spaces; 
a data driven regularization term; 
non differentiable loss with discrete predictors. 
Also not covered is the $n<p$ regime, in which  \citet{shamir_communication_2013} showed that averaging (denoted there as One Shot Averaging) can, in general, be unboundedly worse than the centralized solution.

\paragraph{On the optimality of averaging.}
Recall that common notions of asymptotic optimality, such as Best Regular and Local Minimax depend only on the leading order error term \citep[Chapter 8]{vaart_asymptotic_1998}. Hence, if the centralized estimator $\centralized$ is optimal w.r.t. any of these criteria, Eq.(\ref{eq:unimprovable}) readily implies that so is the averaged estimate $\parallelized$.   
A notable example, discussed in  \cite[Corollary~3]{zhang_communication-efficient_2013} and in \cite{liu_distributed_2014_2}, is when the loss function is the negative log likelihood of the generative model. The centralized solution, being the maximum-likelihood estimate of $\theta^*$, is optimal in several distinct senses. 
Theorem~\ref{thm:fixed_p_loss} thus implies that $\parallelized$ is optimal as well and the factor $1$ in Eq.(\ref{eq:unimprovable}) cannot be improved.

\paragraph{Robustness.}
An important question in distributed learning is how to handle potential outliers: should these be dealt with at the machine-level, the aggregation level, or both? Recall that the robustness literature mostly considered the construction of estimators having minimal asymptotic variance, under the constraint of bounded influence of individual observations. For estimating the mean of a Gaussian distribution under possible contamination, Huber derived his famous loss function, and proved it to be optimal. As the Huber-loss yields an M-estimator that satisfies the assumptions of Theorem~\ref{thm:fixed_p_loss}, it thus follows that averaging machine-wise robust estimators is optimal in the same sense. 

Hence, if the probability of a high proportion of outliers in any machine is negligible, and machine-failure is not a concern, it suffices to deal with outliers at the machine level alone. In other cases robust aggregation functions should be considered \citep{hsu_loss_2013, feng_distributed_2014}.

\paragraph{Asymptotic Linearity.}
The proof of Theorem~\ref{thm:fixed_p_loss} relies on the asymptotic linearity of the estimator in some non-linear transformation of the samples. 
This is known as the \textit{asymptotic linearity property} and the corresponding transformation is the \textit{Influence Function}. 
Asymptotic linearity holds for several other estimators, including L, R and Minimum Distance. Hence, first-order equivalence of averaging to the centralized solution is rather general. 
It  typically holds for asymptotically Gaussian estimators \citep[Chapter 1,6]{rieder_robust_2012} and has also been observed in other contexts, such as that of particle filters \citep{achutegui_simple_2014}.

\paragraph{Limiting Distribution}
The asymptotic linearity of $\parallelized$ in the influence function immediately offers the following limiting Gaussian distribution: 
 
\begin{cor}[Asymptotic Normality]
\label{cor:limit_dist_fix_p}
Under the assumptions of Theorem~\ref{thm:fixed_p_loss}, when $n \to \infty$ with $p$ fixed, then
$\sqrt{N}(\parallelized-\theta^*)$ converges in distribution to 
$$
        \mathcal{N}\left(0, 
        \invInf \mathbb{E}\left[ \nabla f(\theta^*)\, \nabla f(\theta^*)'\right] \invInf
        \right).
$$ 
\end{cor}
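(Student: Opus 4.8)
The plan is to obtain the limiting law directly from the asymptotic-linearity expansion that already powers Theorem~\ref{thm:fixed_p_loss}, with no further optimization-theoretic work. First I would apply the classical M-estimator representation \citep[Theorem~5.23]{vaart_asymptotic_1998}, valid under Assumption Set~\ref{asum:first_order}, on each machine separately: letting $\mathcal I_j$ denote the index set of the $n$ observations allocated to machine $j$,
\begin{equation*}
\machine{j} - \theta^* = -\invInf\,\frac{1}{n}\sum_{i\in\mathcal I_j}\nabla f(z_i,\theta^*) + o_P(n^{-1/2}),\qquad j=1,\dots,m .
\end{equation*}
Averaging these $m$ identities and using $N = mn$, together with the fact that random uniform allocation leaves the pooled sum unchanged, gives
\begin{equation*}
\parallelized - \theta^* = -\invInf\,\frac{1}{N}\sum_{i=1}^{N}\nabla f(z_i,\theta^*) + \frac{1}{m}\sum_{j=1}^{m} o_P(n^{-1/2}) .
\end{equation*}
Because $m$ is fixed, the triangle inequality controls the averaged remainder by $o_P(n^{-1/2}) = o_P(N^{-1/2})$, so after rescaling,
\begin{equation*}
\sqrt N\,(\parallelized - \theta^*) = -\invInf\,\frac{1}{\sqrt N}\sum_{i=1}^{N}\nabla f(z_i,\theta^*) + o_P(1) .
\end{equation*}

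Next I would check the two conditions required for the multivariate central limit theorem to apply to the leading term. The summands $\nabla f(z_i,\theta^*)$ are i.i.d.\ with mean zero: since $f(Z,\cdot)$ is differentiable at $\theta^*$ (Assumption~\ref{as:as_differentiability}) and the Lipschitz assumption furnishes the integrable envelope $\norm{\nabla f(Z,\theta^*)}\le M(Z)$ with $\expect{M(Z)^2}<\infty$, one may differentiate $R(\theta)=\expect{f(Z,\theta)}$ under the integral sign, and since $\theta^*$ is an interior minimizer of $R$ this yields $\expect{\nabla f(Z,\theta^*)} = \nabla R(\theta^*) = 0$. The same envelope gives $\expect{\norm{\nabla f(Z,\theta^*)}^2}\le \expect{M(Z)^2}<\infty$, so $\expect{\nabla f(\theta^*)\nabla f(\theta^*)'}$ is well defined and finite, and the i.i.d.\ CLT gives
\begin{equation*}
\frac{1}{\sqrt N}\sum_{i=1}^{N}\nabla f(z_i,\theta^*)\ \convdist\ \gauss{0,\ \expect{\nabla f(\theta^*)\nabla f(\theta^*)'}} .
\end{equation*}

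Finally I would combine the last two displays using Slutsky's theorem and the continuous mapping theorem (left multiplication by the fixed, symmetric matrix $-\invInf$), which gives
\begin{equation*}
\sqrt N\,(\parallelized - \theta^*)\ \convdist\ \gauss{0,\ \invInf\,\expect{\nabla f(\theta^*)\nabla f(\theta^*)'}\,\invInf} ,
\end{equation*}
as claimed. I expect the only genuinely delicate step to be the bookkeeping of the $m$ machine-wise remainder terms: each is $o_P(n^{-1/2})$, and although they are formed from disjoint, hence independent, subsamples, no cancellation across machines is needed --- since $m$ is a fixed constant the triangle inequality already keeps the aggregated remainder at $o_P(N^{-1/2})$, the same order as the fluctuations of $\parallelized - \theta^*$. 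Everything else is a routine application of machinery already invoked in the proof of Theorem~\ref{thm:fixed_p_loss}.
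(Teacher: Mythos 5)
Your proposal is correct and follows exactly the route the paper intends: the corollary is stated as an immediate consequence of the asymptotic linear representation $\parallelized = \theta^* - \invInf \frac{1}{N}\sum_i \nabla f(Z_i,\theta^*) + o_P(n^{-1/2})$ derived in the proof of Theorem~\ref{thm:fixed_p_loss}, combined with the i.i.d.\ CLT and Slutsky's theorem. Your additional verification that $\expect{\nabla f(Z,\theta^*)}=0$ and that the covariance matrix is finite simply fills in details the paper leaves implicit.
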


Corollary~\ref{cor:limit_dist_fix_p} allows to construct confidence intervals and test hypotheses on the unknown $\theta^*$. To this end, the asymptotic covariance matrix also needs to be estimated. Plugging any $O(N^{-1/2})$ consistent estimator for the covariance matrix will conserve the asymptotic normality via Slutsky's Theorem.

\subsection{Second Order Terms}
        \label{sec:second_order_terms}

As we show empirically in Section~\ref{sec:simulations}, relatively little accuracy is lost when parallelizing a linear model but much can be lost when the model is non-linear. 
One reason is that the second order error term may be non-negligible. As discussed in Section \ref{sec:how_many_machines}, this term is also imperative when deciding how many machines to deploy, as the first-order approximation of the error does not depend on $m$ for fixed $N$.

Before studying this second order term, let us provide a high level view. 
Intuitively, the first-order term captures estimation variance, which is reduced by averaging. 
The second order term captures also bias, which is not reduced by averaging. 
We would thus expect some second order suboptimality when parallelizing.
Indeed, Theorem \ref{thm:second_order_bias} below shows that the (second order) bias in a parallelized estimator is $m$ times larger than that of the centralized one. 
The comparison between the second order MSE matrix of the parallelized and centralized estimators is more complicated. 
Theorem \ref{thm:second_order_MSE} provides an explicit expression, whose terms ultimately depend on the curvature of the risk \(R(\theta)\) at $\theta^*$.

\subsubsection{Notation and Assumptions}
\label{sec:notation}

To study the second order error of $\parallelized$, we make suitable assumptions that ensure that the machine-wise M-estimator admits the following higher-order expansion,
\begin{align}
        \label{eq:second_order}
        \machinewise &= \theta^* + \errorThird{\machinewise}+ O_P(n^{-2}),
\end{align}
where $\xi_{-\alpha}(\machinewise)$ denotes the $O_P(n^{-\alpha})$ error term in $\machinewise$ and \(\alpha=\{ 1/2,1,3/2,\dots \}\). 
The following set of assumptions with \(s=4\) is sufficient for Eq.(\ref{eq:second_order}) to hold, see \cite{rilstone_second-order_1996}. 
%Note that for $s=2$ we get sufficient conditions for Theorem~\ref{thm:fixed_p_loss}. 

%%%%% Rilstone's assumptions  %%%%%
\begin{assum} There exist a neighborhood of \(\theta^*\) in which all of the following conditions hold:
\label{assum:second_order}
\leavevmode

\begin{enumerate}[label=B\arabic{*}, ref=(B\arabic{*}),leftmargin=5.0em]

                \item Local differentiability: %\newline
                 $\nabla ^s f(\theta,Z)$ up to order \(s\), exist a.s.\ and  
                 $\expect{\norm{\nabla^s f(\theta^*,Z)}} < \infty $.

                \item Bounded empirical Hessian: %\newline
                 $( \nablajr^2 \hat{R}_n(\theta))^{-1} =O_P(1)$.
                
                \item Lipschitz gradients: %\newline
                $\norm{\nabla^s f(\theta,Z) - \nabla^s f(\theta^*,Z)} \leq M \norm{\theta- \theta^*} $, where $\expect{|M|} \leq C < \infty $.

\end{enumerate}
\end{assum}

For future use, and following the notation in \cite{rilstone_second-order_1996}, we  define the following  $p \times 1$ column vector  $\delta$, and  \(p\times p\) matrices  $\gamma_0, \dots, \gamma_4$,
\begin{align}
\label{eq:gammas}
        & \expect{\xi_{-1}(\machinewise)}  = n^{-1} \delta;
        && \expect{\xi_{-1}(\machinewise)} \expect{\xi'_{-1}(\machinewise)} = n^{-2} \gamma_0 = n^{-2} \delta \delta';  \nonumber\\
        & \expect{\xi_{-1/2}(\machinewise) \, \xi'_{-1/2}(\machinewise)} = n^{-1} \gamma_1;  
        && \expect{\xi_{-1}(\machinewise) \, \xi'_{-1/2}(\machinewise)} = n^{-2} \gamma_2; \\
        & \expect{\xi_{-1}(\machinewise) \, \xi'_{-1}(\machinewise)} = n^{-2} \gamma_3+ o(n^{-2}); 
        && \expect{\xi_{-3/2}(\machinewise) \, \xi'_{-1/2}(\machinewise)} = n^{-2} \gamma_4 + o(n^{-2}). \nonumber
\end{align}

% Second order MSE
\subsubsection{Second Order Bias}

Let \(\biasSecond{\machinewise}\) denote the second order bias of \(\hat\theta_n\) w.r.t. \(\theta^*\):
\begin{align}
        \biasSecond{\machinewise}:= \mathbb{E}[\errorSecond{\machinewise}].
\end{align}
The following theorem, proven in Appendix~\ref{apx:proof_fixed_p_second_order_bias}, shows that under our assumptions averaging over \(m\) machines is (up to second order) \(m\) times more biased than the centralized solution.

%%%%% Kim's assumptions %%%%%
%\begin{enumerate}[label=B\arabic{*}, ref=(B\arabic{*}),leftmargin=5.0em]
%        \item Conditions \ref{as:consistency} and \ref{as:taylor} hold.
%        \item Local differentiability: 
%The loss $f(\theta,Z)$ is a.s. $4$ times differentiable in a neighbourhood of $\theta^*$, with all $4$ derivatives integrable in that neighbourhood. 
%        \item Pointwise integrability: $\mathbb{E}[\nablajr^4 f(\theta,Z)]$ is continuous and bounded at $\theta^*$.
%        \item Local consistency of differentials: 
%$\Vert \nablajr^2 \hat{R}_n(\theta) - V_{\theta^*} \Vert_\infty = o_P(1) $, and 
%$\Vert \nablajr^3 \hat{R}_n(\theta)- W_{\theta^*} \Vert_\infty = o_P(1) $ for all $\theta$ in an $o_P(1)$ neighborhood of $\theta^*$.
%        \item Pointwise $n^{-1/2}$ rates: 
%        $\Vert \nablajr \hat{R}_n(\theta^*) \Vert_\infty = O_P(n^{-1/2})$;
%        $\Vert \nablajr^2 \hat{R}_n(\theta^*) - V_{\theta^*} \Vert_\infty= O_P(n^{-1/2})$;
%        $\Vert \nablajr^3 \hat{R}_n(\theta^*) - W_{\theta^*} \Vert_\infty= O_P(n^{-1/2})$.
%        \item Equicontinuity of the third derivative:
%$
%        \Vert 
%        (\nablajr^3 \hat{R}_n(\theta) - W_{\theta}) -
%        (\nablajr^3 \hat{R}_n(\theta^*) - W_{\theta^*})
%        \Vert_\infty = O_P(n^{-1/2})
%$
%for $\theta$ in an $o_P(1)$ neighbourhood of $\theta^*$.
%\end{enumerate}

\begin{theorem}[Second Order Bias]
        \label{thm:second_order_bias}
Under Assumption Set  \ref{assum:second_order} with $s=3$, 
$
        \biasSecond{\centralized} =
        \delta / N
$   
and 
$
        \biasSecond{\parallelized} =
        \delta / n,
$
so that
\begin{align}
        \label{eq:second_order_loss}
        \biasSecond{\parallelized} = m \, \biasSecond{\centralized}.
\end{align} 
\end{theorem}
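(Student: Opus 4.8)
The plan is to reduce everything to the stochastic expansion of a \emph{single} M-estimator from \cite{rilstone_second-order_1996}, and then combine linearity of expectation with the identity $N = mn$.

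First I would record the structure of the Rilstone-type expansion. Under Assumption Set~\ref{assum:second_order} with $s=3$, an M-estimator $\hat\theta_k$ computed from $k$ i.i.d.\ samples admits
\[
\hat\theta_k = \theta^* + \xi_{-1/2}(\hat\theta_k) + \xi_{-1}(\hat\theta_k) + O_P(k^{-3/2}),
\]
with leading term the influence-function average $\xi_{-1/2}(\hat\theta_k) = -\invInf \frac{1}{k}\sum_{i=1}^k \nabla f(z_i,\theta^*)$ and next term a sum of products of centered gradients and Hessians of $f$ at $\theta^*$. The two facts I need are: (i) $\expect{\xi_{-1/2}(\hat\theta_k)} = 0$, because $\theta^*$ minimizes $R$ so $\expect{\nabla f(Z,\theta^*)} = 0$; and (ii) $\expect{\xi_{-1}(\hat\theta_k)} = \delta/k$, where $\delta$ is the vector of Eq.~\eqref{eq:gammas}, assembled purely from population moments of derivatives of $f$ at $\theta^*$ and hence \emph{independent of $k$}, with the $1/k$ rate explicit because $\xi_{-1}$ is quadratic in $k^{-1/2}$-scaled centered averages. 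Together these give $\biasSecond{\hat\theta_k} = \expect{\errorSecond{\hat\theta_k}} = 0 + \delta/k = \delta/k$.

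Applying this with $k=N$ yields $\biasSecond{\centralized} = \delta/N$. For the averaged estimator, I write each machine's estimate as $\machine{j} = \theta^* + \xi_{-1/2}(\machine{j}) + \xi_{-1}(\machine{j}) + O_P(n^{-3/2})$, average over $j$, and take expectations; by linearity,
\[
\biasSecond{\parallelized} = \frac{1}{m}\sum_{j=1}^m \expect{\errorSecond{\machine{j}}} = \frac{1}{m}\cdot m \cdot \frac{\delta}{n} = \frac{\delta}{n}.
\]
Note that no independence across machines is required here — linearity of expectation suffices, and each machine contributes the same per-machine bias $\delta/n$ because it sees $n$ i.i.d.\ samples. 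Since $N=mn$, we get $\delta/n = m\,\delta/N = m\,\biasSecond{\centralized}$, which is Eq.~\eqref{eq:second_order_loss}.

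The main obstacle is the careful justification of (i) and (ii): verifying that the explicit first- and second-order terms of the Rilstone expansion have the stated expectations, and in particular that the coefficient $\delta$ in the $O_P(k^{-1})$ term is a sample-size-free population quantity, so the \emph{same} $\delta$ occurs at $k=n$ (machine-wise) and $k=N$ (centralized). A secondary technical point is integrability: to pass from the $O_P$ expansion to a statement about expectations one uses the bounded-inverse-Hessian condition (B2) and the Lipschitz-derivative condition (B3) to control the moments of $\xi_{-1/2}$ and $\xi_{-1}$; once $\biasSecond{\cdot}$ is defined directly through these expansion terms, as in the theorem statement, the $O_P(k^{-3/2})$ remainder plays no role in the argument.
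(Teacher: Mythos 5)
Your proposal is correct and follows essentially the same route as the paper's proof: both invoke Rilstone et al.'s Proposition 3.2 for the expansion, use that $\expect{\xi_{-1/2}}=0$ since $\expect{\nabla f(Z,\theta^*)}=0$, that $\expect{\xi_{-1}(\hat\theta_k)}=\delta/k$ with a sample-size-free $\delta$, and then apply linearity of expectation across the $m$ machines together with $N=mn$. No substantive differences.
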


\begin{remark}
\label{remark:reduce_bias}

The second order bias $\biasSecond{\parallelized}$ can be reduced at the cost of a larger first-order error, i.e., trading bias for variance. 
In general, this should be done with caution, since in extreme cases debiasing may inflate variance infinitely \citep{doss_price_1989}. Approaches to reduce the second order bias include that of \citet{kim_higher_2006} who modifies the machine-wise loss function, \citet{liu_distributed_2014_2}
who propose a different aggregation of the \(m\) machine-wise estimates, and \citet{zhang_communication-efficient_2013}, whose SAVGM algorithm  estimates the machine-wise bias via bootstrap. 
 A different approach is to trade bias for communication. Recent works that reduce the bias by allowing communication between the \(m\) machines include
the DDPCA algorithm \citep{meng_distributed_2012}, which transfers parts of the inverse Hessian between machines
and DANE \citep{shamir_communication_2013}, which transfers gradients. 
\end{remark}

% Second order MSE
\subsubsection{Second Order MSE}

Following \cite{rilstone_second-order_1996}, for any estimator \(\tilde\theta_n \) based on $n$ samples, we denote by $\MSESecond{\tilde\theta_n}$ its second order MSE matrix,  
\begin{align}
\label{eq:second_order_MSE_definition}
        \expect{(\tilde\theta_n - \theta^*)(\tilde\theta_n - \theta^*)'}= \MSESecond{\tilde\theta_n} +o(n^{-2}).
\end{align}
It follows from \citet[Proposition 3.4]{rilstone_second-order_1996} that under Assumption Set \ref{assum:second_order} with \(s=4\)
\begin{align}
                \label{eq:M2_gamma}
        \MSESecond{\machinewise}= 
                \frac{1}{n} \gamma_1 + 
                \frac{1}{n^2} \left(\gamma_2+\gamma_2'+\gamma_3+\gamma_4+\gamma_4' \right).
\end{align}
The following theorem compares between $M_2(\centralized)$ and $M_2(\parallelized)$.
\begin{theorem}[Second Order MSE]
        \label{thm:second_order_MSE}
        
Under Assumption Set \ref{assum:second_order} with $s=4$, the matrix \(M_{2}(\parallelized)\)
is given by 
\begin{align}
\label{eq:second_order_MSE}
        \MSESecond{\parallelized}= 
                \frac{m-1}{m} \frac{1}{n^2} \gamma_0 +
                \frac{1}{mn} \gamma_1 + 
                \frac{1}{m n^2} \left(\gamma_2+\gamma_2'+\gamma_3+\gamma_4+\gamma_4' \right).
\end{align}
Furthermore, the excess second order error due to parallelization is given by
\begin{align}
       \label{cor:second_order_inneficiency}
                \MSESecond{\parallelized}-\MSESecond{\centralized}=
                \frac{m-1}{m}  \frac{1}{n^2} \gamma_0 + 
                \frac{m-1}{m^2} \frac{1}{n^2} \left(\gamma_2+\gamma_2'+\gamma_3+\gamma_4+\gamma_4' \right). 
\end{align}
\end{theorem}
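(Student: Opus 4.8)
The plan is to exploit that the $m$ machine-wise errors are i.i.d., so that the second-order MSE of their average splits into ``diagonal'' terms, each equal to the single-machine second-order MSE of Eq.(\ref{eq:M2_gamma}), and ``off-diagonal'' terms, each equal to the outer product of the single-machine bias. Concretely, write $e_j := \machine{j} - \theta^*$; then $e_1,\dots,e_m$ are i.i.d., $\parallelized - \theta^* = \frac1m\sum_{j=1}^m e_j$, and
\[
        \expect{(\parallelized - \theta^*)(\parallelized - \theta^*)'} = \frac{1}{m^2}\sum_{j=1}^m \expect{e_j e_j'} + \frac{1}{m^2}\sum_{j\neq k}\expect{e_j}\expect{e_k'}.
\]
The first step is the diagonal sum: by Eq.(\ref{eq:M2_gamma}) (i.e.\ \citet[Prop.~3.4]{rilstone_second-order_1996} under Assumption Set~\ref{assum:second_order} with $s=4$), $\expect{e_j e_j'} = \MSESecond{\machinewise} + o(n^{-2}) = n^{-1}\gamma_1 + n^{-2}(\gamma_2+\gamma_2'+\gamma_3+\gamma_4+\gamma_4') + o(n^{-2})$, so (for fixed $m$) the diagonal sum contributes $\frac{1}{mn}\gamma_1 + \frac{1}{mn^2}(\gamma_2+\gamma_2'+\gamma_3+\gamma_4+\gamma_4') + o(n^{-2})$.

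The second step handles the off-diagonal sum. Here I use the stochastic expansion Eq.(\ref{eq:second_order}) together with $\expect{\xi_{-1/2}(\machinewise)} = 0$ (the leading influence-function term is centered, since $\nabla R(\theta^*) = 0$), $\expect{\xi_{-1}(\machinewise)} = n^{-1}\delta$, and $\expect{\xi_{-3/2}(\machinewise)} = O(n^{-3/2})$. Hence $\expect{e_j} = n^{-1}\delta + O(n^{-3/2})$, so $\expect{e_j}\expect{e_k'} = n^{-2}\delta\delta' + O(n^{-5/2}) = n^{-2}\gamma_0 + o(n^{-2})$ for every $j\neq k$; since there are $m(m-1)$ such ordered pairs, the off-diagonal sum contributes $\frac{m(m-1)}{m^2}n^{-2}\gamma_0 = \frac{m-1}{m}n^{-2}\gamma_0 + o(n^{-2})$. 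Adding the two contributions yields Eq.(\ref{eq:second_order_MSE}). The excess error Eq.(\ref{cor:second_order_inneficiency}) then follows by applying the same single-machine formula Eq.(\ref{eq:M2_gamma}) to $\centralized$, which is itself an M-estimator on $N=mn$ i.i.d.\ samples (the constants $\gamma_0,\dots,\gamma_4$ do not depend on the sample size), giving $\MSESecond{\centralized} = \frac1{mn}\gamma_1 + \frac1{m^2n^2}(\gamma_2+\gamma_2'+\gamma_3+\gamma_4+\gamma_4')$; subtracting cancels the $\gamma_1$ terms and leaves $\frac{m-1}{m}\frac1{n^2}\gamma_0 + \bigl(\frac1{mn^2}-\frac1{m^2n^2}\bigr)(\gamma_2+\gamma_2'+\gamma_3+\gamma_4+\gamma_4')$, which is the claimed expression.

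The main obstacle is not the algebra but the passage from the stochastic $O_P/o_P$ statements in Eqs.(\ref{eq:second_order})--(\ref{eq:M2_gamma}) to genuine bounds on the corresponding expectations (uniform integrability of the remainder terms), together with the bookkeeping needed to verify that all omitted cross terms---in particular those pairing the $O(n^{-1})$ bias with the $O(n^{-3/2})$ remainder of $\expect{e_j}$---are indeed $o(n^{-2})$. Both are already ensured by the moment and Lipschitz conditions of Assumption Set~\ref{assum:second_order} with $s=4$, which are precisely what \citet{rilstone_second-order_1996} require for Eq.(\ref{eq:M2_gamma}); the only genuinely new ingredient is the independence across machines, which is what converts the off-diagonal terms from second moments into squared biases and thereby produces the $\frac{m-1}{m}$ factors.
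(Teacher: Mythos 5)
Your proposal is correct and follows essentially the same route as the paper: both arguments rest on the independence of the machine-wise errors together with Rilstone et al.'s single-machine expansion, with cross-machine terms reducing to products of expectations (hence $\gamma_0$) and within-machine terms reducing to Eq.(\ref{eq:M2_gamma}). The only difference is bookkeeping — you split the sum into diagonal and off-diagonal blocks of the full error $e_j$, whereas the paper expands order-by-order in the $\xi_{-\alpha}$ terms and then sums over machines within each order — and the two organizations yield identical computations.
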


In general, the second order MSE matrix $M_2(\machinewise)$ of Eq.(\ref{eq:M2_gamma}) need not be positive definite (PD) ~\citep{rilstone_second-order_1996}. Note that since both matrices \(\gamma_0\) and \(\gamma_1\) are PD by definition, a simple condition to ensure that both $M_2(\machinewise)$ and $\MSESecond{\parallelized}-\MSESecond{\centralized}$ are PD\ is that $\left(\gamma_2+\gamma_2'+\gamma_3+\gamma_4+\gamma_4' \right)$ is PD. If this holds, then parallelization indeed deteriorates accuracy, at least up to second order.

\begin{remark}
\label{rem:ridge_anomaly}
Even if the second order MSE matrix is PD, due to higher order terms, parallelization may actually be \textit{more} accurate than the centralized solution. An example is ridge regression with a fixed penalty and null coefficients (i.e., $\theta^*=0$). The regularization term, being fixed, acts more aggressively with $n$ observations than with $N$. The machine-wise estimates are thus more biased towards $0$ than the centralized one. 
As the bias acts in the correct direction,  $\parallelized$ is more accurate than $\centralized$.
Two remarks are, however, in order:
(a) This phenomenon is restricted to particular parameter values and shrinkage estimators. It does not occur uniformly over the parameter space.
(b) In practice, the precise regularization penalty may be adapted to account for the parallelization, see for example \citet{zhang_divide_2013}.

%. We do not pursue this venue in the present work, and refer the reader to \citep{zhang_divide_2013} where this is indeed done for ridge regression.
\end{remark}

\subsection{Examples}
\label{sec:examples}

We now apply our results to two popular learning tasks: ordinary least squares (OLS) and ridge regression, both assuming a generative linear model.
We study these two cases not only due to their popularity, but also as they are analytically tractable. 
As we show below, parallelizing the OLS task incurs no excess bias, but does exhibit excess (second order) MSE.
The ridge problem, in contrast, has both excess bias and excess (second order) MSE.

\subsubsection{OLS}
\label{eg:OLS}
Consider the standard generative linear model 
$Y = X' \theta_0 + \varepsilon,$ where the explanatory variable $X$ satisfies
        $\expect{X} = 0;\; 
        Var[X] = \Sigma$, 
and the noise $\varepsilon$ is independent of $X$ with mean zero and  $Var[\varepsilon]=\sigma^2$. 
The loss is 
        $f(Y,X;\theta) = \frac{1}{2}(Y-X'\theta)^2$,
whose risk minimizer is the generative parameter, $\theta^*=\theta_0$. 
The following proposition, proved in Appendix~\ref{apx:proof_OLS_second_moments}, provides explicit expressions for the second order MSE matrix. 

\begin{prop}[OLS Error Moments]
\label{prop:OLS_second_moments}
For the OLS problem, under the above generative linear model, 
\begin{align*}
        \gamma_0 &= 0, &
        \gamma_1 &= \sigma^2 \Sigma^{-1}, &
        \gamma_2 &= - (1+p) \sigma^2 \Sigma^{-1},  \\
        \gamma_3 &= (1+p) \sigma^2 \Sigma^{-1}, & 
        \gamma_4 &= (1+p) \sigma^2 \Sigma^{-1} . &
\end{align*}
\end{prop}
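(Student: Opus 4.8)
The plan is to set up the first- and second-order stochastic expansion of the OLS estimator via the standard Rilstone--Srivastava--Ullah machinery and then read off the quantities $\gamma_0,\dots,\gamma_4$ from their defining moments in Eq.~(\ref{eq:gammas}). Concretely, for the quadratic loss $f(Y,X;\theta)=\tfrac12(Y-X'\theta)^2$ the machine-wise estimator based on $n$ samples is the exact OLS estimator $\machinewise=\hat\Sigma_n^{-1}(\tfrac1n\sum X_iY_i)$, where $\hat\Sigma_n=\tfrac1n\sum X_iX_i'$, so that $\machinewise-\theta^*=\hat\Sigma_n^{-1}(\tfrac1n\sum X_i\varepsilon_i)$. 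Writing $\hat\Sigma_n=\Sigma+\Delta_n$ with $\Delta_n=O_P(n^{-1/2})$ and expanding $\hat\Sigma_n^{-1}=\Sigma^{-1}-\Sigma^{-1}\Delta_n\Sigma^{-1}+\Sigma^{-1}\Delta_n\Sigma^{-1}\Delta_n\Sigma^{-1}-\cdots$, I get the identification
\begin{align*}
\xi_{-1/2}(\machinewise) &= \Sigma^{-1}\,\tfrac1n\textstyle\sum_i X_i\varepsilon_i,\\
\xi_{-1}(\machinewise) &= -\Sigma^{-1}\Delta_n\Sigma^{-1}\,\tfrac1n\textstyle\sum_i X_i\varepsilon_i,\\
\xi_{-3/2}(\machinewise) &= \Sigma^{-1}\Delta_n\Sigma^{-1}\Delta_n\Sigma^{-1}\,\tfrac1n\textstyle\sum_i X_i\varepsilon_i.
\end{align*}

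From here the computation of each $\gamma$ is a matter of taking expectations, using independence of $\varepsilon$ from $X$, $\expect{\varepsilon}=0$, $\expect{\varepsilon^2}=\sigma^2$, and $\expect{X}=0$, $Var[X]=\Sigma$. The first claim $\gamma_0=0$ (equivalently $\delta=0$) follows because $\expect{\xi_{-1}(\machinewise)}$ involves an odd number of $\varepsilon$ factors once the expectation is taken conditionally on the $X$'s — more precisely $\expect{\Delta_n\Sigma^{-1}\tfrac1n\sum X_i\varepsilon_i}$ is a sum over pairs $(i,j)$ of terms each carrying a lone $\expect{\varepsilon_j}=0$ or, when $i=j$, a term proportional to $\expect{(X_iX_i'-\Sigma)X_i}\sigma^2$-type object that still vanishes by $\expect{\varepsilon}=0$ inside the relevant factor; I will state this as ``$\machinewise$ is exactly unbiased so $\delta=0$.'' For $\gamma_1$, $\expect{\xi_{-1/2}\xi_{-1/2}'}=\Sigma^{-1}\big(\tfrac1{n^2}\sum_{i,j}\expect{X_iX_j'\varepsilon_i\varepsilon_j}\big)\Sigma^{-1}=\Sigma^{-1}(\tfrac1n\sigma^2\Sigma)\Sigma^{-1}=n^{-1}\sigma^2\Sigma^{-1}$, giving $\gamma_1=\sigma^2\Sigma^{-1}$. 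The remaining $\gamma_2,\gamma_3,\gamma_4$ are fourth-moment computations: each reduces to evaluating expectations of the form $\expect{(X'\Sigma^{-1}X)\,XX'}\sigma^2$-type traces, where the factor $(1+p)$ enters through $\expect{X'\Sigma^{-1}X\cdot XX'}=\Sigma\cdot\Tr(\Sigma^{-1}\Sigma)+2\Sigma=(p+2)\Sigma$ for Gaussian-like fourth moments — I will need to either assume a fourth-moment condition (e.g.\ $X$ Gaussian, or more generally $\expect{X_aX_bX_cX_d}$ factoring as for the normal) or carry the computation under whatever moment assumption makes Assumption Set~\ref{assum:second_order} hold, and then the constants $-(1+p)$, $(1+p)$, $(1+p)$ drop out after careful bookkeeping of which index coincidences survive.

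I would organize the proof as: (i) write down the exact expansion and identify $\xi_{-1/2},\xi_{-1},\xi_{-3/2}$; (ii) dispense with $\gamma_0$ via exact unbiasedness; (iii) compute $\gamma_1$ (easy, second moments only); (iv) compute $\gamma_2=\expect{\xi_{-1}\xi_{-1/2}'}$, (v) $\gamma_3$ and $\gamma_4$ as the leading terms of $\expect{\xi_{-1}\xi_{-1}'}$ and $\expect{\xi_{-3/2}\xi_{-1/2}'}$ respectively, all three being fourth-moment contractions. The main obstacle is the bookkeeping in steps (iv)--(v): each $\gamma$ is a quadruple sum over sample indices, and one must correctly isolate the $O(n^{-2})$ contribution (the leading nonvanishing term), track which combinations of coincident indices among the four sampled vectors/noises contribute, and apply the fourth-moment formula consistently so that the universal factor $(1+p)$ emerges. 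A secondary subtlety is that $\gamma_3,\gamma_4$ are only defined up to $o(n^{-2})$, so I must argue that cross-terms from the next expansion order do not contaminate the stated values; this follows from the Lipschitz/moment conditions in Assumption Set~\ref{assum:second_order} with $s=4$, which I invoke without re-deriving. Finally, I would remark that plugging these into Eq.(\ref{eq:M2_gamma}) and Eq.(\ref{eq:second_order_MSE}) gives the OLS-specific MSE expressions referenced in the text, and in particular that $\gamma_0=0$ explains why OLS parallelization incurs no excess bias while $\gamma_2+\gamma_2'+\gamma_3+\gamma_4+\gamma_4'=(1+p)\sigma^2\Sigma^{-1}\succ0$ explains the strictly positive excess second-order MSE.
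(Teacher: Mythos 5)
Your proposal is correct and follows essentially the same route as the paper: the paper likewise identifies $\xi_{-1/2},\xi_{-1},\xi_{-3/2}$ (quoting Rilstone et al.'s expansion rather than your explicit Neumann series for $\hat\Sigma_n^{-1}$, though for the exactly quadratic OLS loss these coincide since $W_\theta\equiv 0$) and then evaluates each $\gamma_k$ by the same Gaussian/Wishart fourth-moment identities, e.g.\ $\expect{S_1\Sigma^{-1}S_1}=(2+p)\Sigma$, with $\gamma_0=0$ obtained exactly as you do from $\expect{\varepsilon}=0$ and independence. Your remark that Gaussianity of $X$ (or matching fourth moments) is implicitly required is apt --- the paper's Wishart computations indeed presuppose it even though the OLS setup only states the first two moments of $X$.
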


Inserting these expressions into Theorem~\ref{thm:second_order_bias} yields that the second order bias vanishes both for the individual machine-wise estimators and for their average, i.e., 
$\biasSecond{\hat{\theta}_n}=\biasSecond{\hat{\theta}_N}
=\biasSecond{\bar{\theta}}=0$.
Combining Proposition~\ref{prop:OLS_second_moments} with Theorem~\ref{thm:second_order_MSE} yields the following expressions for the parallelized second order MSE and the excess error, 
\begin{align}
\label{eq:second_order_ols}
        M_2(\bar{\theta}) = 
        \frac{1}{mn} \sigma^2 \Sigma^{-1} + \frac{1}{m n^2} (1+p) \sigma^2 \Sigma^{-1} ; \quad
        M_2(\bar{\theta}) - M_2(\hat{\theta}_N) = 
        \frac{m-1}{m n^2} (1+p) \sigma^2 \Sigma^{-1} .
\end{align} 
In OLS, parallelization thus incurs a second order accuracy loss, since $ M_2(\bar{\theta}) - M_2(\hat{\theta}_N)$ is PD.

\subsubsection{Ridge Regression}
\label{eg:ridge}
Next, we analyze ridge regression under the same generative model  $Y = X '\theta_0 + \varepsilon; \;
        \expect{X} = 0;\; 
        Var[X] = \Sigma,$ but now with the ridge penalty
        $f(Y,X;\theta) = \frac{1}{2}(Y-X'\theta)^2 + \frac{\lambda}{2}\Vert \theta \Vert^2. $
The risk minimizer $\theta^*$ now equals $(\Sigma+\lambda I)^{-1}\Sigma \theta_0$.

Adding the simplifying assumption that $\Sigma=I$, and denoting 
$\lambda_{k,l}:=\frac{\lambda^k}{(1+\lambda)^l}$,
$B:=\theta_0 \theta_0'$, and  
$A:= Tr(B) I= \Vert \theta_0 \Vert^2 I$, we obtain the following result.

\begin{prop}[Ridge Error Moments]
\label{prop:Ridge_second_moments}
For the ridge regression problem, under the above conditions, 
the matrices $\gamma_0,\ldots,\gamma_4$ that control the second order bias and MSE of Eq.(\ref{eq:gammas}) are given by
        \begin{align*}
        &\gamma_0 = \lambda_{2,6} (1+p)^2 B  , 
        &&\gamma_1 = \lambda_{2,4} (B+A) + \lambda_{0,2} \sigma^2 I , \\
        &\gamma_2 = - \lambda_{2,5} ((4+p)B+(2+p)A) - \lambda_{0,3} \sigma^2 (1+p) I  , 
        &&\gamma_3 =  \lambda_{2,6} \left(
                (5+p+p^2)B+ (2+p)A \right) +
                \lambda_{0,4} \sigma^2 (1+p) I  , \\
        &\gamma_4 =  \lambda_{2,5} \left(
                (5+2p)B+ (3+2p)A \right) +
                \lambda_{0,3} \sigma^2 (1 + p) I  .      
        \end{align*}

\end{prop}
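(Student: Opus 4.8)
The plan exploits the fact that the ridge objective is quadratic in $\theta$, so that each machine-wise minimizer has the closed form
\begin{align*}
\machinewise = \big( \hat\Sigma_n + \lambda I \big)^{-1} \Big( \frac{1}{n}\sum_{i=1}^{n} X_i Y_i \Big),
\qquad
\hat\Sigma_n := \frac{1}{n}\sum_{i=1}^{n} X_i X_i' .
\end{align*}
Under the generative model and the simplifying assumption $\Sigma = I$, this equals $\big((1+\lambda)I + E_n\big)^{-1}\big(\theta_0 + E_n\theta_0 + \hat g_n\big)$, where $E_n := \frac1n\sum_i (X_i X_i' - I) = O_P(n^{-1/2})$ and $\hat g_n := \frac1n\sum_i X_i \varepsilon_i = O_P(n^{-1/2})$. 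Since $\nabla^2 f \equiv X X' + \lambda I$ and $\nabla^s f \equiv 0$ for $s\ge 3$, while $(\nabla^2\hat R_n)^{-1} = (\hat\Sigma_n + \lambda I)^{-1} \preceq \lambda^{-1} I$, Assumption Set~\ref{assum:second_order} holds trivially. I would then expand the inverse as a Neumann series in $(1+\lambda)^{-1}E_n$, multiply out against $\theta_0 + E_n\theta_0 + \hat g_n$, and collect by stochastic order to obtain the expansion Eq.(\ref{eq:second_order}) with explicit terms, for instance
\begin{align*}
\errorFirst{\machinewise}
 &= \frac{\lambda}{(1+\lambda)^2}\, E_n\theta_0 + \frac{1}{1+\lambda}\,\hat g_n ,
 &
\xi_{-1}(\machinewise)
 &= -\frac{\lambda}{(1+\lambda)^3}\, E_n^2\theta_0 - \frac{1}{(1+\lambda)^2}\, E_n \hat g_n ,
\end{align*}
and a similar $O_P(n^{-3/2})$ formula for $\xi_{-3/2}(\machinewise)$; the remainder is $O_P(n^{-2})$ because $\norm{E_n} = O_P(n^{-1/2})$ and $\norm{(\hat\Sigma_n + \lambda I)^{-1}} \le \lambda^{-1}$.

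Next I would substitute these expressions into the moment definitions Eq.(\ref{eq:gammas}) and take expectations. Two structural facts organize the bookkeeping. First, conditioning on $\{X_i\}$ and using $\expect{\varepsilon_i \mid X_i}=0$ annihilates every cross-term pairing a purely-$X$ factor with a factor carrying $\hat g_n$, so each $\gamma_k$ splits into a ``signal'' part built from $B = \theta_0\theta_0'$ and $A = \norm{\theta_0}^2 I$ and an independent ``noise'' part proportional to $\sigma^2 I$. Second, writing $W_i := X_i X_i' - I$, each remaining expectation is a sum over machine indices of products of the $W_i$ (or of the $X_i\varepsilon_i$), of which only the ``fully paired'' index patterns survive to leading order. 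The surviving terms I would evaluate using the Gaussian (Isserlis) identities, for example $\expect{X X' M X X'} = \Tr(M) I + M + M'$, $\expect{\norm{X}^2 X X' M X X'} = (p+4)\big(\Tr(M) I + M + M'\big)$, $\expect{W^2} = (p+1)I$, and $\expect{\Tr(B W)\, W} = 2B$. Concretely, $\expect{\errorFirst{\machinewise}\,\errorFirst{\machinewise}'}$ gives $\gamma_1 = \lambda_{2,4}(A+B) + \lambda_{0,2}\sigma^2 I$; $\expect{\xi_{-1}(\machinewise)} = -n^{-1}\lambda_{1,3}(1+p)\theta_0$ gives $\delta$, hence $\gamma_0 = \delta\delta' = \lambda_{2,6}(1+p)^2 B$; $\expect{\xi_{-1}(\machinewise)\,\errorFirst{\machinewise}'}$ yields $\gamma_2$, whose signal part is $-\lambda_{2,5}\expect{W^2 B W}$; and $\gamma_3,\gamma_4$ follow the same route through $\expect{E_n^2 B E_n^2}$ and $\expect{E_n^3 B E_n}$ (plus their $\hat g_n$-analogues for the noise parts).

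The underlying mathematics is elementary; the difficulty is organizational. The main obstacle will be the careful, case-by-case enumeration of which machine-index pairings contribute at the leading order ($n^{-1}$ for $\gamma_1$, $n^{-2}$ otherwise) and the ensuing matrix algebra: the sixth-order moment identity for $\expect{\norm{X}^2 X X' M X X'}$ is what drives $\gamma_2$, whereas $\expect{\xi_{-1}(\machinewise)\,\xi_{-1}(\machinewise)'}$ and $\expect{\xi_{-3/2}(\machinewise)\,\errorFirst{\machinewise}'}$ force one to collect all three distinct pair-partitions of the four index slots in $\expect{E_n^2 B E_n^2}$ and $\expect{E_n^3 B E_n}$, each reduced via products of the $W_i$. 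Because $\Sigma = I$, every such expectation stays in the span of $\{B, A, I\}$, so the result can always be written in that basis and the powers $\lambda_{k,l}$ tracked mechanically; organizing the computation as in \citet{rilstone_second-order_1996} and mirroring the OLS derivation of Proposition~\ref{prop:OLS_second_moments} should keep the algebra under control.
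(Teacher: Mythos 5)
Your proposal is correct and would deliver the stated moments, but it takes a genuinely different route from the paper's. The paper stays entirely inside the Rilstone--Srivastava--Ullah framework: it reads off $\xi_{-1/2},\xi_{-1},\xi_{-3/2}$ from their general formulas in terms of $V_i=\ddot f_i-\Inf$, $d_i=\invInf\dot f_i$ and $W_\theta$ (here $W_\theta\equiv 0$), and then evaluates the resulting expectations with a toolbox of Wishart identities (Theorem~\ref{thm:wishart_two}). You instead exploit the exact closed form $\machinewise=(\hat\Sigma_n+\lambda I)^{-1}\bigl(\tfrac1n\sum_i X_iY_i\bigr)$ and a Neumann expansion in $E_n=\hat\Sigma_n-I$. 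Because the loss is quadratic the two expansions coincide term by term: your $\xi_{-1/2}(\machinewise)=\lambda_{1,2}E_n\theta_0+\lambda_{0,1}\hat g_n$ is exactly $-\tfrac1n\sum_i d_i$, and your $\xi_{-1}(\machinewise)$ reproduces the paper's $\delta=-\lambda_{1,3}(1+p)\theta_0$, hence $\gamma_0=\lambda_{2,6}(1+p)^2B$; the downstream moment calculations are then the same Gaussian fourth- and sixth-moment identities the paper packages as Wishart facts (e.g.\ your $\expect{W^2BW}=(4+p)B+(2+p)A$ gives the signal part of $\gamma_2$ directly). What your route buys is self-containedness and an exact, rather than asymptotic, control of the $O_P(n^{-2})$ remainder; what it loses is generality, since it does not extend beyond quadratic losses, which is why the paper sets up the general machinery. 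One caution for the bookkeeping stage: carried out literally, your expansion gives $\xi_{-3/2}(\machinewise)=\lambda_{1,4}E_n^3\theta_0+\lambda_{0,3}E_n^2\hat g_n$, whose pairing with $\lambda_{0,1}\hat g_n$ yields a noise contribution $\lambda_{0,4}\sigma^2(1+p)I$ to $\gamma_4$ rather than the $\lambda_{0,3}$ coefficient displayed in the Proposition; likewise the paper's own appendix records $(3+p)A$ in $\gamma_2$ against the Proposition's $(2+p)A$ (your route gives $(2+p)A$). These are coefficient discrepancies internal to the paper, not gaps in your argument, but you should reconcile them explicitly when you finalize $\gamma_3$ and $\gamma_4$.
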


\begin{cor}[Ridge Second Order Bias]
Combining Proposition \ref{prop:Ridge_second_moments} with Theorem~\ref{thm:second_order_bias},
 under a linear generative model, the second order bias of the parallelized ridge regression estimate is 
$
        B_2(\bar{\theta}) = - \frac{1}{n} \lambda_{1,3} (1+p) \theta_0.
$
\end{cor}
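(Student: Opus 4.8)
The plan is to combine Proposition~\ref{prop:Ridge_second_moments} with Theorem~\ref{thm:second_order_bias} in the most direct way possible. By Theorem~\ref{thm:second_order_bias}, $B_2(\parallelized)=\delta/n$, where $\delta$ is defined through $\mathbb{E}[\xi_{-1}(\machinewise)]=n^{-1}\delta$, and $\gamma_0=\delta\delta'$. So the first task is to extract $\delta$ (up to sign) from the expression for $\gamma_0$ given in Proposition~\ref{prop:Ridge_second_moments}, namely $\gamma_0=\lambda_{2,6}(1+p)^2 B$ with $B=\theta_0\theta_0'$. Since $\lambda_{2,6}=\lambda^2/(1+\lambda)^6$ and $B=\theta_0\theta_0'$, we can write $\gamma_0 = \bigl(\frac{\lambda}{(1+\lambda)^3}(1+p)\bigr)^2\theta_0\theta_0' = \bigl(\lambda_{1,3}(1+p)\bigr)^2\theta_0\theta_0'$, which is a rank-one matrix of the form $vv'$ with $v=\pm\lambda_{1,3}(1+p)\theta_0$. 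Hence $\delta = \pm\lambda_{1,3}(1+p)\theta_0$.

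Second, I would pin down the sign of $\delta$. The factorization $\gamma_0=\delta\delta'$ alone leaves the sign ambiguous, so I need to recompute $\mathbb{E}[\xi_{-1}(\machinewise)]$ directly for ridge regression, or at least determine its direction. This is a short calculation: for ridge with $\Sigma=I$ the estimator has a closed form, $\machinewise=(\hat\Sigma_n+\lambda I)^{-1}\hat\Sigma_n\hat\theta_n^{\text{OLS}}$ type expression, and expanding around $\theta^*=(1+\lambda)^{-1}\theta_0$ to second order yields the $n^{-1}$ bias term. The intuition (already stated in Remark~\ref{rem:ridge_anomaly}) is that the penalty over-shrinks toward zero when applied to $n<N$ samples, so the second order bias should point toward $-\theta_0$, i.e. $\delta = -\lambda_{1,3}(1+p)\theta_0$. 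Alternatively, one can read off the sign from the intermediate steps in the proof of Proposition~\ref{prop:Ridge_second_moments} (which computes $\gamma_2$, whose off-diagonal-type structure involves $\delta$ implicitly), ensuring consistency.

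Third, substitute into $B_2(\parallelized)=\delta/n$ to obtain
\begin{align*}
        B_2(\parallelized) = -\frac{1}{n}\,\lambda_{1,3}\,(1+p)\,\theta_0,
\end{align*}
which is the claimed identity. One should also note that this is consistent with $B_2(\machinewise)=\delta/n$ and $B_2(\centralized)=\delta/N$, so that indeed $B_2(\parallelized)=m\,B_2(\centralized)$ as guaranteed by Theorem~\ref{thm:second_order_bias}.

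The main obstacle is the sign determination: $\gamma_0=\delta\delta'$ gives $\delta$ only up to sign, so the corollary cannot be deduced from Proposition~\ref{prop:Ridge_second_moments} as stated without either an independent direct computation of the first-order-in-$n^{-1}$ bias term $\mathbb{E}[\xi_{-1}(\machinewise)]$ for ridge regression, or a careful appeal to the sign conventions used inside the proof of Proposition~\ref{prop:Ridge_second_moments}. Everything else is routine algebraic simplification of the $\lambda_{k,l}$ symbols ($\lambda_{2,6}=\lambda_{1,3}^2$) and recognizing the rank-one structure of $B$. I would therefore devote most of the write-up to a clean, short derivation of $\delta=-\lambda_{1,3}(1+p)\theta_0$ from the ridge estimator's stochastic expansion, and leave the final substitution as a one-line consequence.
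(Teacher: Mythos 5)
Your proposal is correct and follows essentially the same route as the paper: the appendix proof of Proposition~\ref{prop:Ridge_second_moments} computes $\delta=\invInf\expect{V_1 d_1}=\lambda_{0,1}\cdot\bigl(-\lambda_{1,2}(1+p)\theta_0\bigr)=-\lambda_{1,3}(1+p)\theta_0$ directly en route to $\gamma_0$, which is exactly the ``read the sign off the intermediate steps'' resolution you anticipate, and the corollary is then the one-line substitution into $B_2(\parallelized)=\delta/n$ from Theorem~\ref{thm:second_order_bias}. Your observation that $\gamma_0=\delta\delta'$ alone determines $\delta$ only up to sign is a fair criticism of the corollary as literally stated, and your shrinkage heuristic for the sign is consistent with Remark~\ref{rem:ridge_anomaly} and with the explicit computation.
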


\begin{cor}[Ridge Second Order MSE]
Combining Proposition \ref{prop:Ridge_second_moments} with Theorem~\ref{thm:second_order_MSE}, under a linear generative model, the parallelized second order MSE matrix and excess MSE are given by
\begin{align}
\begin{split}
\label{eq:second_order_ridge}
        M_2(\parallelized) =& 
                \frac{1}{mn}  \left(
                        \lambda_{2,4} (B+A) + \lambda_{0,2} \sigma^2 I
                \right) + \frac{m-1}{m} \frac{1}{n^2} \lambda_{2,6} (1+p)^2  B + \\
                & \frac{1}{mn^2} \left[
                         \lambda_{2,5} 2(p+1) (B+A) + \lambda_{2,6} ((5+p+p^2)B+(2+p)A) + 
                         \lambda_{0,4}(1+p) \sigma^2 I 
                \right],
\end{split}\end{align} 
and
\begin{align*}
                \MSESecond{\parallelized}-\MSESecond{\centralized} =&
                 \frac{m-1}{m} \frac{1}{n^2} \lambda_{2,6} (1+p)^2  B + \\
                & \frac{m-1}{m^2} \frac{1}{n^2} 
                \left[
                         \lambda_{2,5} 2(p+1) (B+A) + \lambda_{2,6} ((5+p+p^2)B+(2+p)A) + 
                         \lambda_{0,4}(1+p) \sigma^2 I 
                \right].
\end{align*}

\end{cor}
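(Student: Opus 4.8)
The plan is simply to substitute the explicit matrices $\gamma_0,\dots,\gamma_4$ computed in Proposition~\ref{prop:Ridge_second_moments} into the two general formulas of Theorem~\ref{thm:second_order_MSE}, namely Eq.~(\ref{eq:second_order_MSE}) for $\MSESecond{\parallelized}$ and Eq.~(\ref{cor:second_order_inneficiency}) for the excess term $\MSESecond{\parallelized}-\MSESecond{\centralized}$. The first thing I would note is a structural simplification: every matrix in Proposition~\ref{prop:Ridge_second_moments} is symmetric, since $B=\theta_0\theta_0'$ is symmetric and both $A=\Tr(B)I$ and $\sigma^2 I$ are scalar multiples of the identity. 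Consequently $\gamma_2'=\gamma_2$ and $\gamma_4'=\gamma_4$, so the symmetrized combination appearing in Theorem~\ref{thm:second_order_MSE} collapses to $\gamma_2+\gamma_2'+\gamma_3+\gamma_4+\gamma_4' = 2\gamma_2+\gamma_3+2\gamma_4$.

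Next I would evaluate $2\gamma_2+\gamma_3+2\gamma_4$ by collecting, separately, the coefficients of $B$, of $A$, and of $\sigma^2 I$, and grouping them according to the powers $\lambda_{k,l}=\lambda^k/(1+\lambda)^l$. Two minor ``coincidences'' make the final expression clean: the $\lambda_{2,5}$ coefficients multiplying $B$ and multiplying $A$ both reduce to $2(p+1)$, and the $\lambda_{0,3}\sigma^2 I$ contributions coming from $\gamma_2$ and from $\gamma_4$ cancel exactly, leaving only the $\lambda_{0,4}\sigma^2(1+p)I$ term from $\gamma_3$. What survives is exactly $\lambda_{2,5}\,2(p+1)(B+A)+\lambda_{2,6}\big((5+p+p^2)B+(2+p)A\big)+\lambda_{0,4}(1+p)\sigma^2 I$, i.e.\ the bracketed expression in Eq.~(\ref{eq:second_order_ridge}).

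It then remains to assemble the pieces. Inserting $\gamma_0=\lambda_{2,6}(1+p)^2 B$, $\gamma_1=\lambda_{2,4}(B+A)+\lambda_{0,2}\sigma^2 I$, and the quantity $2\gamma_2+\gamma_3+2\gamma_4$ just computed into Eq.~(\ref{eq:second_order_MSE}), with the prefactors $\tfrac{m-1}{m}\tfrac1{n^2}$, $\tfrac1{mn}$, $\tfrac1{mn^2}$ in their respective places, yields the stated formula for $\MSESecond{\parallelized}$; doing the same substitution in Eq.~(\ref{cor:second_order_inneficiency}), with prefactors $\tfrac{m-1}{m}\tfrac1{n^2}$ and $\tfrac{m-1}{m^2}\tfrac1{n^2}$, yields the excess MSE. (The centralized expression $\MSESecond{\centralized}$ is recovered as the $m=1$ special case, or directly from Eq.~(\ref{eq:M2_gamma}).) There is no genuine obstacle here: all the real work sits in Proposition~\ref{prop:Ridge_second_moments} and Theorem~\ref{thm:second_order_MSE}, and this corollary is their mechanical combination. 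The only steps worth double-checking are the symmetry reduction (so that $\gamma_2'=\gamma_2$, $\gamma_4'=\gamma_4$ is legitimate \emph{for these particular $\gamma$'s}) and the exact cancellation of the $\lambda_{0,3}$ terms, together with the bookkeeping of the $m$-dependent prefactors.
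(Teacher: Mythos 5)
Your proposal is correct and matches the paper's (implicit) argument exactly: the corollary is obtained by direct substitution of the $\gamma_i$ from Proposition~\ref{prop:Ridge_second_moments} into Eqs.~(\ref{eq:second_order_MSE}) and (\ref{cor:second_order_inneficiency}), using the symmetry of $B$, $A$, and $I$ to write $\gamma_2+\gamma_2'+\gamma_4+\gamma_4'=2\gamma_2+2\gamma_4$, whereupon the $\lambda_{0,3}\sigma^2(1+p)I$ terms cancel and the $\lambda_{2,5}$ contributions combine to $2(p+1)(B+A)$. Your bookkeeping of the $m$-dependent prefactors is also exactly as in the theorem, so there is nothing to add.
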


As in the OLS case, since  $B$ is an outer product, and $A$ is a scaled identity matrix,  it follows that both $M_2(\parallelized)$ and $\MSESecond{\parallelized}-\MSESecond{\centralized}$ are PD matrices. Despite this result, as discussed in Remark~\ref{rem:ridge_anomaly}, it is still possible that $\expect{(\parallelized-\theta^*)(\parallelized-\theta^*)'}
-\expect{(\centralized-\theta^*)(\centralized-\theta^*)'}$ is a negative definite matrix due to higher order error terms, implying that parallelized ridge regression can be more exact than the centralized estimator. 
This has been confirmed in simulations (not included).

\section{High-Dimensional Approximation}
\label{sec:high_dim}

As reviewed in Section \ref{sec:introduction}, most existing theory on parallelization assumes a fixed-$p$, independent of $n$. This is implied by the assumption that the empirical risk gradients have uniformly bounded moments, independent of $n$ \citep[e.g.][Assumption 3]{zhang_communication-efficient_2013}. However,
it is common practice to enrich a model as more data is made available, to the extent that  the number of unknown parameters is comparable to the number of samples.
If $p$ is comparable to $n$, and both are large, then the approximations of Section~\ref{sec:fixed_p} may underestimate the parallelization's excess error. To address this setting, we now perform a high-dimensional analysis where $n,p(n) \to \infty$ and $p(n)/n \to \kappa \in (0,1)$.
 
To the best of our knowledge there is no general theory for the behavior of M-estimators is this regime.  
To gain insight into the statistical properties of parallelization in this high-dimensional setting, we restrict our focus to {\em generative linear} models for which the appropriate theory has been developed only recently. 
Building on the works of \cite{donoho_high_2013} and \cite{el_karoui_robust_2013}, we thus consider a random variable $Z=(X,Y)$ consisting of a vector of predictor variables ($X \in \mathbb{R}^p$) and a scalar response variable $(Y\in\mathbb{R})$, which satisfy the following assumptions:
\begin{assum}
\leavevmode
\label{asum:high_dim_assumptions}
\begin{itemize}
\item [C1] The observed data \(x_{i}\) are i.i.d. from the random variable $X\sim\mathcal{N}(0,\Sigma)$, with invertible  $\Sigma$. 
\item [C2] Linear generative model: 
$Y = X' \theta_0 + \epsilon$,  
where 
$\theta_{0} \in \mathbb{R}^p$.
\item [C3] The noise random variable $\epsilon$ has zero mean, finite second moment, and is independent of $X$.
\item [C4] The loss $f(Z,\theta)=f(Y-X'\theta)$ is smooth and strongly convex.
\end{itemize}
\end{assum}

%%%%%%% Main Lemma %%%%%%%%%
Unlike the fixed-$p$ case, in the high-dimensional regime where \(p,n\to\infty\) together, each machine-wise estimate is inconsistent. As shown by \citet{el_karoui_robust_2013}, and \citet{donoho_high_2013}, when Assumption Set \ref{asum:high_dim_assumptions} holds, then as $n,p(n) \to \infty$ with $p(n)/n \to \kappa \in (0,1)$, 
\begin{align}
\label{eq:asymptotic_representation}
        \machinewise = \theta^* + r(\kappa) \, \Sigma^{-1/2} \, \xi \, (1+o_P(1))
\end{align}
where $\xi\sim \mathcal N(0,1/p \times I_p)$ and  $r(\kappa)$ is a deterministic quantity that depends on $\kappa$, on the loss function $f$ and on the distribution of the noise $\epsilon$.

Using the above result, we now show that in contrast to the fixed-$p$ setting, averaging is not even first-order equivalent to the centralized solution. The following lemma, proven in Appendix \ref{apx:proof_perturbation_result}, quantifies this accuracy loss showing that, typically, it is moderate.

\begin{lemma}
\label{lemma:perturbation_result}
Under Assumption Set \ref{asum:high_dim_assumptions}, as $\kappa \to 0$
\begin{align}
\label{eq:preturbation_result}
     \frac{\expect{\norm{\parallelized-\theta^*}^2}}
     {\expect{\norm{\centralized-\theta^*}^2}} &= 
     1 + \kappa \: \frac{r_2}{r_1} \left(1-\frac{1}{m} \right) + O(\kappa^2),
\end{align}
where  
\begin{align}
\label{eq:r_2} 
r_1 &= \frac{B_1}{A_2^2} \,, & 
r_2 &= \frac{3 B_1 T_1}{A_2^4} - \frac{2 B_1^2 A_4}{A_2^5} + \frac{2 B_2}{A_2^3} \,,
\end{align}
and 
\begin{align}
\label{eq:loss_moments}
        & A_2 = \mathbb{E} [f_{[2]}(\epsilon)] ,   
        && A_4 = \mathbb{E}[1/2 \, f_{[4]}(\epsilon)] , 
        & T_1 = \mathbb{E} [f_{[2]}^2(\epsilon) + f_{[1]}(\epsilon) f_{[3]}(\epsilon)] , \\
        & B_1 = \mathbb{E} [f_{[1]}^2(\epsilon)] , 
        && B_2 = \mathbb{E} [f_{[1]}^2(\epsilon) f_{[2]}(\epsilon)] .\nonumber
\end{align}
where $f_{[i]}:=\frac{\partial^i}{\partial t^i} f(t)$.
\end{lemma}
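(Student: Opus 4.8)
The plan is to reduce the whole ratio to the scalar function $r(\cdot)$ of Eq.(\ref{eq:asymptotic_representation}), compute the two mean squared errors up to their $(1+o_P(1))$ factors, and then Taylor-expand $r(\cdot)^2$ at $\kappa=0$. I would first apply the asymptotic representation to each estimator in the ratio. Each machine processes $n$ samples, so its dimension ratio is $p/n\to\kappa$ and $\machine{j}-\theta^* = r(\kappa)\,\Sigma^{-1/2}\xi^{(j)}(1+o_P(1))$ with $\xi^{(j)}\sim\mathcal N(0,I_p/p)$, the $\xi^{(j)}$ being independent across $j$ because the data are split uniformly at random. The centralized estimator processes $N=mn$ samples, i.e.\ ratio $p/N\to\kappa/m$, so $\centralized-\theta^* = r(\kappa/m)\,\Sigma^{-1/2}\xi_0(1+o_P(1))$, while averaging yields $\parallelized-\theta^* = r(\kappa)\,\Sigma^{-1/2}\bigl(\tfrac1m\sum_{j=1}^m\xi^{(j)}\bigr)(1+o_P(1))$ with $\tfrac1m\sum_j\xi^{(j)}\sim\mathcal N(0,I_p/(mp))$, the cross terms vanishing by independence and zero mean of the $\xi^{(j)}$.

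Next I would take expectations of the squared norms. Using $\expect{\xi'\Sigma^{-1}\xi}=\Tr(\Sigma^{-1})/p$ for $\xi\sim\mathcal N(0,I_p/p)$ gives $\expect{\norm{\centralized-\theta^*}^2}=r(\kappa/m)^2\,\Tr(\Sigma^{-1})/p\,(1+o(1))$ and $\expect{\norm{\parallelized-\theta^*}^2}=\tfrac1m\,r(\kappa)^2\,\Tr(\Sigma^{-1})/p\,(1+o(1))$, and the common $\Sigma$-dependent factor cancels, leaving
\[
\frac{\expect{\norm{\parallelized-\theta^*}^2}}{\expect{\norm{\centralized-\theta^*}^2}}
  = \frac{r(\kappa)^2}{m\,r(\kappa/m)^2}\,(1+o(1)).
\]
The one genuinely analytic point here is promoting the in-probability statement Eq.(\ref{eq:asymptotic_representation}) to convergence of second moments; I would do this by a uniform-integrability argument for $\norm{\machinewise-\theta^*}^2$, using strong convexity of the loss (C4) together with the Gaussian design (C1), and this is the step I expect to be the most delicate.

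With the small-$\kappa$ expansion $r(\kappa)^2 = r_1\kappa + r_2\kappa^2 + O(\kappa^3)$ in hand, $m\,r(\kappa/m)^2 = r_1\kappa + (r_2/m)\kappa^2 + O(\kappa^3)$, so dividing the two power series yields $1 + (r_2/r_1)(1-1/m)\,\kappa + O(\kappa^2)$, which is exactly Eq.(\ref{eq:preturbation_result}).

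It remains to identify $r_1$ and $r_2$ with the formulas in Eq.(\ref{eq:r_2}). For this I would start from the pair of fixed-point equations characterizing $r(\kappa)$ (together with its companion proximal-scale parameter) in \cite{el_karoui_robust_2013,donoho_high_2013}, observe that both $r$ and that companion parameter vanish as $\kappa\to0$, expand the proximal operator $\prox_{cf}$ and its derivative in powers of the scale so that the expansion coefficients become precisely the moments $A_2,A_4,T_1,B_1,B_2$ of $f$ and its derivatives at $\epsilon$, substitute into the two equations, and match powers of $\kappa$: the $O(\kappa)$ balance forces $r_1=B_1/A_2^2$, and the $O(\kappa^2)$ balance then yields the stated $r_2$. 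This second-order perturbative solution of the implicit system, together with the uniform-integrability step above, is where essentially all the work lies; the rest is bookkeeping.
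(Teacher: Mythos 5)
Your proposal follows essentially the same route as the paper: reduce the MSE ratio to $r^2(\kappa)/\bigl(m\,r^2(\kappa/m)\bigr)$ via the asymptotic representation, then obtain $r_1,r_2$ by a small-$\kappa$ perturbation of the two coupled proximal fixed-point equations of El Karoui et al.\ and Donoho--Montanari, matching powers of $\kappa$. The only addition is your explicit flag of the uniform-integrability step needed to pass from the in-probability representation to second moments, which the paper elides; otherwise the argument and the bookkeeping coincide.
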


\begin{remark}
For simplicity of exposition, we followed the assumptions of \citet[Result 1]{bean_optimal_2013}.
However, many of these can be relaxed, as discussed by \citet{el_karoui_asymptotic_2013}.
In particular,  
$X$ need not be Gaussian provided that it is asymptotically orthogonal and exponentially concentrating;
$f$ need not be strongly convex nor infinitely differentiable and an interplay is possible between assumptions on the tail mass of $\epsilon$ and $f_{[1]}$.
Note however, that for our perturbation analysis on the behavior of $r(\kappa)$ as $\kappa \to 0$ to hold, we assume the loss is at-least six times differentiable with bounded sixth derivative. 
\end{remark}

\begin{remark}
There are cases where \(r(\kappa)\) can be evaluated exactly, without recurring to approximations.
One such case is least squares loss with arbitrary noise $\epsilon$, satisfying C3, in which Wishart theory gives $r^2(\kappa) = \frac{\kappa}{1-\kappa}\sigma^2$ \citep{el_karoui_robust_2013}. 
A second order Taylor approximation of this exact result yields $r_1=r_2=\sigma^2$ in accord with our Eqs.(\ref{eq:preturbation_result}) and (\ref{eq:r_2}). 
A second case where an exact formula is available is $l_1$ loss with Gaussian errors:
A second order Taylor expansion of the closed form solution derived in \citep[Page 3]{el_karoui_robust_2013} gives $r_2/r_1=0.904$, again consistent with Eq.(\ref{eq:preturbation_result}).
\end{remark}

\paragraph{Typical Accuracy Losses} 
In classical asymptotics where $\kappa \to 0$, Eq.(\ref{eq:preturbation_result}) is consistent with the results of Section~\ref{sec:fixed_p} in that splitting the data has no (first-order) cost.
In practical high-dimensional scenarios, where the practitioner applies the ``no less than five observations per parameter'' rule of thumb~\citep{huber_robust_1973}, the resulting value of $\kappa$ is at most 0.2. The accuracy loss of splitting the data is thus small provided that the ratio $r_2/r_1$ is small. 
As shown in Table \ref{tab:noise_cost}, for several loss functions with either Gaussian or Laplace errors, this ratio is approximately one.

\begin{table}[h]
\centering
\small
\begin{tabular}{|p{2.5cm}|p{4cm}|p{1.2cm}|p{1.2cm}|}
\hline 
Loss & $f(t)$ & Gaussian & Laplace\\ 
\hline 
\hline
Squared & $ t^2/2 $ & $1$ & $1$\\ 
\hline 
%Huber ($\delta=3$) & 
%{$\!\begin{aligned}
%               \begin{array}{cl}
%               \frac{1}{2} \: t ^2 & \text{for }|t| \leq \delta, \\
%               \delta \left( |t|-\delta/2 \right) & \text{otherwise}
%               \end{array} 
%\end{aligned}$}
% & •  & \\ 
%\hline
Pseudo Huber & $\delta^2(\sqrt{1+(t/\delta)^2}-1)$ \;; $\delta=3$ & $0.92$ & $1.3$\\ 
\hline 
Absolute Loss
%\tablefootnote{ Being non differentiable, we taken as the limit of the Pseudo Huber with $\delta \to 0$.} 
& $|t|$ & $0.9$ & $1.83$ \\ 
\hline 
%Pseudo Check Loss & • & • & \\ 
%\hline
%Quantile $\tau$ Regression & $|t(\tau - 1_{\{t<0\}}) |$ & • & \\ 
%\hline 
\end{tabular} 
\caption{The ratio $\frac{r_2}{r_1}$ for different loss functions and noise, $\epsilon$, distributions (Gaussian and Laplace). From Eq.(\ref{eq:preturbation_result}), small values of $\frac{r_2}{r_1}$ imply a small accuracy loss when parallelizing. \label{tab:noise_cost}}
\end{table}

\paragraph{Limiting Distribution} 
Similar to the fixed $p$ regime, using Eq.(\ref{eq:asymptotic_representation}) we can derive the following limiting distribution of $\parallelized$ in the high-dimensional regime which is immediate from the results of \citet[p.1 in SI]{bean_optimal_2013}, and the fact that $\parallelized$ has $m$ times less variance than $\machinewise$.
\begin{cor}[Asymptotic Normality]
\label{cor:limit_dist_high_dim}
Under the assumptions of Lemma~\ref{lemma:perturbation_result}, for a fixed contrast $v$, as $p,n \to \infty$ with $p/n \to \kappa \in (0,1)$, 
then 
$$\frac{v' \parallelized- v' \theta^*}{r(\kappa) \sqrt{\frac{v' \Sigma^{-1} v}{pm}}} \convdist \gauss{0,1} .$$ 
\end{cor}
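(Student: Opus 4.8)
The plan is to reduce the statement to the single-machine central limit theorem of \citet[p.1 in SI]{bean_optimal_2013} together with the fact that, because the data is split uniformly at random, the machine-wise estimates $\machine{1},\dots,\machine{m}$ are i.i.d., each computed on a disjoint i.i.d. subsample of size $n$ in the regime $p/n\to\kappa$. First I would record that, under Assumption Set~\ref{asum:high_dim_assumptions}, the asymptotic representation (\ref{eq:asymptotic_representation}) applies to each machine, and that \citet{bean_optimal_2013} strengthen it to the following: for any fixed contrast $v$,
\[
  W_j \;:=\; \frac{v'\machine{j}-v'\theta^*}{r(\kappa)\sqrt{v'\Sigma^{-1}v/p}}\;\convdist\;\gauss{0,1},\qquad j=1,\dots,m ,
\]
as $p,n\to\infty$ with $p/n\to\kappa\in(0,1)$. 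Since $W_1,\dots,W_m$ are independent (disjoint data) and their marginals converge, the joint law converges as well, i.e. $(W_1,\dots,W_m)\convdist\gauss{0,I_m}$, because characteristic functions factor over independent coordinates.

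Next I would apply the continuous mapping theorem to the map $(w_1,\dots,w_m)\mapsto m^{-1/2}\sum_{j}w_j$ to obtain $m^{-1/2}\sum_{j=1}^m W_j\convdist\gauss{0,1}$, and then conclude via the algebraic identity
\[
  \frac{1}{\sqrt m}\sum_{j=1}^m W_j
  \;=\; \frac{1}{\sqrt m}\,\frac{\sum_{j=1}^m\bigl(v'\machine{j}-v'\theta^*\bigr)}{r(\kappa)\sqrt{v'\Sigma^{-1}v/p}}
  \;=\; \frac{\sqrt m\,\bigl(v'\parallelized-v'\theta^*\bigr)}{r(\kappa)\sqrt{v'\Sigma^{-1}v/p}}
  \;=\; \frac{v'\parallelized-v'\theta^*}{r(\kappa)\sqrt{v'\Sigma^{-1}v/(pm)}} ,
\]
where the middle equality uses $\parallelized=\frac1m\sum_{j}\machine{j}$; the right-hand side is precisely the quantity appearing in the corollary, which establishes the claim.

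The only delicate point is that (\ref{eq:asymptotic_representation}) carries a multiplicative $(1+o_P(1))$ error, so $v'\machine{j}-v'\theta^*$ is not literally Gaussian; this is exactly what the CLT of \citet{bean_optimal_2013} already absorbs, which is why invoking it directly is the cleanest route rather than trying to argue from the representation by hand. Two bookkeeping remarks would also be made explicit: (a) $m$ is held fixed, so averaging $m$ independent asymptotically-normal terms preserves asymptotic normality and no Lindeberg-type condition across machines is required; and (b) strictly $r$ is evaluated at $p_n/n$ rather than at the limit $\kappa$, but since each $W_j$ is normalized by the \emph{same} $r(\kappa)$ and $r$ is continuous, the ratio $r(p_n/n)/r(\kappa)\to1$ is harmlessly absorbed into the $o_P(1)$ term.
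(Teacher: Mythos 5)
Your proposal is correct and follows essentially the same route as the paper, which presents the corollary as immediate from the per-machine CLT of Bean et al.\ combined with the fact that averaging $m$ independent machine-wise estimates reduces the variance by a factor of $m$; your write-up merely makes explicit the joint convergence of the independent normalized contrasts and the continuous-mapping step, both of which are valid for fixed $m$. The algebraic identity converting $m^{-1/2}\sum_j W_j$ into the normalized form of $v'\parallelized - v'\theta^*$ checks out, so nothing further is needed.
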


%%%%%%%%% Simulations %%%%%%%%%%%%%
\section{Simulations}
\label{sec:simulations}

We perform several simulations to validate our results and assess their stability in finite samples. 
For reproducibility, the R simulation code is available at \github.
\begin{figure}[t]
        \centering
        \includegraphics[width=0.24\textwidth]{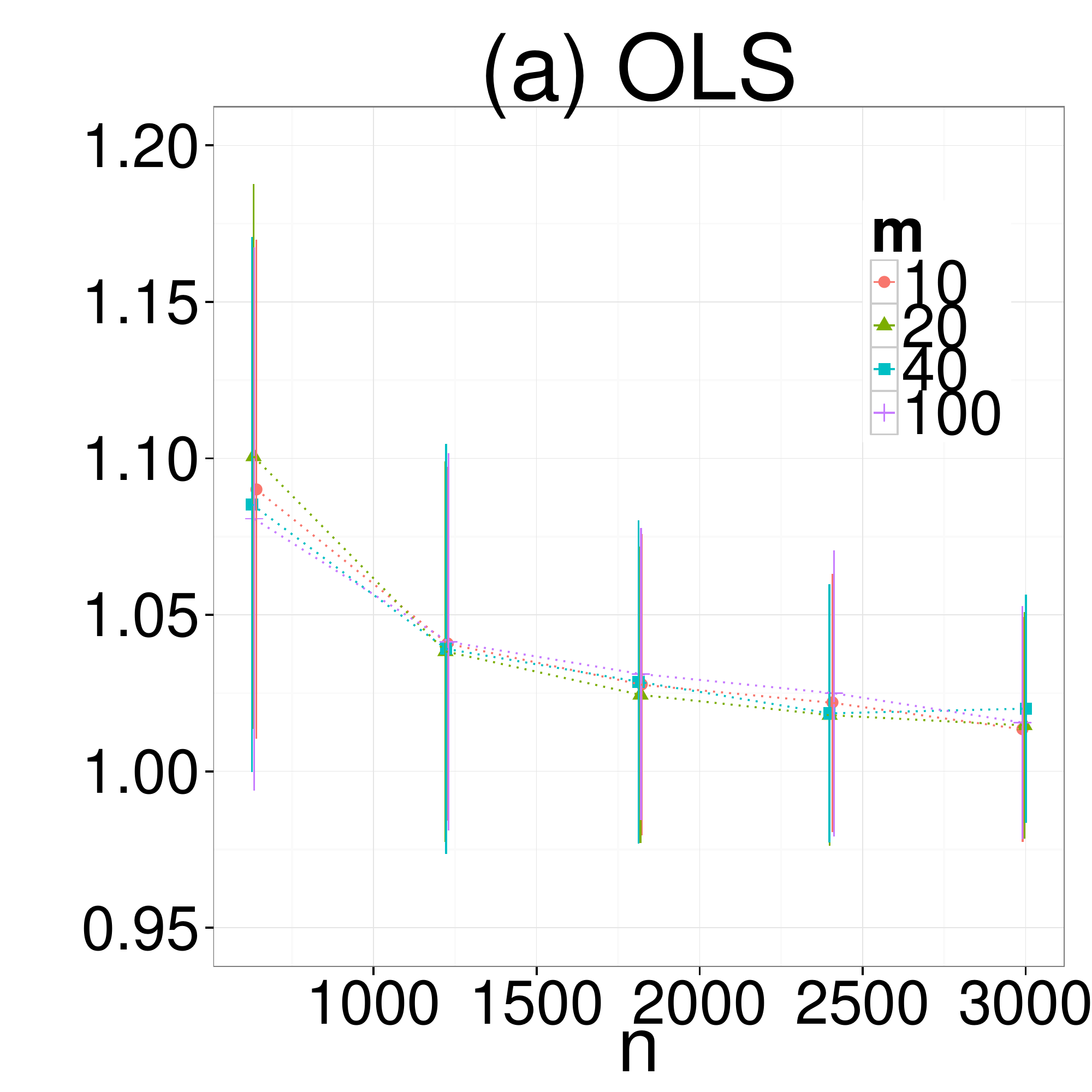}
        \includegraphics[width=0.24\textwidth]{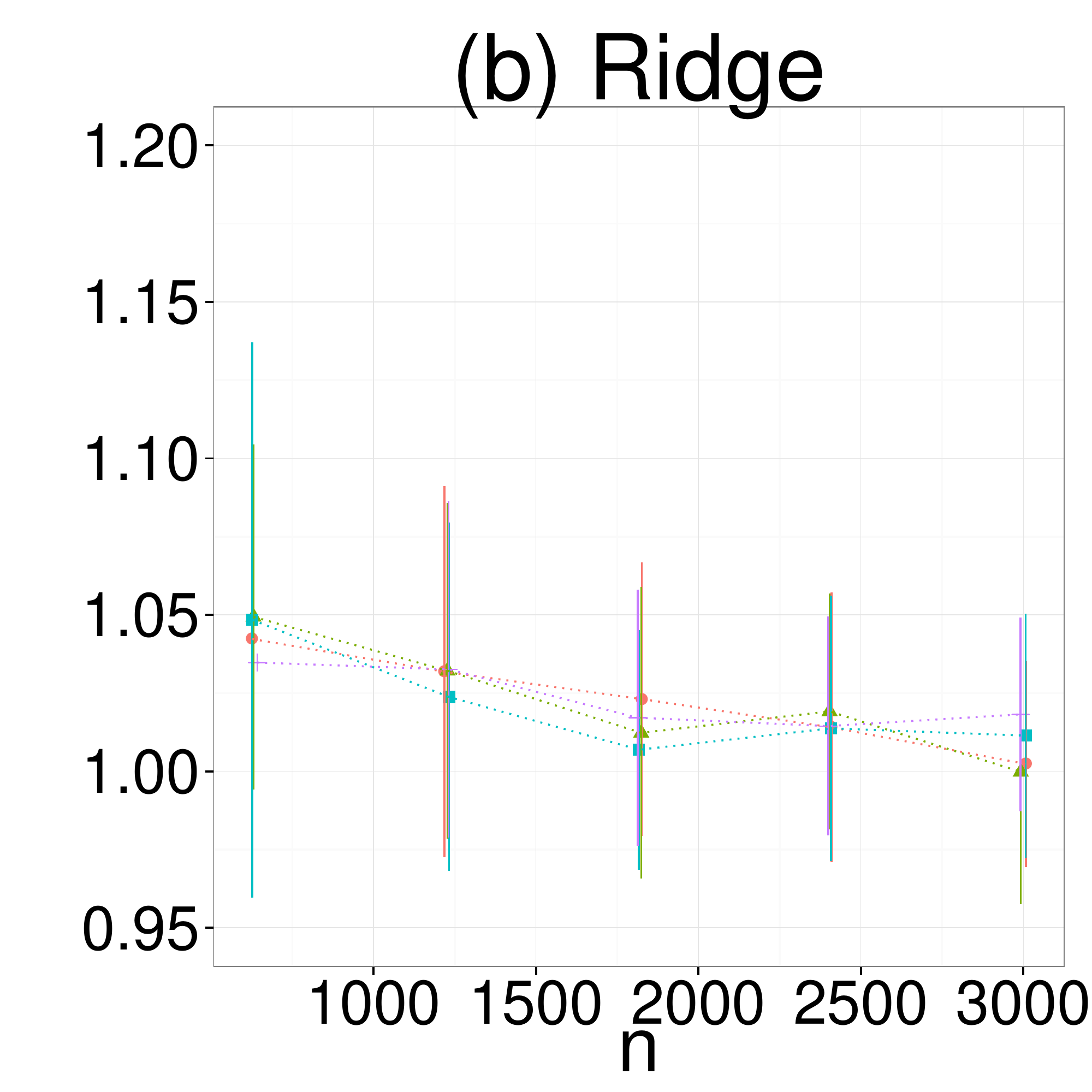}
        \includegraphics[width=0.24\textwidth]{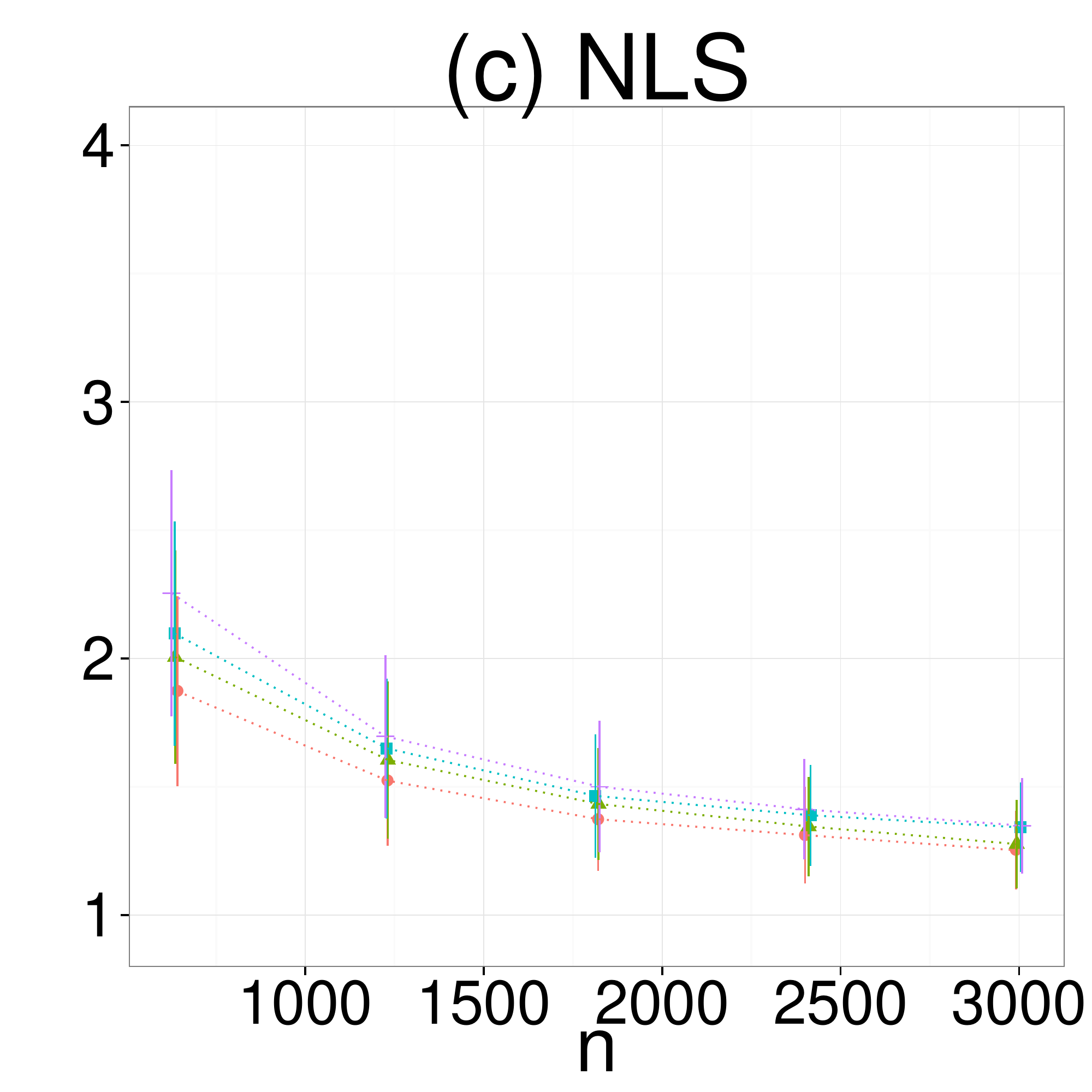}
        \includegraphics[width=0.24\textwidth]{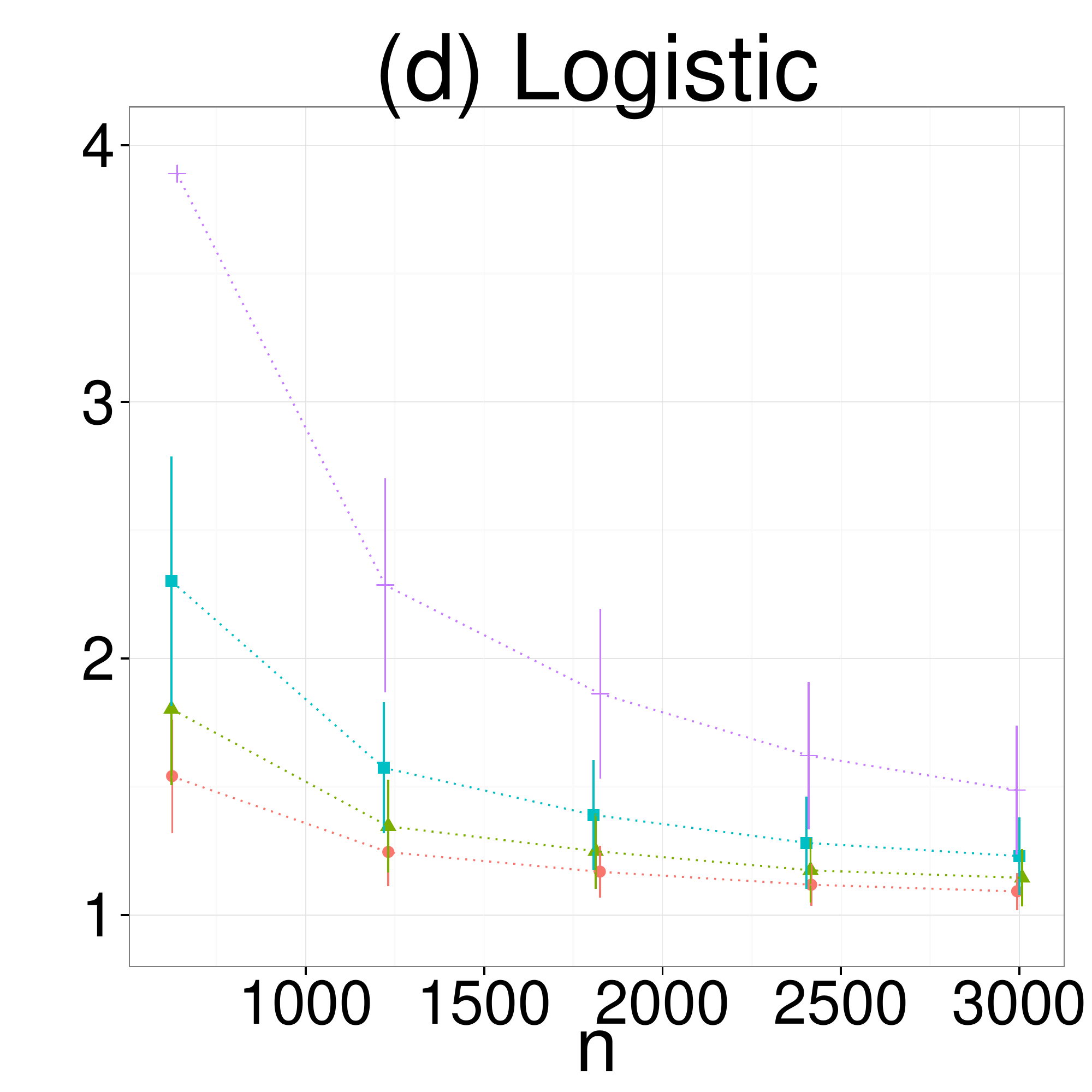}
        \caption{The error ratio $\norm{\parallelized-\theta^*}_2 / \norm{\centralized-\theta^*}_2$ as a function of sample size \(n\) in the fixed-$p$ regime.
          The center point is the median over $500$ replications and the bars represent the median absolute deviation. In all four panels \(p=50\). Color encodes the number of machines $m=10,20,40,100$.
        The learning tasks in the four panels are:\ (a) ordinary least squares; (b) ridge regression; (c) non-linear least squares; (d)\ logistic regression. 
        Data was generated as follows:         
        $X \sim \mathcal{N}(0_{p}, I_{p\times p})$;
        $\theta_0=\tilde{\theta} / \norm{\tilde{\theta}}$, and $\tilde{\theta}_j=j$ for $j=1,\dots,p$;
        $\varepsilon \sim \mathcal{N}(0,10)$;
        In (a)-(b), the response was drawn from $Y = X' \theta_0 + \varepsilon$;
        In (b) $\lambda=0.1$;
        In (c)   $Y = \exp(X' \theta_0) + \varepsilon$, whereas in panel (d),  $P(Y=1|X)=\frac{1}{1+\exp(-X'\theta_0)}$.
        \label{fig:classic_asymptotics_1}
        }
\end{figure}

We start with the fixed-\(p\), large-\(n\) regime. Figure~\ref{fig:classic_asymptotics_1}
shows the empirical median and median absolute deviation of the individual ratios $\|\parallelized-\theta^*\|_2/\|\centralized-\theta^*\|_2$
as a function of sample size \(n\), with \(N=nm\) growing as well. As seen from this figure, and in accord with Theorem \ref{thm:fixed_p_loss}, for large $n$ $\parallelized$ is asymptotically equivalent to $\centralized$ and the error ratio tends to one.
We also see that for small to moderate $n$, parallelization may incur a non-negligible excess error, in particular for non-linear models.

Figure~\ref{fig:error_approximation_OLS} presents the empirical bias and MSE of $\parallelized$ in OLS, as a function of number of machines \(m\) with \(N\) fixed, and compares these to their theoretical approximations from Section~\ref{eg:OLS}. 
In accord with Proposition \ref{prop:OLS_second_moments}, the parallelized OLS estimate shows no excess bias. 
In this OLS case, a high-dimensional approximation of the MSE is identical to the fixed-$p$ in panel (b) so the plot is omitted. We thus conclude that both the fixed-$p$ and the high-dim approximations of the MSE are quite accurate for small $m$ (i.e., large-$n$), but underestimate the error as $p/n$ departs from $0$. 
\begin{figure}[t]
        \centering
\includegraphics[width=0.32\textwidth]{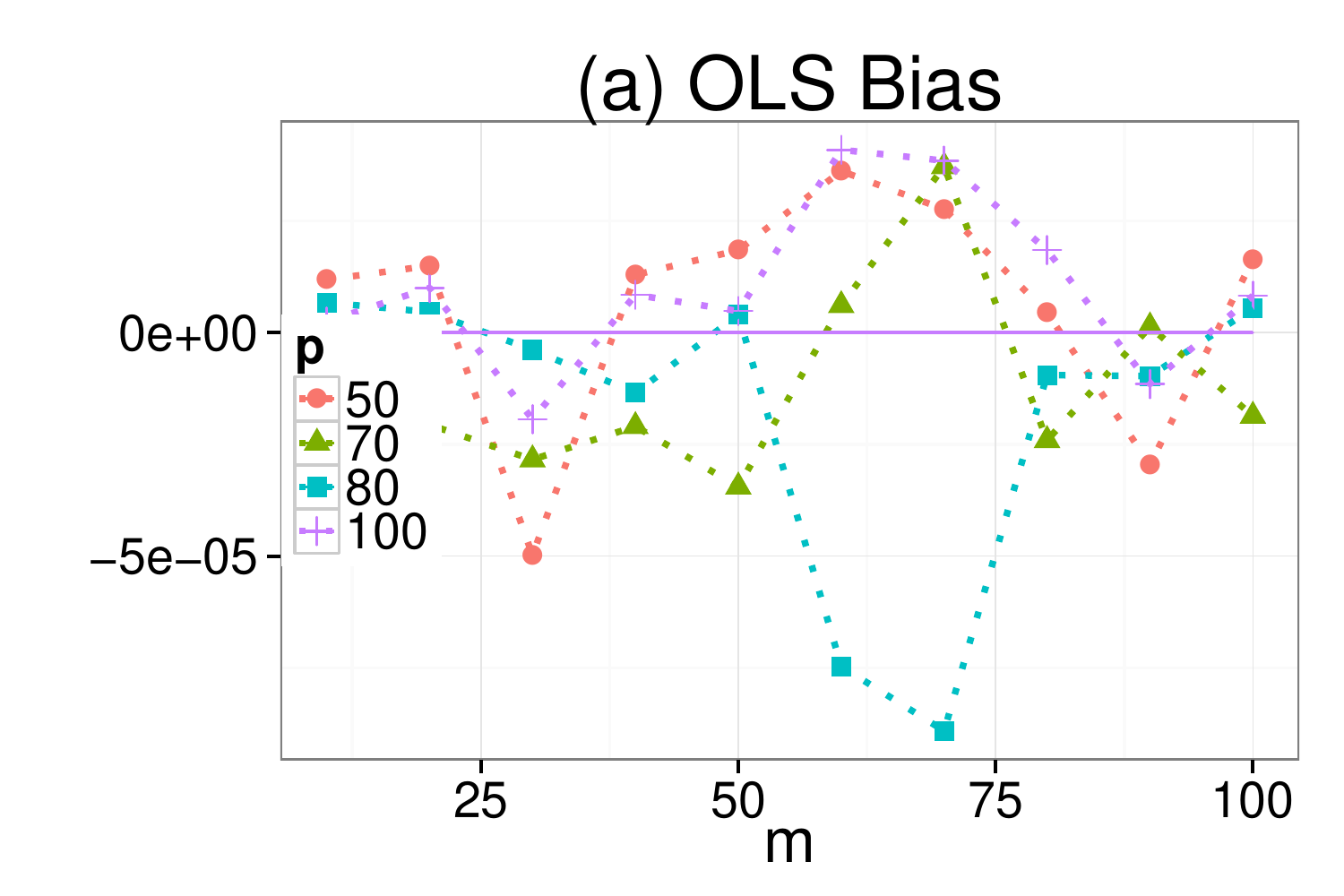}
\includegraphics[width=0.32\textwidth]{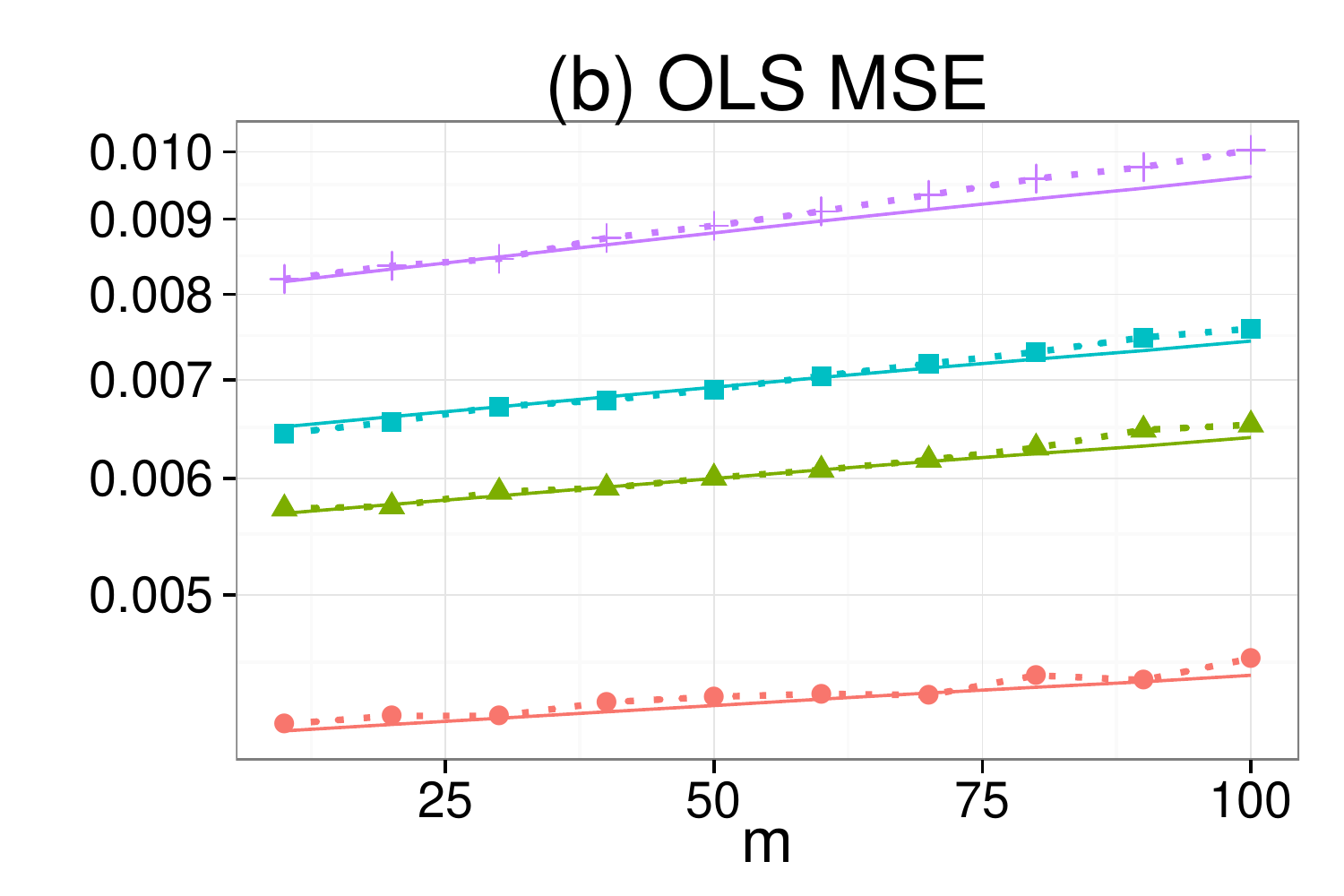}
        \caption{
        Bias and MSE in OLS as a function of number of machines \(m\), with a fixed total number of samples \(N=50,000\), for different dimensions \(p=50,70,80,100\), averaged over $1,000$ replications. 
        Panel (a) shows, in dotted lines, the mean of the empirical bias of an arbitrary coordinate \(j\) in the parallelized estimator $\expect{\parallelized_j-\theta^*_j}$.
        The solid line is the theoretical expression for the second order bias, which, as discussed in Section~\ref{eg:OLS}, is precisely zero. 
        Panel (b) shows, in dotted lines,  the empirical mean squared error of the parallelized estimator, \(\expect{\|\parallelized-\theta^*\|_2^2}\) as a function of \(m\). 
        The solid lines are the theoretical approximation using the second order fixed-$p$ expansion $\Tr(M_2(\bar{\theta}))$, with $M_2(\bar{\theta})$ from Eq.(\ref{eq:second_order_ols}).
        In (b) the $y$-axis is $log_{10}$ scaled.
        Data was generated as follows:         
                $X \sim \mathcal{N}(0_{p}, I_{p\times p})$;
                $\theta_0 = \tilde{\theta} / (\norm{\tilde{\theta}}/10)$ where $\tilde{\theta}_j=j$ for $j=1,\dots,p$.
        $Y = X' \theta_0 + \varepsilon$ where
        $\varepsilon \sim \mathcal{N}(0,2)$.
        \label{fig:error_approximation_OLS}
}
\end{figure}
\begin{figure}[t]
        \centering
\includegraphics[width=0.32\textwidth]{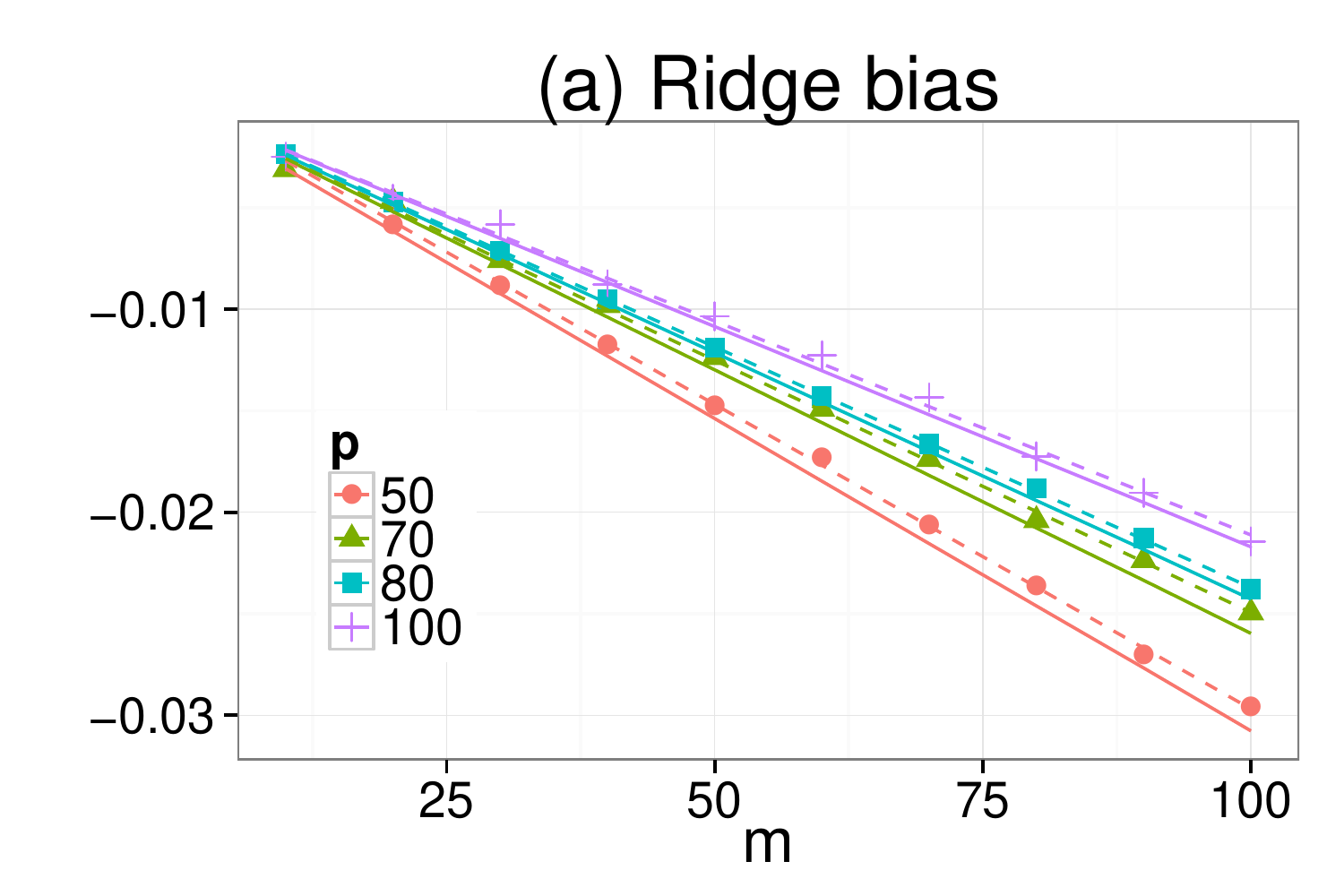}
\includegraphics[width=0.32\textwidth]{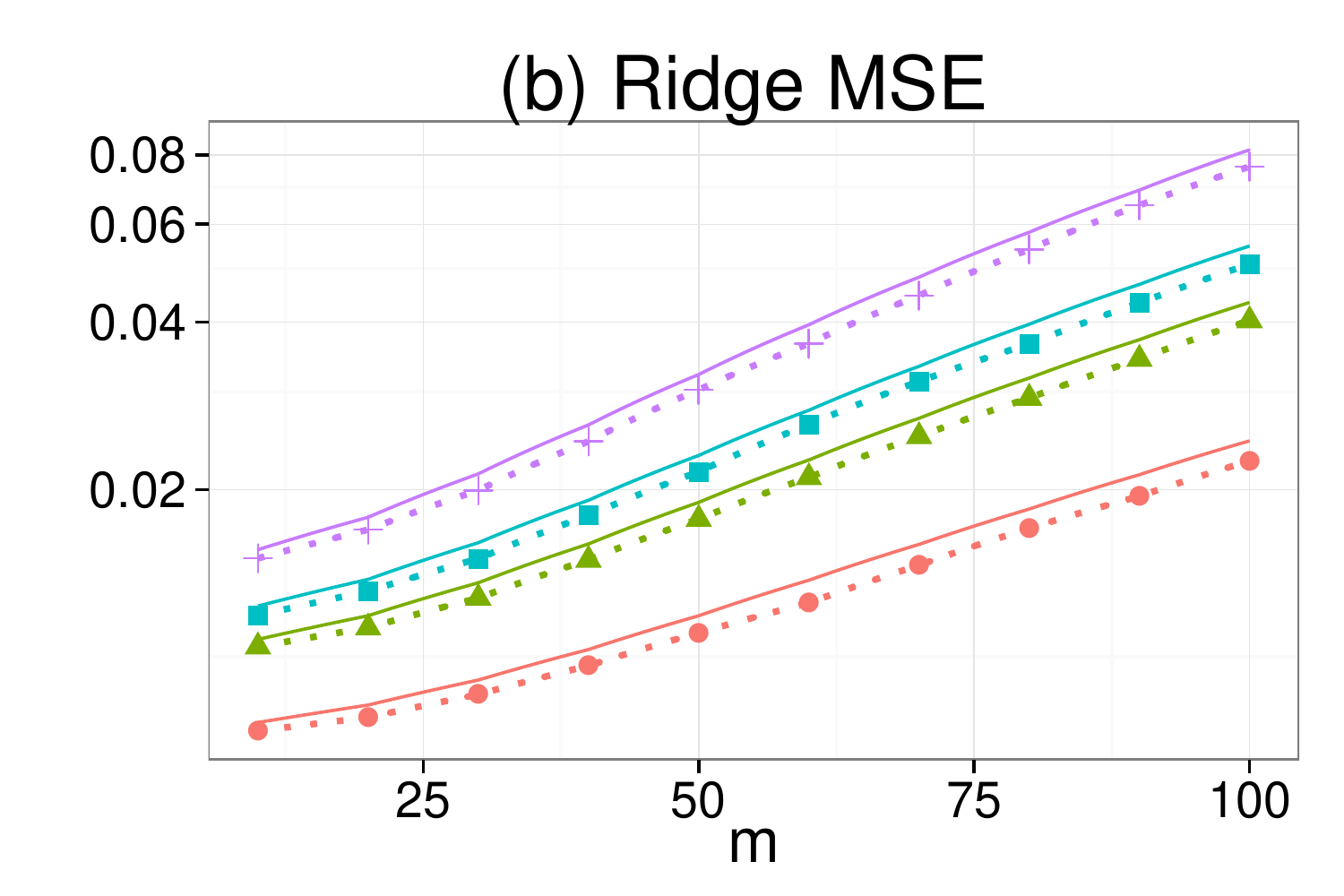}        
\caption{
                Bias and MSE in ridge regression as a function of number of machines \(m\), with a fixed total number of samples \(N=50,000\), for different dimensions \(p=50,70,80,100\), averaged over $1,000$ replications.
                Panel (a) shows, in dotted lines, the mean of the empirical bias of an arbitrary coordinate \(j\) in the parallelized estimator, $\expect{\parallelized_j-\theta_j^*}$.
                The solid line is the theoretical expression for the second order bias from Section~\ref{eg:ridge}. 
                Panel (b) shows, in dotted lines,  the empirical MSE of the parallelized estimator, \(\expect{\|\parallelized-\theta^*\|_2^2}\) as a function of \(m\). 
                The solid lines are the theoretical approximation using a second order fixed-$p$ expansion of the error: $\Tr(M_2(\bar{\theta}))$ where $M_2(\bar{\theta})$ is given in Eq.(\ref{eq:second_order_ridge}).
                In (b) the $y$-axis is $log_{10}$ scaled.
                Data was generated as follows:
                $\lambda$ is fixed at $1$.         
                $X \sim \mathcal{N}(0_{p}, I_{p\times p})$;
                $\theta_0 = \tilde{\theta} / (\norm{\tilde{\theta}}/10)$ where $\tilde{\theta}_j=j$ for $j=1,\dots,p$.
                $Y = X' \theta_0 + \varepsilon$ where
                $\varepsilon \sim \mathcal{N}(0,2)$.
                \label{fig:error_approximation_Ridge}
               }
\end{figure}

Figure~\ref{fig:error_approximation_Ridge} is similar to Figure~\ref{fig:error_approximation_OLS}, but for ridge regression.  We see that, unlike the OLS problem, the ridge problem does have parallelization bias, as predicted by our analysis in Section~\ref{eg:ridge}. 
While our fixed-$p$ MSE approximation is accurate for small $m$ (i.e., large-$n$), for larger \(m\) the empirical error is smaller than that predicted by our second order analysis. This suggests that higher order error terms in the MSE matrix are negative definite (see also Remark~\ref{rem:ridge_anomaly}). 
\begin{figure}[t]
\centering
        \includegraphics[width=0.24\textwidth]{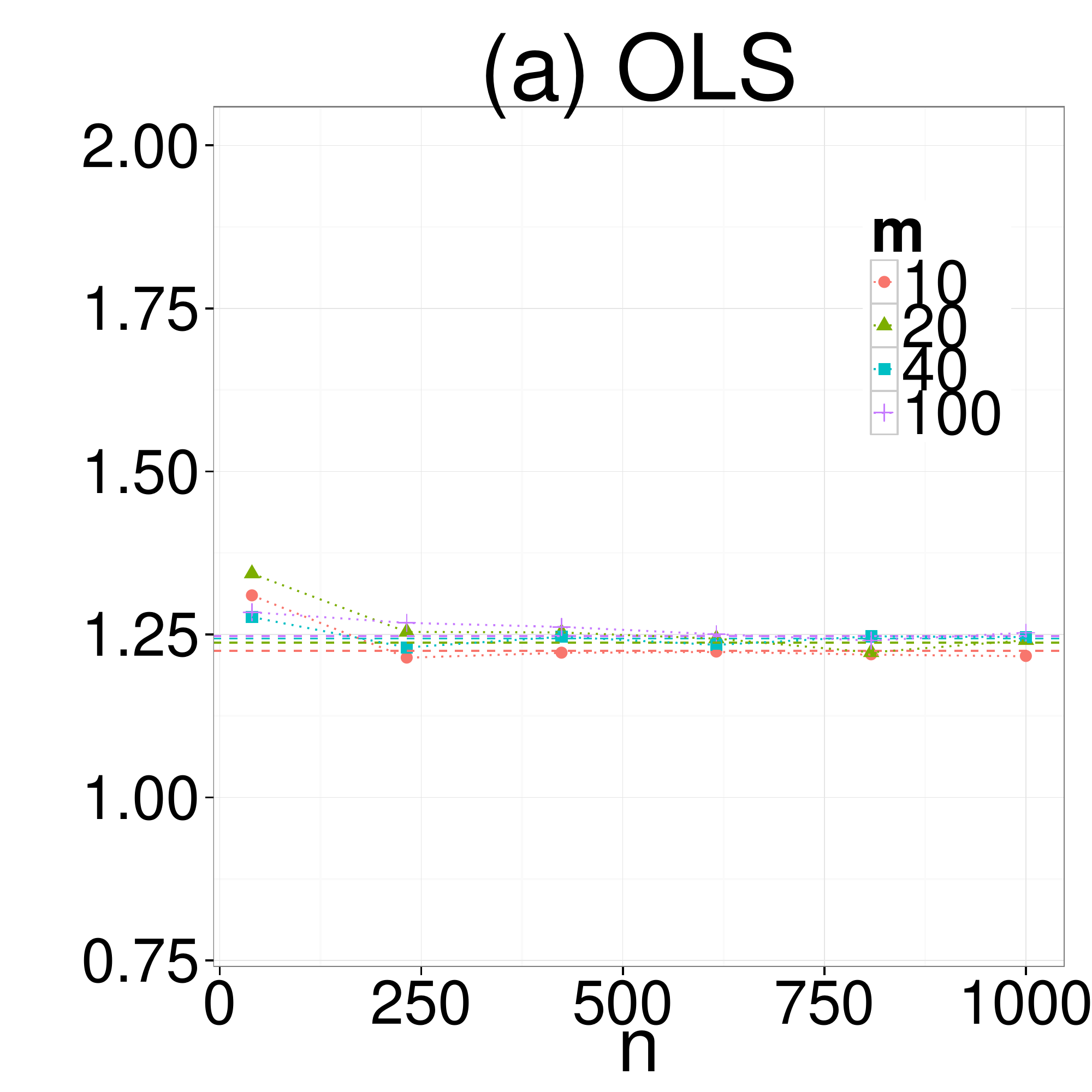}
        \includegraphics[width=0.24\textwidth]{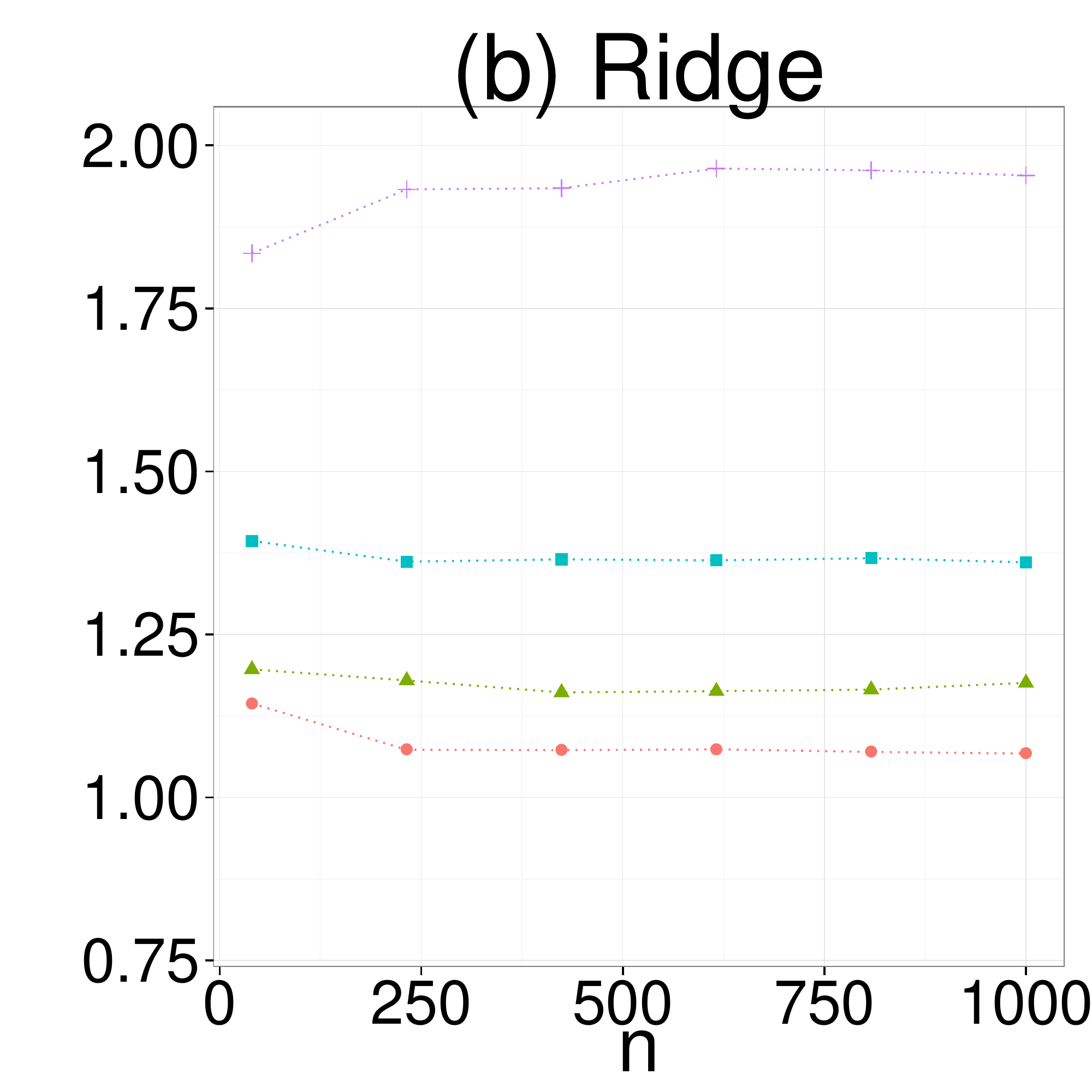}
        \includegraphics[width=0.24\textwidth]{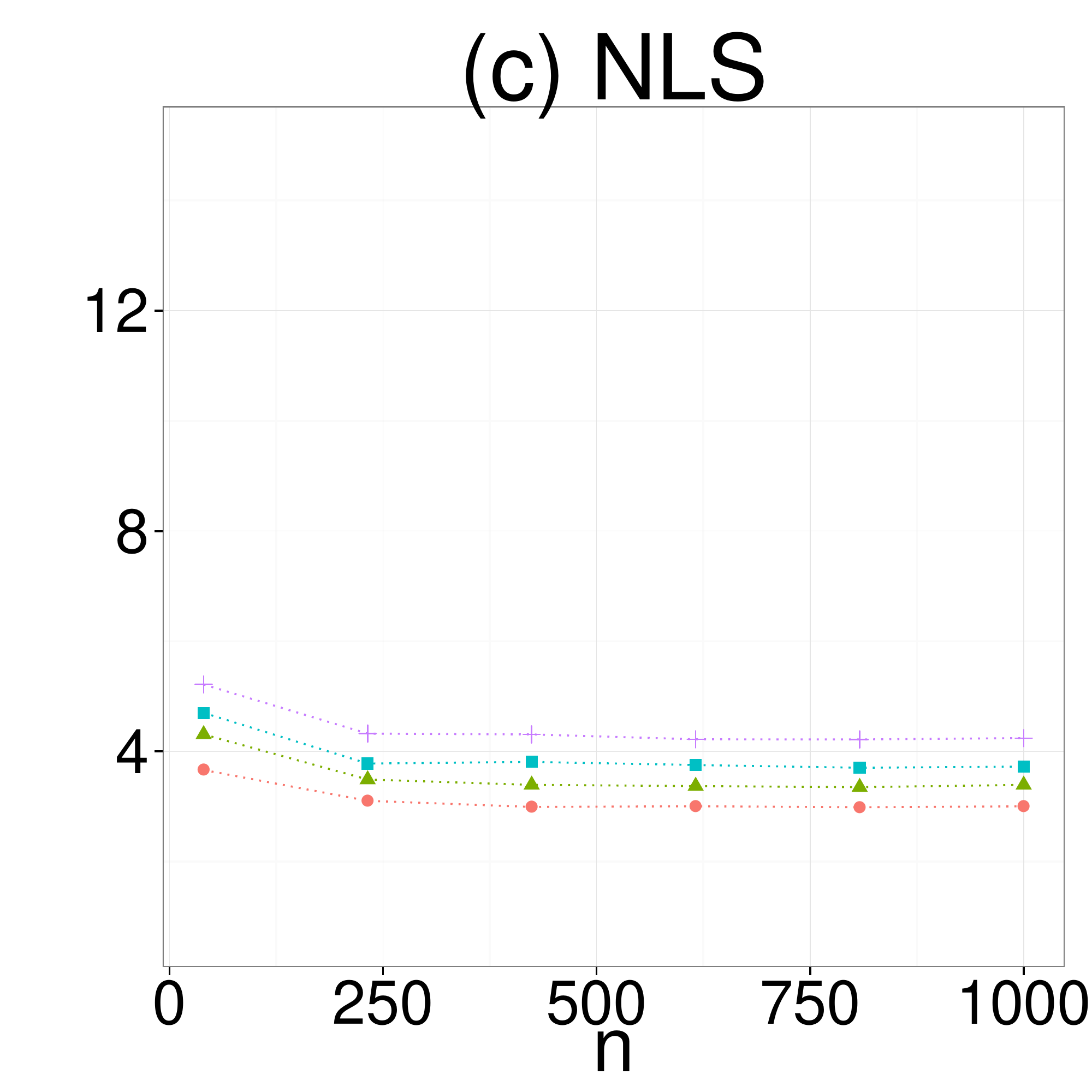}
        \includegraphics[width=0.24\textwidth]{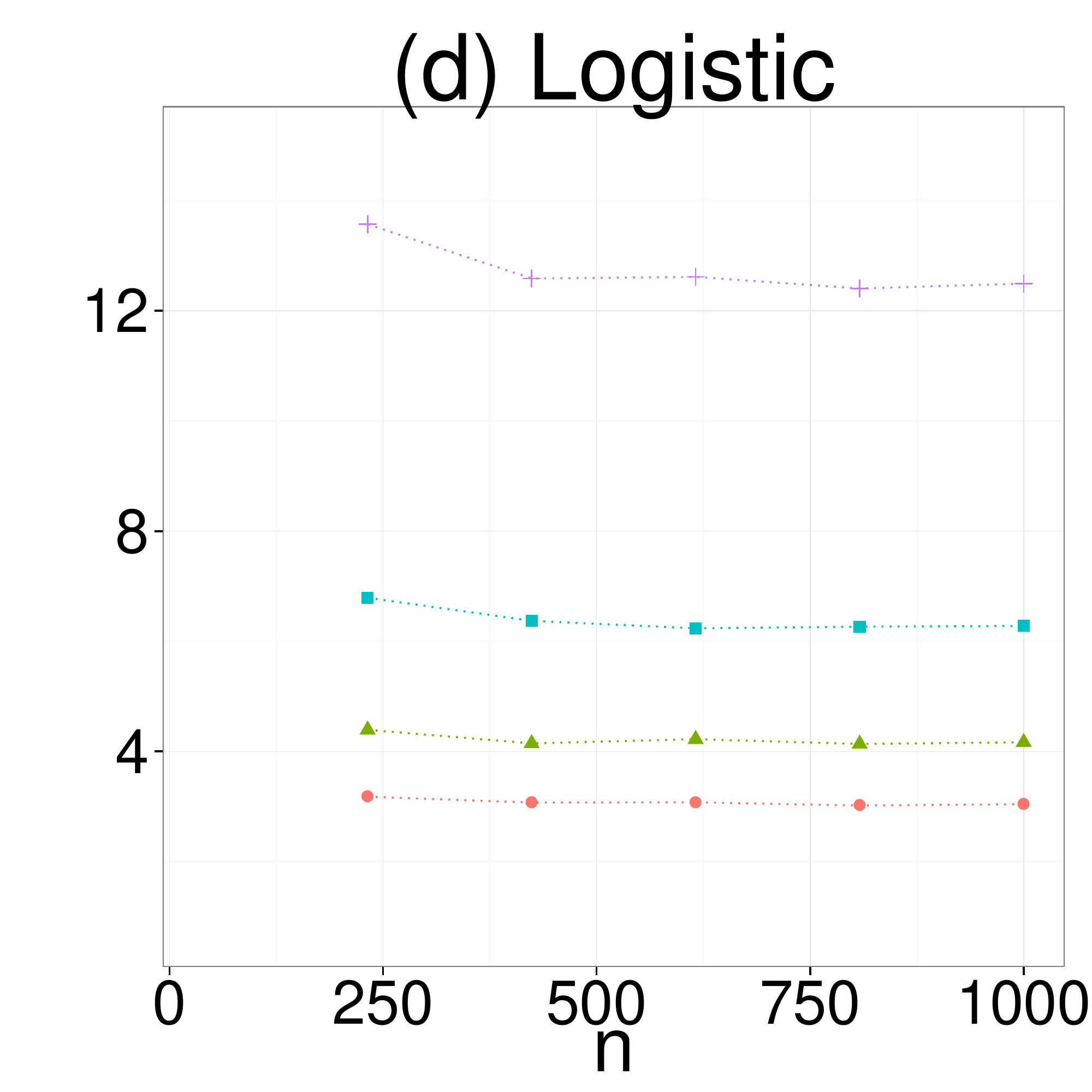}
\caption{
                MSE ratio $\expect{\norm{\bar{\theta}-\theta^*}^2_2}/\expect{\norm{\hat{\theta}_N-\theta^*}^2_2}$ in the high-dimensional regime (with fixed \(\kappa=p/n=0.2\)) 
        as a function of sample size \(n\), averaged over 500 replications.
        Color encodes different number of machines $m=10,20,40,100$.
The four panels depict different learning tasks as in Figure \ref{fig:classic_asymptotics_1}.                Data was generated as follows:         
                $X \sim \mathcal{N}(0_{p}, I_{p\times p})$;
                $\theta_0=\tilde{\theta} / \norm{\tilde{\theta}}$, and $\tilde{\theta}_j=j$ for $j=1,\dots,p$;
        $\varepsilon \sim \mathcal{N}(0,1)$.
                In (a)-(b), $Y = X' \theta_0 + \varepsilon$.
        In (b) $\lambda=1$.
        In (c) $Y = \exp(X' \theta_0) + \varepsilon$, whereas
in (d)  $P(Y=1|X)=\frac{1}{1+\exp(-X'\theta_0)}$.        \label{fig:highDim_asymptotics_1}
        }
\end{figure}

Next, we consider the high-dimensional regime.
Figure~\ref{fig:highDim_asymptotics_1} shows  $\expect{\norm{\bar{\theta}-\theta^*}^2}/\expect{\norm{\hat{\theta}_N-\theta^*}^2}$  as a function of machine-wise sample size \(n\), while holding $\kappa$ and $m$ fixed.
In contrast to the fixed-$p$ regime, here there is a first-order accuracy loss, and even for large \(n\) the MSE ratio does not converge to one.  
In the OLS case, where our high-dimensional approximations are applicable, they are indeed accurate over a wide range of values of $n$ and $m$. 
As already observed in the fixed-$p$ regime, non-linear models (panels c and d) incur a considerable parallelization excess error. 

% % % % % How Many Machines? % % % % %

\section{Practical Considerations}
\label{sec:how_many_machines}

Parallelization is not necessarily the preferred approach to deal with massive datasets. 
In principle, when $N \gg p$ an easy, though potentially not sufficiently accurate solution, is to discard observations by randomly subsampling the data. 
Parallelization should thus be considered when the accuracy attainable by subsampling is not satisfactory. 
An important question is then over how many machines should the practitioner distribute the data? 
When tackling this question, we distinguish between two scaling regimes: $N$ fixed or $n$ fixed.
Fixed $n$ captures the single-machine storage constraint: the total available data is virtually infinite and using more machines allows processing of more data, and hence better accuracy, at an obvious financial cost. 
Fixed $N$ captures either sampling or computational constraints: here, the total sample size $N$ is fixed and processing it on a single machine might be too slow. 
Thus, splitting the data reduces run-time but also decreases the accuracy. In other words, {\em by parallelizing, we trade accuracy for speed}.
Interestingly, when the number of samples $N$ is fixed, by using our approximations and varying $m$, we are able to trace the accuracy-complexity tradeoff facing the practitioner. 
An informed choice of $m$ is thus choosing either a desirable run-time, or a desired error level, on this curve.

We now formulate the target functions for choosing the number of machines in these two regimes. 
For fixed $n$, wishing to minimize costs, we analyze what is the \emph{minimal} number of machines that attains a desired accuracy, $\error(m)$:
\begin{align}
\label{eq:choose_m_fix_n}
        \min \{ m \quad s.t. \quad \error(m) \leq \epsilon, \text{n samples per machine} \}.
\end{align} 
For fixed $N$, wishing to minimize runtime, and in the spirit of \citet{shalev-shwartz_svm_2008}, we ask what is the \emph{maximal} number of machines so that runtime is minimized while a desired level of accuracy is maintained. 
Choosing the number of machines in the fixed $n$ scenario reduces to solving 
\begin{align}
\label{eq:choose_m_fix_N_m}
        \max \{ m \quad s.t. \quad \error(m) \leq \epsilon , N/m \text{ samples per machine} \}.
\end{align}

Next, let us study these two optimization problems, Eqs.(\ref{eq:choose_m_fix_n}) and (\ref{eq:choose_m_fix_N_m}), when the accuracy measure is $\error(m):=\expect{\norm{\parallelized-\theta^*}^2}$.
This is challenging or even infeasible, since in general we do not have explicit expressions for this quantity. 
Moreover, in the fixed-$p$ regime, approximating the MSE by the asymptotic leading error term yields that this quantity is independent of $m$! 
As we show below, meaningful and interesting solutions to this optimization problems arise when we approximate $\error(m)$ by the second order expression in the fixed-$p$ regime. 
Specifically, using Eq.(\ref{eq:second_order_MSE}) we approximate $\error(m):= \Tr(\MSESecond{\parallelized})$.
In the high-dimensional regime, in contrast, the optimization problems (\ref{eq:choose_m_fix_n}) and (\ref{eq:choose_m_fix_N_m}) are well posed already when we approximate the MSE by the first order term. 
Relying on Eq.(\ref{eq:parallel_highDim_MSE}) gives $\error(m):=  r^2(\kappa)/m  \: \mathbb{E}[\|\Sigma^{-1/2}\xi\|^2_2]$.

We now present the optimization problems corresponding to the fixed-$p$ approximation in each scaling scenario:
\begin{description}
\item[Fixed-n:] 
$
        \min \{ m \quad s.t. \quad      
        \frac{m-1}{m} \frac{1}{n^2} \Tr(\gamma_0)+
        \frac{1}{mn} \Tr(\gamma_1) + 
        \frac{1}{m n^2} \Tr\left(\gamma_2+\gamma_2'+\gamma_3+\gamma_4+\gamma_4' \right)
 \leq \epsilon \},
$ 
which stems from Eq.(\ref{eq:choose_m_fix_n}) and Eq.(\ref{eq:second_order_MSE}).

\item[Fixed-N:] 
$
        \max\{ m \quad s.t. \quad 
        \frac{(m-1)m}{N^2}  \Tr(\gamma_0) +
        \frac{1}{N} \Tr(\gamma_1) + 
        \frac{m}{N^2} \Tr\left(\gamma_2+\gamma_2'+\gamma_3+\gamma_4+\gamma_4' \right)
 \leq \epsilon \},
$
which stems from Eq.(\ref{eq:choose_m_fix_N_m}) and Eq.(\ref{eq:second_order_MSE}) with $n=N/m$.

%\item[High-dim/Fixed-n:] 
%$
%                \min \{ m \quad s.t. \quad r^2(\kappa)/m  \: \mathbb{E}[\|\Sigma^{-1/2}\xi\|^2_2]  \leq \epsilon \},
%$
%which stems from Eq.(\ref{eq:choose_m_fix_n}) and Eq.(\ref{eq:parallel_highDim_MSE}), and $r^2(\kappa)$ can be approximated with Eq.(\ref{eq:assumed_form}).
%
%\item[High-dim/Fixed-N:] 
%$       
%        \max \{ 
%                m \quad s.t. \quad r^2(pm/N)/m  \: \mathbb{E}[\|\Sigma^{-1/2}\xi\|^2_2]\leq \epsilon, \,  
%        \}.
%$
%This particular formulation was chosen since fixing $N$ and $\kappa$, then $p$ will changes with $m$ via $n$. The error would thus be decreasing in $m$. 
%We would seemingly be gaining in both accuracy and complexity by increasing $m$, but we would be learning simpler models. 
%From a practitioner's viewpoint this is clearly undesirable. 
%It is more appropriate to fix $N$ and $p$, varying $\kappa$ as $m$ increases. We plug $\kappa=pm/N$ in Eq.(\ref{eq:parallel_highDim_MSE}) and use Eq.(\ref{eq:choose_m_fix_N_m}) to arrive at this formulation.
%We also stress that although $p$ is fixed, this is not the same approximation as the fixed-p regime.
\end{description}

Let us illustrate these formulas in the OLS example from Section~\ref{eg:OLS}. 
The required quantities for OLS are collected in Appendix~\ref{apx:proof_OLS_second_moments}.
For example, solving the fixed-$N$ problem, the maximal number of machines that will keep the \emph{per-coordinate} MSE under $0.2$, i.e., $\varepsilon p=0.2$, with $N=10^6$, $p=100$, and $\sigma^2=10$ is $m\leq 9,901$.
Alternatively, assuming an abundance of data and a memory limit such that $n\leq10^4$, we solve the fixed-$n$ problem to find that $m \geq 51$ will satisfy the derived error level.

\comment{\paragraph{Choosing $m$ for OLS}
\begin{description}

\item[Fixed-p--fixed-n:] 
$
        \min \{ m \quad s.t. \quad 
        \frac{\sigma^2 \Tr(\Sigma^{-1})}{m}     ( n^{-1} + n^{-2} (1+p) )       
 \leq \epsilon \}.
$ 

\item[Fixed-p--fixed-N:] 
$
        \max\{ m \quad s.t. \quad 
        \frac{1}{N} \sigma^2 \Tr(\Sigma^{-1})  + \frac{m}{N^2}  (1+p) \sigma^2 \Tr(\Sigma^{-1})
        \leq \epsilon \}.
$

\item[High-dim--fixed-n:] 
$
                \min \{ m \quad s.t. \quad 
                \frac{\kappa}{m(1-\kappa)} \sigma^2 \: \mathbb{E}[\|\Sigma^{-1/2}\xi\|^2_2]  
                \leq \epsilon \}.
$

\item[High-dim--fixed-N:] 
$       
        \max \{ 
                m \quad s.t. \quad 
                \frac{p}{N-pm} \sigma^2 \: \mathbb{E}[\|\Sigma^{-1/2}\xi\|^2_2]
                \leq \epsilon \,  
        \}.
$
\end{description}
}

\begin{remark}
                In some cases, the practitioner may wish to control the parallelization error relative to the centralized solution, and not as an absolute value as analyzed above. Namely, the restriction is now $\error(m) \leq (1+\varepsilon) \error(1)$.
                The scenarios (fixed $n$/$N$) and approximations previously discussed apply here as well.
                For example, in our OLS example, solving the Fixed-$N$ problem with $N=10^6$, $p=100$, and $\sigma^2=10$, 
                yields that $m \leq 991$ for $\parallelized$ to err no more than $10\%$ more than $\centralized$ ($\varepsilon=0.1$).
                On the other hand, For the Fixed-$n$ problem, with $n=10^4$, $p=100$, and $\sigma^2=10$, we can parallelize up to $m\leq 102$ machines, and still maintain the same $10\%$ excess error allowance.
\end{remark}

%%%%%%%%%%%%%%%%%%%%%%%%%%%%%%%%%%%%%%%%%%%%%

\section{Discussion}
\label{sec:discussion}
% Structure of section:
%- What we did.
%- Insights: what governs bias and MSE? Intuition by stationarity of risk.
%- What is the difference between approximations.

% What we did
In this work we studied the error of parallelized M-estimators when $N$ observations are uniformly at random distributed over $m$ machines. Each machine then learns a $p$ dimensional model with its $n=N/m$ observations and the \(m \) machine-wise results are averaged to a global estimate $\parallelized$.
We derived several different approximations of the estimation error in $\parallelized$ with different quantitative and qualitative insights.

% Insights: no loss in linear models. loss in second order, can be correted. Especially in non-linear models.
\paragraph{Insights}

When $n\gg p$ not much accuracy is lost by splitting the data.
This stands in contrast to other works that demonstrate how, under different assumptions, parallelization combined with averaging may incur a large error \citep{liu_distributed_2014_2}, or even an unbounded one \citep{shamir_communication_2013}.
Our analysis can thus be viewed as providing sufficient conditions for parallelization to be a suitable approach to reduce the overall run-time. 
A second insight is that if the model is highly non-linear, then the excess paralellization error may be considerably large. 

In contrast to the classical fixed-$p$ regime, our high-dimensional analysis, currently confined to generative linear models, showed that splitting the data when there are only few observations per parameter always takes its accuracy toll.
The degradation in accuracy due to splitting can still be quantified even though estimates converge to non-degenerate random limits.

\paragraph{Future Research}
At the basis of our work is an attempt to adhere to real-life software and hardware constraints of parallelized learning.
The assumption of uniform and random distribution of samples to machines is  realistic for some applications, and certainly facilitates the mathematical analysis. It may also be overly restrictive for other appications.
A venue for future research is thus the relaxation of this assumption, allowing for some systematic difference between machines.
We also aim at analyzing other aggregation schemes. Particularly ones that employ more than the mere machine-wise point estimate, and apply to non convex parameter spaces. 
An example of such is the Minimum Kullback-Leibler divergence aggregation, proposed by \citet{liu_distributed_2014_2}. 
This may extend the applicability of our results, for example, to image, sound, and graph data.

%%%%%%%%% Closure %%%%%%%%%%%%%
\section*{Acknowledgments}
We thank Derek Bean, Kyoo il Kim, Yaakov Ritov, Saharon Rosset,  Ohad Shamir and  Yuchen Zhang for fruitful discussions. This research was partly supported by a grant from the Intel Collaborative Research Institute for
Computational Intelligence (ICRI-CI).

%%%%%%%%%%%% Appendices %%%%%%%%%%%%
%\newpage

\appendix
\numberwithin{equation}{section}
\counterwithin{theorem}{section}

\section{Proof of Theorem~\ref{thm:fixed_p_loss}}
\label{apx:proof_fixed_p}

Under Assumption Set \ref{asum:first_order}, classical statistical theory guarantees that upon optimizing the empirical risk (\ref{eq:R_N}), the resulting estimators, $\machine{j}$, converge in probability to $\theta^*$ at rate $n^{-1/2}$.  
Moreover, the leading error term is linear in the influence functions $\nabla f(Z_i,\theta^*)$~\cite[Theorem 5.23]{vaart_asymptotic_1998}:
\begin{align}
\label{eq:asympt_linear}
        \machine{j} &= \theta^* - \invInf \nabla \hat R_n^j(\theta^*)  + o_P(n^{-1/2}) \\
        &= \theta^* - \invInf \frac{1}{n} \sum_{i\in [j]} \nabla f(Z_i,\theta^*)  + o_P(n^{-1/2}). \nonumber
\end{align}
where $[j]$ denotes the $n$ indexes of the observations assigned to machine $j$.
Taking the average of the machine-wise estimators over a fixed number of machines $\parallelized := \frac{1}{m} \sum_j \machine{j}$, and applying Eq.(\ref{eq:asympt_linear}) yields
\begin{align}
\label{eq:linear_representation}
        \parallelized &= \theta^* - \invInf \nablajr \hat R_N(\theta^*)  +  \; o_P(n^{-1/2})  \\
         &= \theta^* - \invInf \frac{1}{N} \sum_{i=1}^N  \nabla f(Z_i,\theta^*)  + o_P(n^{-1/2}). \nonumber
\end{align}
Similarly, applying Eq.(\ref{eq:asympt_linear}) to the centralized solution:
\begin{align*}
        \hat\theta_N &= \theta^* - \invInf \nablajr \hat R_N(\theta^*)  + o_P(N^{-1/2}) \;.   
\end{align*}
Since $m$ is fixed, $o_P(N^{-1/2})=o_P(n^{-1/2})$. Eq.(\ref{eq:unimprovable}) now follows.\hfill$\Box$

%%%% Proof of second order inneficiency

\section{Proof of Theorem~\ref{thm:second_order_bias}}
\label{apx:proof_fixed_p_second_order_bias}

Under Assumption Set ~\ref{assum:second_order}, with $s=3$, by Proposition 3.2 in \citep{rilstone_second-order_1996} $\machinewise$ admits the expansion $\theta^* + \errorSecond{\machinewise}+ O_P(n^{-3/2})$.
We can thus decompose
\begin{align*}
        \biasSecond{\parallelized} &= 
        \expect{\frac{1}{m}\sum_j \errorFirst{\machine{j}} +\frac{1}{m}\sum_j \xi_{-1}(\machine{j})}.
\end{align*}
By definition, $\errorFirst{\machinewise} = - \invInf \nabla\hat R_n(\theta^*)$. 
Hence, $\expect{\errorFirst{\machine{j}}}=0$, for all $j$ and the first term in the equation above vanishes. As for the second term, clearly  $\expect{\xi_{-1}(\machine{j})}$ is independent of $j$, thus, 
$
        \biasSecond{\parallelized}=\expect{\xi_{-1}(\machinewise)}
$.

Again, according to \citep[Proposition 3.2]{rilstone_second-order_1996},  $\expect{\xi_{-1}(\machinewise)}=\delta/n$, where $\delta$ depends on the various problem parameters, but is independent of $n$. Since $\biasSecond{\centralized}=\expect{\xi_{-1}(\centralized)}=\delta/N$, Eq.(\ref{eq:second_order_loss}) readily follows. 
\hfill$\Box$

\section{Proof of Theorem~\ref{thm:second_order_MSE}}
\label{apx:proof_fixed_p_second_order_MSE}

Under Assumption Set ~\ref{assum:second_order}, with $s=4$, we can expand $\parallelized$ as in Eq.(\ref{eq:second_order}).
Plugging this asymptotic expansion of $\parallelized$ into the definition of $\MSESecond{\parallelized}$ from Eq.(\ref{eq:second_order_MSE_definition}), and collecting terms up to $O(n^{-2})$ we have
\begin{align*}
        \MSESecond{\parallelized} =& 
         \expect{\xi_{-1/2}(\parallelized) \xi'_{-1/2}(\parallelized)} +
         \expect{\xi_{-1}(\parallelized) \xi'_{-1/2}(\parallelized)} + 
         \expect{\xi_{-1/2}(\parallelized) \xi'_{-1}(\parallelized)} + \\
         & \expect{\xi_{-1}(\parallelized) \xi'_{-1}(\parallelized)} + 
         \expect{\xi_{-3/2}(\parallelized) \xi'_{-1/2}(\parallelized)} + 
         \expect{\xi_{-1/2}(\parallelized) \xi'_{-3/2}(\parallelized)}. 
\end{align*}
We now analyze each summand separately.
$
        \expect{\xi_{-1/2}(\parallelized) \xi'_{-1/2}(\parallelized)} =
        \frac{1}{m^2} \sum_{k,l} \expect{\xi_{-1/2}(\machine{k}) \xi'_{-1/2}(\machine{l})}, 
$
for $k \neq l$ then $\expect{\xi_{-1/2}(\machine{k}) \xi'_{-1/2}(\machine{l})}$ vanishes.
For 
$k=l$ we have $m$ terms which equals $n^{-1} \gamma_1$ by definition.
The same analysis holds for 
$ \expect{\xi_{-1}(\parallelized) \xi'_{-1/2}(\parallelized)}$ and 
$ \expect{\xi_{-3/2}(\parallelized) \xi'_{-1/2}(\parallelized)}$ denoted  
$n^{-2} \gamma_2$ and $n^{-2} \gamma_4$ respectively. 
As for 
$
        \expect{\xi_{-1}(\parallelized) \xi'_{-1}(\parallelized)} =
        \frac{1}{m^2} \sum_{k,l} \expect{\xi_{-1}(\machine{k}) \xi'_{-1}(\machine{l})} :
$
for $k \neq l$ we have $m(m-1)$ terms where
$
        \expect{\xi_{-1}(\machine{k}) \xi'_{-1}(\machine{l})} =
        \expect{\xi_{-1}(\machine{k})}  \expect{\xi'_{-1}(\machine{l})} 
$
which we defined as $n^{-2} \gamma_0$.
For the remaining $m$ terms where $k = l$, then 
$
        \expect{\xi_{-1}(\machine{k}) \xi'_{-1}(\machine{l})} = n^{-2} \gamma_3
$
by definition.
Collecting terms completes the proof.

\hfill$\Box$

%%%%%% Proof of OLS propositions %%%%%%%%%%%%%%%
\section{Proof of Proposition~\ref{prop:OLS_second_moments}}
\label{apx:proof_OLS_second_moments}

Denoting
$
A^{-k}:=(A^{-1})^k \;;
S_i := X_iX_i' \;;
\dot{f}_i:=\nabla f(X_i,\theta^*) \;;
\ddot{f}_i:=\nabla^2 f(X_i,\theta^*) \;;
V_i:= (\ddot{f}_i - \Inf)  \;;
d_i:= \invInf \dot{f}_i \;;
W_\theta = \nabla^3 R(\theta).
$
We also denote by $W_\theta(x,y)$ the linear operator in $\mathbb{R}^p$ returned by evaluating $W_\theta $ at $(x,y) \in \mathbb{R}^p \times \mathbb{R}^p$.

In our OLS setup we have:
$       \nabla R(\theta) = -  \Sigma (\theta_0-\theta);\;
\invInf = \Sigma^{-1} ;\: 
W_\theta \equiv 0;\: 
\dot{f}_i = -X_i \varepsilon_i ;\:
\ddot{f}_i = S_i ;\:
V_i = S_i-\Sigma ;\:
d_i = - \Sigma^{-1} X_i \varepsilon_i
$.      
From the proof of Proposition $3.2$ in \citep{rilstone_second-order_1996} we have
\begin{align}
\label{eq:expected_second_error_term}
        \expect{\xi_{-1}(\machinewise)} = n^{-1} \invInf (\expect{V_1 d_1} - \frac{1}{2} \expect{\W{d_1,d_1}}).
\end{align}
As $W_\theta \equiv 0   $ and $\expect{\varepsilon_1}=0$, independent of $X_1$, then $\expect{\xi_{-1}(\machinewise)}=0$,
so that from Eq.(\ref{eq:gammas}), $\gamma_0=0$.

Next, we turn to the matrix $\gamma_1$. 
From \citep[page 374]{rilstone_second-order_1996} we have
\begin{align*}
        \gamma_1 = \invInf \expect{\dot{f}_1 \dot{f}'_1} \invInf.
\end{align*}
Since $\dot{f}_1 = -X_1 \varepsilon_1$, and $X_1$ independent of $\varepsilon_1$, then $\expect{\dot{f}_1 \dot{f}'_1}= \sigma^2 \Sigma$. 
Now recalling that $\Inf=\Sigma$ we obtain the form of $\gamma_1$.

Next, we analyze the matrix $\gamma_2$.
From \citep[Lemma 3.3]{rilstone_second-order_1996} we have
\begin{align*}
        \gamma_2= \invInf \left(
                -\expect{V_1 d_1 d'_1} + \frac{1}{2} \expect{\W{d_1,d_1}d'_1}
        \right).
\end{align*}
The second summand vanishes and
\begin{align*}
        \expect{V_1 d_1 d'_1} &= \expect{(S_1-\Sigma)(\Sigma^{-1} X_1 \varepsilon_1)(\Sigma^{-1} X_1 \varepsilon_1)'} \\
        &= \sigma^2 \expect{S_1 \Sigma^{-1} S_1 \Sigma^{-1} - \Sigma \Sigma^{-1} S_1 \Sigma^{-1}} 
        = \sigma^2 \expect{S_1 \Sigma^{-1} S_1 \Sigma^{-1} -  S_1 \Sigma^{-1}} .
\end{align*}
As $S_1 \sim Wishart_p(1, \Sigma)$, we call upon Wishart theory.
We collected the required properties in Appendix~\ref{apx:wishart_matrices}. 
Applying Theorem~\ref{thm:wishart_one} to each summand we have that 
$\expect{S_1 \Sigma^{-1}} = I$, and 
$\expect{S_1 \Sigma^{-1} S_1 \Sigma^{-1} } = (2 \Sigma \Sigma^{-1} \Sigma + \Tr(\Sigma \Sigma^{-1})\Sigma) \Sigma^{-1} = (2+p) I$.

As for the matrix $\gamma_3$, \citet[Lemma 3.3]{rilstone_second-order_1996} yields
\begin{align*}
        \gamma_3= \invInf \left(
                \expect{V_1 d_1 d'_2 V'_2} +
                \expect{V_1 d_2 d'_1 V'_2} +
                \expect{V_1 d_2 d'_2 V'_1}
        \right) \invInf  .
\end{align*}
Because $\expect{\varepsilon_1 \varepsilon_2}=0$ the first two terms cancel.
We now compute $\expect{V_1 d_2 d'_2 V'_1}$.
Recalling that $S_1$ is independent of $S_2$
\begin{align*}
        \expect{V_1 d_2 d'_2 V'_1} &= \expect{(S_1-\Sigma)(\Sigma^{-1} X_2 \varepsilon_2^2  X_2' \Sigma^{-1})(S_1-\Sigma)} \\
        &= \sigma^2 \expect{
                S_1 \Sigma^{-1} S_2 \Sigma^{-1} S_1 -
                S_1 \Sigma^{-1} S_2 \Sigma^{-1} \Sigma  -
                \Sigma \Sigma^{-1} S_2 \Sigma^{-1} S_1 +
                \Sigma \Sigma^{-1} S_2 \Sigma^{-1} \Sigma
                 }\\
         &= \sigma^2 \left(
                 \expect{S_1 \Sigma^{-1} S_2 \Sigma^{-1} S_1} -
                 \Sigma - \Sigma + \Sigma \right).
\end{align*}
Applying the results in Theorem~\ref{thm:wishart_one}, then $\expect{S_1 \Sigma^{-1} S_2 \Sigma^{-1} S_1 } = (p+2) \Sigma$.

Finally, we study the matrix $\gamma_4$.
Since $W_\theta$ and higher derivatives, all vanish, then \citep[Lemma 3.3]{rilstone_second-order_1996} yields
\begin{align*}
        \gamma_4= \invInf \left(
                \expect{V_1 \invInf V_1 d_2 d'_2 } +
                \expect{V_1 \invInf V_2 d_1 d'_2 } +
                \expect{V_1 \invInf V_2 d_2 d'_1 } 
        \right) .
\end{align*}
Because $\expect{\varepsilon_1 \varepsilon_2}=0$ the last two terms cancel.
We now compute $\expect{-V_1 \invInf V_1 d_2 d'_2}$:
\begin{align*}
        \expect{V_1 \invInf V_1 d_2 d'_2} &= 
        \expect{(S_1-\Sigma) \Sigma^{-1} (S_1-\Sigma) (\Sigma^{-1}  X_2 \varepsilon_2^2  X_2' \Sigma^{-1})} \\
        &= \sigma^2 \expect{
                S_1 \Sigma^{-1} S_1 \Sigma^{-1} S_2 \Sigma^{-1} -
                \Sigma \Sigma^{-1} S_1 \Sigma^{-1} S_2 \Sigma^{-1} -
                S_1 \Sigma^{-1} \Sigma \Sigma^{-1} S_2 \Sigma^{-1} +
                \Sigma \Sigma^{-1} \Sigma \Sigma^{-1} S_2 \Sigma^{-1}           
                 }\\
        &= \sigma^2 \left(
                \expect{S_1 \Sigma^{-1} S_1 \Sigma^{-1} S_2 \Sigma^{-1}} -
                I - I + I \right).
\end{align*}
Applying Theorem~\ref{thm:wishart_one} we get $\expect{S_1 \Sigma^{-1} S_1 \Sigma^{-1} S_2 \Sigma^{-1} } = (2+p) I$.

\hfill$\Box$ % QED

%%%%%% Proof of Ridge propositions %%%%%%%%%%%%%%%
\section{Proof of Proposition~\ref{prop:Ridge_second_moments}}
\label{apx:proof_Ridge_second_moments}

Using the notation from Appendix~\ref{apx:proof_OLS_second_moments} we set up some quantities that will be reused throughout the computation. We start with some general results assuming $Var[X] = \Sigma$, but eventually restrict the results to $Var[X] = I$ for simplicity.

When $Var[X] = \Sigma$, then
$
\nabla R(\theta) = - \Sigma (\theta_0-\theta) + \lambda \theta ;\;
\invInf = (\Sigma+\lambda I)^{-1} ;\;
W_\theta \equiv 0;\: 
\dot{f}_i = - S_i (\theta_0-\theta^*) - X_i \varepsilon_i + \lambda \theta^*; \;
\ddot{f}_i = S_i + \lambda I.
$
Adding the simplifying assumption that $\Sigma=I$ then
$
\theta^* = \lambda_{0,1} \theta_0 ;\;
\invInf = \lambda_{0,1} I ;\;
W_\theta \equiv 0;\: 
\dot{f}_i = \lambda_{1,1} (I-S_i) \theta_0 - X_i \varepsilon_i  ; \;
\ddot{f}_i = S_i + \lambda I ;\;
V_i = S_i - I_p ;\:
d_i = \lambda_{1,2} (I-S_i) \theta_0 - \lambda_{0,1} X_i \varepsilon_i.
$

Starting with the $\gamma_0$ matrix.
Since $W_\theta \equiv 0$, then like in the OLS case, $\delta =  \invInf (\expect{V_1 d_1} )$.
In our ridge setup, with the Wishart properties in Theorem~\ref{thm:wishart_two}:
\begin{align*}
        \expect{V_1 d_1} &= 
        \expect{(S_1-I)(\lambda_{1,2} (I-S_1) \theta_0 - \lambda_{0,1} X_1 \varepsilon_1)} %\\
        = \lambda_{1,2}  \expect{2 S_1 - S_1^2 - I} \theta_0 
        = - \lambda_{1,2} (1+p) \theta_0.
\end{align*}
Plugging $\delta$ in the definition of $\gamma_0$ in Eq.(\ref{eq:gammas}) we get $\gamma_0=\lambda_{2,6} (1+p)^2 B$.

Moving to the $\gamma_1$ matrix.
From \citep[page 374]{rilstone_second-order_1996} we have $\gamma_1 = \invInf \expect{\dot{f}_1 \dot{f}'_1} \invInf$.
In our ridge setup, with the Wishart properties in Theorem~\ref{thm:wishart_two}:
\begin{align*}
        \expect{\dot{f}_1 \dot{f}'_1} &= 
        \expect{(\lambda_{1,1} (I-S_1) \theta_0 - X_1 \varepsilon_1)(\lambda_{1,1} (I-S_1) \theta_0 - X_1 \varepsilon_1)'} \\
        &= \expect{ \lambda_{2,2} (I-S_1)B(I-S_1) + \varepsilon_1^2 S_1} 
                =\lambda_{2,2} (B+A) + \sigma^2 I,
\end{align*}
so that $\gamma_1 = \lambda_{2,4} (B+A) + \lambda_{0,2} \sigma^2 I$.

Moving to $\gamma_2$, from \citep[Lemma 3.3]{rilstone_second-order_1996} we have 
$\gamma_2= \invInf \left(-\expect{V_1 d_1 d'_1}  \right)$, and 
\begin{align*}
        \expect{V_1 d_1 d'_1} &= \expect{
                (S_1-I)
                (\lambda_{1,2} (I-S_1) \theta_0 - \lambda_{0,1} X_1 \varepsilon_1)
                (\lambda_{1,2} (I-S_1) \theta_0 - \lambda_{0,1} X_1 \varepsilon_1)'
        } \\
        &= \expect{(S_1-I) (\lambda_{2,4}(I-S_1) B (I-S_1) + \lambda_{0,2} \varepsilon_1^2 S_1 )} .
\end{align*}
Opening parenthesis and calling upon the Wishart properties in Theorem~\ref{thm:wishart_two}:
$
        \expect{S_1 B} = B  \;;
        \expect{S_1^2 B} = (2+p) B \;;
        \expect{S_1 B S_1} = 2B + A  \;;
        \expect{S_1 ^2B S_1} = (8+2p) B + (4+p)A.
$
Collecting terms: 
$$\gamma_2 = - \lambda_{0,3} \sigma^2 (1+p) I - \lambda_{2,5} ((4+p)B+(3+p)A).$$

Moving to $\gamma_3$.
Calling upon \citep[Lemma 3.3]{rilstone_second-order_1996}:
\begin{align*}
        \gamma_3= \invInf \left(
                \expect{V_1 d_1 d'_2 V'_2} +
                \expect{V_1 d_2 d'_1 V'_2} +
                \expect{V_1 d_2 d'_2 V'_1}
        \right) \invInf  .
\end{align*}
In this ridge setup, none of these terms cancel. 
The computations follow the same lines as for the previous matrices. 
The same holds for $\gamma_4$ for which $W_\theta \equiv 0$ and higher derivatives vanish, thus
\begin{align*}
        \gamma_4= \invInf \left(
                \expect{V_1 \invInf V_1 d_2 d'_2 } +
                \expect{V_1 \invInf V_2 d_1 d'_2 } +
                \expect{V_1 \invInf V_2 d_2 d'_1 } 
        \right) .
\end{align*}

\hfill$\Box$ % QED

%%%%%%%% Some properties of Wishart Matrices %%%%%%%%%%%%%%%
\section{Properties of Wishart Matrices}
\label{apx:wishart_matrices}

In this section we collect some properties of Wishart matrices used in this work.
For classical results we provide appropriate references.
For results we did not find in the literature, we present their proofs.

\begin{theorem}
\label{thm:wishart_one}
Let $S_1$ and $S_2$ be independent $Wishart_p(1,\Sigma)$ distributed, random matrices, and let $B$ be a fixed symmetric $p \times p$ matrix. It follows that:
(i) $\expect{S_1} = \Sigma$;
(ii) $\expect{S_1 B S_1} =  2 \Sigma B \Sigma +  \Tr(\Sigma B)\Sigma $,
and 
(iii) $ \expect{S_1 B S_2 B S_1} = 2 \Sigma B \Sigma B \Sigma + \Tr(B \Sigma B \Sigma)\Sigma$.
\end{theorem}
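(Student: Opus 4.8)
The plan is to reduce everything to Gaussian moment computations, since a $\mathrm{Wishart}_p(1,\Sigma)$ matrix is exactly $S = XX'$ for a single $X \sim \mathcal{N}(0,\Sigma)$. Thus $S_1 = X X'$ and $S_2 = Y Y'$ with $X,Y$ independent $\mathcal{N}(0,\Sigma)$ vectors, and each claimed identity becomes a statement about fourth- or sixth-order moments of $X$ (and, for (iii), a product of a fourth-order moment of $X$ with a second-order moment of $Y$). Part (i) is immediate: $\expect{XX'} = \Sigma$ by definition of the covariance. For parts (ii) and (iii) I would use Isserlis' (Wick's) theorem for Gaussian vectors, which expresses any higher moment as a sum over pairings of the coordinates; equivalently, I would work coordinatewise and track which index contractions produce a $\Sigma$ factor versus a $\Tr(\cdot\,\Sigma)$ factor.

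For (ii), write the $(a,b)$ entry of $S_1 B S_1$ as $\sum_{c,d} X_a X_c B_{cd} X_d X_b$ and take expectations. By Isserlis' theorem, $\expect{X_a X_b X_c X_d} = \Sigma_{ab}\Sigma_{cd} + \Sigma_{ac}\Sigma_{bd} + \Sigma_{ad}\Sigma_{bc}$. Substituting and summing against $B_{cd}$: the pairing $(a,b)(c,d)$ yields $\Sigma_{ab}\sum_{c,d}\Sigma_{cd}B_{cd} = \Sigma_{ab}\Tr(\Sigma B)$; the pairings $(a,c)(b,d)$ and $(a,d)(b,c)$ each yield $(\Sigma B \Sigma)_{ab}$ (using symmetry of $B$). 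Collecting gives $\expect{S_1 B S_1} = 2\Sigma B \Sigma + \Tr(\Sigma B)\Sigma$, as claimed.

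For (iii), first integrate out $Y$: conditionally on $X$, $\expect{Y B Y' \mid X} = \Tr(\text{?})$ is not quite right — rather $\expect{B S_2 B} = B\,\expect{YY'}\,B = B\Sigma B$, so $\expect{S_1 B S_2 B S_1} = \expect{S_1 (B\Sigma B) S_1}$, and now part (ii) applies verbatim with $B$ replaced by the symmetric matrix $B\Sigma B$. This gives $2\Sigma (B\Sigma B)\Sigma + \Tr(\Sigma B \Sigma B)\Sigma = 2\Sigma B\Sigma B\Sigma + \Tr(B\Sigma B\Sigma)\Sigma$, which is exactly (iii). I would present (iii) this way rather than as a raw sixth-moment computation.

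The only mild obstacle is bookkeeping in part (ii): one must be careful that all three Wick pairings are accounted for and that the symmetry of $B$ is used to merge the two ``$\Sigma B \Sigma$'' pairings into a single factor of $2$. No analytic subtlety arises — these are finite-dimensional, everything is integrable because Gaussians have all moments — so once (ii) is established, (iii) follows by the conditioning trick above with essentially no extra work, and (i) is a one-line remark.
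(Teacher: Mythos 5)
Your proposal is correct. The underlying tools are the same as the paper's: the representation $S=XX'$ with $X\sim\mathcal{N}_p(0,\Sigma)$, Isserlis' theorem for the fourth Gaussian moment, and independence of $S_1$ and $S_2$. The differences are organizational. For (i) and (ii) the paper simply cites a standard reference (Fujikoshi et al., Theorem 2.2.5), whereas you prove (ii) directly by the Wick pairing computation; your bookkeeping is right, including the use of the symmetry of $B$ to merge the two cross pairings into the factor $2$ in $2\Sigma B\Sigma$. For (iii) the paper works entirely in index notation, writing $\expect{(S_1BS_2BS_1)_{ij}}=\sum_{stkl}B_{st}B_{kl}\expect{S^1_{is}S^1_{lj}}\expect{S^2_{tk}}$ and then applying Isserlis to the fourth moment of $X$ before reassembling the matrix expression. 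Your route --- integrate out $S_2$ first so that $\expect{S_1BS_2BS_1}=\expect{S_1(B\Sigma B)S_1}$, then invoke (ii) with the symmetric matrix $B\Sigma B$ --- is logically the same factorization but packaged as a reduction to (ii), which avoids the second round of index manipulation and makes the cyclic identity $\Tr(\Sigma B\Sigma B)=\Tr(B\Sigma B\Sigma)$ do the final step. This buys a shorter and less error-prone argument; the paper's index-level version has the minor advantage of being the template it reuses verbatim for the six identities of Theorem~\ref{thm:wishart_two}, where the analogous reduction is not always available.
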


\begin{proof}
The first two statements are simply an application of \citep[Theorem 2.2.5]{fujikoshi_multivariate_2010}.
To prove (iii) we write $S^1_{ij}=(S_1)_{ij}$ and recall the independence between $S_1$ and $S_2$ to get
\begin{align}
\label{eq:wishart_decomposition_one}
\begin{split}
        \expect{(S_1 B S_2 B S_1)_{ij}} &= \sum_{stkl} \expect{S^1_{is} B_{st} S^2_{tk} B_{kl} S^1_{lj}} 
        = \sum_{stkl} B_{st} B_{kl} \expect{S^1_{is} S^1_{lj}} \expect{S^2_{tk}}       .
\end{split}
\end{align}
Calling upon \citep[Theorem 3.3.3]{gupta_matrix_1999} we can represent $S$ as $S=X X'$ where $X \sim \mathcal{N}_p(0, \Sigma)$, so that 
$\expect{S_{is} S_{lj}} = \expect{X_i X_s X_l X_j }$  and 
$\expect{S_{tk}} = \expect{X_t X_k }$.
Now calling upon Isserlis' Theorem
\begin{align*}
        \expect{X_i X_j X_k X_l } = \expect{X_i X_k} \expect{X_j X_l} + 
                \expect{X_i X_l} \expect{X_j X_k} + 
                \expect{X_i X_j} \expect{X_k X_l} ,
\end{align*} 
which in our case equals
\begin{align}
\label{eq:gaussian_fourth_monent}
        \expect{X_i X_j X_k X_l } &= 
        \Sigma_{ik} \Sigma_{jl} + \Sigma_{il} \Sigma_{jk} + \Sigma_{ij} \Sigma_{kl}.
\end{align} 
Eq.(\ref{eq:wishart_decomposition_one}) thus yields
\begin{align*}
        \sum_{stkl} B_{st} B_{kl} \expect{S^1_{is} S^1_{lj}} \expect{S^2_{tk}}  = 
        \sum_{stkl} B_{st} B_{kl} 
        (\Sigma_{il} \Sigma_{sj} + \Sigma_{ij} \Sigma_{sl} + \Sigma_{is} \Sigma_{lj}) 
        (\Sigma_{tk}).
\end{align*}
The third statement in the theorem follows by rearranging into matrix notation.

\end{proof}

\begin{theorem}
\label{thm:wishart_two}

Let $S_1$ and $S_2$ be independent $Wishart_p(1,I_p)$ distributed, random matrices, and let $B$ be a fixed symmetric $p \times p$ matrix. It follows that:

%\begin{tabular}{p{1cm}p{6cm}p{1cm}p{6cm}}
\begin{tabular}{rlrl}
(1) &$\expect{S_1 S_2 B S_1 S_2} =  (p+6) B + 2 \Tr(B)I $.& (2) & $ \expect{S_1 S_2 B S_1} = 2 B + \Tr(B)I $. \\ 
(3) &$\expect{S_1 S_2 B S_2 S_1} = 4 B + (4+p) \Tr(B) I $.& (4) & $ \expect{S_1^2 B S_1} = (8+2p) B + (4+p) \Tr(B) I $. \\ 
(5) &$ \expect{S_1 S_2 S_1 B S_2} = (6+p) B + \Tr(B) I $.& (6) & $\expect{S_1 S_2^2 B S_1} = (4+2p) B + (2+p) \Tr(B) I $. \\ 
\end{tabular} 
\end{theorem}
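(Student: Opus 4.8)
\emph{Proof sketch.} The plan is to prove all six identities by the mechanism already used for Theorem~\ref{thm:wishart_one}(iii): represent each Wishart$_p(1,I_p)$ factor as an outer product of a standard Gaussian vector, collapse the matrix product to a scalar random variable times a single rank-one matrix, and evaluate the resulting Gaussian moment with Isserlis' theorem.

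First I would write $S_1 = xx'$ and $S_2 = yy'$ with $x,y \sim \mathcal N(0,I_p)$ independent, via \citep[Theorem 3.3.3]{gupta_matrix_1999}. The key algebraic step is that every product in the statement telescopes, because inner products such as $x'x$, $x'y$, $y'Bx$ are scalars (and $B$ is symmetric); one checks, in one line each, that
\begin{align*}
S_1 S_2 B S_1 S_2 &= (x'y)^2 (y'Bx)\, xy' & S_1 S_2 B S_1 &= (x'y)(y'Bx)\, xx' \\
S_1 S_2 B S_2 S_1 &= (x'y)^2 (y'By)\, xx' & S_1^2 B S_1 &= (x'x)(x'Bx)\, xx' \\
S_1 S_2 S_1 B S_2 &= (x'y)^2 (x'By)\, xy' & S_1 S_2^2 B S_1 &= (y'y)(x'y)(y'Bx)\, xx'.
\end{align*}

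Then, for each identity, I would expand the $(i,j)$ entry of the right-hand side as a sum over index tuples of products of coordinates of $x$ and $y$ against entries of $B$, and use independence of $x$ and $y$ to factor the expectation into a Gaussian moment in $x$ and one in $y$; e.g.\ for the first,
\begin{align*}
\expect{(S_1 S_2 B S_1 S_2)_{ij}} &= \sum_{a,b,c,d} B_{cd}\, \expect{x_a x_b x_d x_i}\, \expect{y_a y_b y_c y_j}.
\end{align*}
Identities (1), (3), (5), (6) reduce to a product of two fourth-order Gaussian moments, while identity (4) — which involves only $S_1$ and is essentially the same computation as Theorem~\ref{thm:wishart_one}(ii)–(iii) — reduces to a single sixth-order moment. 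Each moment is evaluated by Isserlis' theorem: since $\expect{x_a x_b}=\delta_{ab}$, a fourth moment expands into three pairings contributing products of Kronecker deltas, a sixth into fifteen. Summing over the tuples, a pairing that links $i$ and $j$ through the $B$-factor contributes a $B_{ij}$ term, a pairing that contracts the two indices of $B$ together while linking $i,j$ directly contributes a $\Tr(B)\,\delta_{ij}$ term, and each additional closed delta-loop over summed indices contributes a factor $p$; collecting like terms and passing to matrix notation yields the stated pairs of coefficients $(p+6,2)$, $(2,1)$, $(4,4+p)$, $(8+2p,4+p)$, $(6+p,1)$, $(4+2p,2+p)$.

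The argument is conceptually routine, so the main obstacle is purely bookkeeping: tracking which of the $3\times 3=9$ pairing combinations (or, for identity (4), which of the $15$ pairings) closes a trace loop, produces a free $B_{ij}$, or produces an extra power of $p$, and doing so without arithmetic slips. Organizing the Wick contractions as a small ribbon-graph enumeration — classifying pairings by how the two ``special'' indices $i,j$ and the two $B$-indices connect to one another and to the summed indices — keeps the count transparent and makes the final constants fall out cleanly.
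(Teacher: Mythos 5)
Your proposal is correct and follows essentially the same route as the paper's proof: representing each Wishart factor as a Gaussian outer product, factoring the expectation by independence, and evaluating the resulting fourth- (or sixth-) order Gaussian moments via Isserlis' theorem with Kronecker-delta bookkeeping. The only cosmetic difference is that you first collapse each product to a scalar times a rank-one matrix before expanding indices, which streamlines the same computation.
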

\begin{proof}
We present only the proof of the first statement. The others are proved using the same arguments. 
We also note that these arguments can be used for the more general $Wishart_p(n,\Sigma)$ matrices.
We now write $S^1_{ij}=(S_1)_{ij}$ and recall the independence between $S_1$ and $S_2$ to get
\begin{align}
\label{eq:wishart_decomposition_two}
\begin{split}
        \expect{(S_1 S_2 B S_1 S_2)_{ij}} &= \sum_{stkl} \expect{S^1_{is} S^2_{st} B_{tk} S^1_{kl} S^2_{lj}} 
        = \sum_{stkl} B_{tk} \expect{S^1_{is} S^1_{kl}} \expect{S^2_{st} S^2_{lj}}.       
\end{split}
\end{align}
The mean $\expect{S_{is} S_{kl}} = \expect{X_i X_s X_k X_l }$ is given in Eq.(\ref{eq:gaussian_fourth_monent}) which in the case where $\Sigma=I$ simplifies into 
\begin{align*}
        \expect{X_i X_j X_k X_l } = \delta_{ik} \delta_{jl} + \delta_{il} \delta_{jk} + \delta_{ij} \delta_{kl}.
\end{align*} 
Eq.(\ref{eq:wishart_decomposition_two}) thus yields
\begin{align*}
        \sum_{stkl} B_{tk} 
        (\delta_{ik} \delta_{sl} + \delta_{il} \delta_{sk} + \delta_{is} \delta_{kl}) 
        (\delta_{sl} \delta_{tj} + \delta_{sj} \delta_{tl} + \delta_{st} \delta_{lj})
\end{align*}
The first statement in the theorem is recovered by rearranging into matrix notation.

\end{proof}

%%%%%%%% Proof of high-dimensional approximation %%%%%%%%%%%%%%%

\section{High-Dimensional Regime}

\subsection{Proof of Lemma~\ref{lemma:perturbation_result}}
\label{apx:proof_perturbation_result}

Eq.(\ref{eq:asymptotic_representation}) is simply taken from \citet[Result 1]{bean_optimal_2013}, stated here for completeness.
From this equation it follows that the mean squared error of the centralized solution is
\begin{align}
\begin{split}
\label{eq:central_highDim_MSE}
        MSE[\centralized,\theta^*] =
        \mathbb{E}[\|\hat\theta_N-\theta^*\|_2^2] = 
        r^2(\kappa/m) \: \mathbb{E}[\|\Sigma^{-1/2}\xi\|^2_2] \: (1+o(1)).
\end{split}
\end{align}
In contrast, upon averaging the estimators of $m$ machines, we obtain
\begin{align}
\begin{split}
\label{eq:parallel_highDim_MSE}
        MSE[\parallelized,\theta^*] =  
        r^2(\kappa)/m  \: \mathbb{E}[\|\Sigma^{-1/2}\xi\|^2_2] \: (1+o(1)).
\end{split}
\end{align}

By comparing Eq.(\ref{eq:central_highDim_MSE}) and Eq.(\ref{eq:parallel_highDim_MSE}), 
the accuracy loss of parallelization compared to a centralized estimation is thus given by
\begin{equation}\label{eq:parallisation_cost}
        \frac{MSE[\parallelized,\theta^*]}{MSE[\centralized,\theta^*]} = 
        \frac{r^2(\kappa)/m}{r^2(\kappa/m)} \, (1+o(1)).
\end{equation}
Using perturbation analysis,
%\footnote{Which was already suggested in \cite[Section 4.3]{el_karoui_robust_2012} albeit only to a first-order, whereas for our purposes a second order is required.}
and denoting by 
$f_{[i]}:=\frac{\partial^i}{\partial t^i}f(t)$ 
the $i$-th derivative of the loss function, we now characterize the behaviour of Eq.(\ref{eq:parallisation_cost}) as $\kappa$ and $m$ vary. 
To this end, and in line with \cite{el_karoui_robust_2013} and \cite{donoho_high_2013}, we first introduce some more definitions:
\begin{align}
\label{eq:def_noise}
        \zhat := \epsilon + r(\kappa) \, \eta \; ; \quad
        \eta \sim \mathcal{N}(0,1), \; \mbox{independent of } \epsilon.
\end{align}
Furthermore 
$f(Y-X'\theta)$ is treated as a univariate function of the residual alone, and 
$$
   \prox_c(z) := \arg\min_x \left\{ f(x) + \frac1{2c}\|x-z\|^2 \right\}. 
$$
Readers familiar with optimization theory will recognize the equation as the standard proximal operator.

As shown by both \citet[Corollary 1]{el_karoui_robust_2013} and \citet[Theorem 4.1]{donoho_high_2013}, under the assumptions of Lemma~\ref{lemma:perturbation_result} the quantity $r(\kappa)$ together with a second quantity $c=c(f,\kappa)$ are the solution of the following set of two coupled non-linear equations
\begin{align}
        \mathbb{E}\left[ \frac{\mathrm{d}}{\mathrm{d} z} \prox_c(z) \Big|_{z=\zhat} \right] &= 1-\kappa,
        \label{eq:first_equation}\\
        \mathbb{E}\left[\left( \zhat - \prox_c(\zhat)\right)^{2}\right] & = \kappa r^2(\kappa) ,
        \label{eq:second_equation}
\end{align}
where the averaging operator in Eqs.(\ref{eq:first_equation})-(\ref{eq:second_equation}) is with respect to the random variable $\zhat$. 

Since Eqs.(\ref{eq:first_equation}) and (\ref{eq:second_equation}) are solvable analytically only in very specific cases, we study the limiting behaviour of the solution of this set of equations as $\kappa\to 0$. To this end we note that for $\kappa=0$, $r(0)=0$ and similarly $c(f,0)=0$ \cite[Section 4.3]{el_karoui_robust_2012}.
We first study the behaviour of the proximal operator \(\prox_c(z)\) for small values of \(c\). 

\begin{lemma}
Assume the loss function $\loss$ is three times differentiable, with a bounded third derivative, $|\loss_3|\leq K$. Then, as $c\to0$,
\begin{align}
\label{eq:prox_approx}
        prox_c(z) =
        z +\sqrt{c}w^*  =
        z- c \loss_1(z) + c^2\loss_1(z)\loss_2(z) + O(c^3).
\end{align}
\end{lemma}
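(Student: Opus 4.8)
The plan is to expand the defining relation of the proximal operator. Write $x^* := \prox_c(z)$. By definition $x^*$ is the unique minimizer of $g(x) := \loss(x) + \frac{1}{2c}\|x-z\|^2$, so it satisfies the first-order optimality condition $\loss_1(x^*) + \frac{1}{c}(x^* - z) = 0$, i.e. $x^* = z - c\,\loss_1(x^*)$. This fixed-point form is the engine of the whole argument: it shows immediately that $x^* - z = O(c)$ as $c\to 0$ (using that $\loss_1$ is locally bounded, which follows from the differentiability hypotheses, strong convexity giving uniqueness and well-posedness), and then bootstrapping gives successively sharper expansions.

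First I would establish $x^* = z + O(c)$ from the fixed-point equation and the boundedness of $\loss_1$ near $z$. Next, substitute this crude estimate back: $x^* = z - c\,\loss_1(z + O(c)) = z - c\,\loss_1(z) + O(c^2)$, using that $\loss_1$ is Lipschitz (its derivative $\loss_2$ is controlled since $\loss_3$ is bounded, hence $\loss_2$ has at most linear growth / is locally bounded). This yields the $O(c^2)$-accurate form. To get the stated $O(c^3)$ expansion, substitute the $O(c^2)$-accurate estimate once more into $x^* = z - c\,\loss_1(x^*)$ and Taylor-expand $\loss_1$ to second order around $z$:
\begin{align*}
        x^* &= z - c\,\loss_1\!\left(z - c\,\loss_1(z) + O(c^2)\right) \\
        &= z - c\left(\loss_1(z) - c\,\loss_1(z)\loss_2(z) + O(c^2)\right) \\
        &= z - c\,\loss_1(z) + c^2\,\loss_1(z)\loss_2(z) + O(c^3),
\end{align*}
where the Taylor remainder is $O(c^2)$ uniformly because $|\loss_3|\le K$ controls the quadratic term in the expansion of $\loss_1$. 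This matches Eq.(\ref{eq:prox_approx}); the intermediate quantity $w^* = (x^*-z)/\sqrt{c}$ is then read off as $w^* = -\sqrt{c}\,\loss_1(z) + O(c^{3/2})$.

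The main obstacle is bookkeeping the error terms \emph{uniformly} enough for the later use (the expansion will be plugged into expectations over $\zhat$ in Eqs.(\ref{eq:first_equation})--(\ref{eq:second_equation})), so the $O(c^3)$ must be understood with a constant that is integrable against the Gaussian-type law of $\zhat$ — this is where the bounded-third-derivative hypothesis is doing real work, and where one must be slightly careful rather than purely formal. A secondary point to check is that the fixed-point iteration is legitimately a contraction for small $c$ (immediate from strong convexity of $\loss$, which bounds $\loss_2$ below and makes $x\mapsto z - c\,\loss_1(x)$ a contraction once $c$ is small), guaranteeing that the formal substitution scheme actually converges to the true $\prox_c(z)$ rather than to a spurious solution.
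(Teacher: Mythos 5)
Your proof is correct and is essentially the paper's argument: both rest on the first-order optimality condition $x^*=z-c\,\loss_1(x^*)$ together with a Taylor expansion of $\loss_1$ whose remainder is controlled by $|\loss_3|\le K$. The only cosmetic difference is that the paper rescales to $w=(x-z)/\sqrt{c}$ and solves the resulting (singular) quadratic for $w^*$, selecting the root that vanishes as $c\to0$, whereas you unwind the same implicit equation by fixed-point bootstrapping; your explicit remarks on uniformity of the $O(c^3)$ constant and on the contraction property are welcome points that the paper leaves implicit.
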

\begin{proof}
Upon the change of variables $x=z+\sqrt{c}w$, Eq.(\ref{eq:prox_approx}) becomes 
\[
        prox_c(z) = z +\ \sqrt{c} \, \arg\min_w \left\{ \loss(z+\sqrt{c}w)+\frac12\|w\|^{2} \right\}.
\]
Denote by $w^*=w^*(z,c)$ the minimizer in the equation above.
Under the assumption that $\loss$ is differentiable, it is the solution of
\begin{equation}
        \sqrt{c} \loss_1 (z + \sqrt{c} w) + w=0.
\end{equation}
Next, we make an exact 2-term Taylor expansion of $\loss_1$ around the value $z$, with a remainder term involving the third derivative of \(\loss\). Then at $w^*$ we have
\begin{align}
\begin{split}
        \label{eq:quadratic_w_star}
        0= w^* +
         \sqrt{c} \left[\loss_1(z)+\sqrt{c} w^* \loss_2(z) + \tfrac12 c \, (w^*)^2 \loss_3(\tilde{w})\right] ,
\end{split}
\end{align}
where $\tilde{w}$ is some point in the interval $[0,w^*]$.
Eq. (\ref{eq:quadratic_w_star}) is a singular quadratic equation in $w^*$, although implicit since \(\tilde{w}\) depends on $w^*$ as well. It has two solutions, one that explodes to $\infty$ as $c\to0$ and the other, relevant to us, that tends to zero as $c\to 0$. Under the assumption that $|\loss_3|\leq K$ we have that
\begin{align}
\label{eq:w_star}
        w^*(z)=& -\frac{\sqrt{c}\loss_1(z)}{1+c\loss_2(z)} +O(c^2 \sqrt{c}) \\
      =& - \sqrt{c}\loss_1(z) + 
        c\sqrt{c}\loss_2(z)\loss_1(z) + O(c^2 \sqrt{c}). \nonumber
\end{align} 
Inserting Eq. (\ref{eq:w_star}) into the definition of the prox function concludes the proof.
\end{proof}

\subsection{Approximating The Residual Noise Equations}

Next, we study the form of Eqs.(\ref{eq:first_equation}) and (\ref{eq:second_equation}) as $\kappa\to 0$. 
As \(r(\kappa)\to0\) the distribution of the random variable $\zhat := \epsilon+r(\kappa) \eta$ converges to that of $\epsilon$. The following lemma quantifies how averaging with respect to $\zhat$ is related to averaging with respect to $\epsilon$.

\begin{lemma} Let $g:\mathbb{R}\to\mathbb{R}$ be a smooth differentiable function, with bounded fourth derivative, $|g_4(x)|\leq K$. 
Let \(\zhat:=\epsilon+r\eta\) be defined as in Eq.(\ref{eq:def_noise}). 
Then, as $r\to 0$,
\begin{equation}\label{eq:E_xi_epsilon}
        \mathbb{E}_{\zhat}[g(\zhat)] = \mathbb{E}_\epsilon[g(\epsilon)] + 
        \frac12 \, r^2 \, \mathbb{E}_\epsilon[g_2(\epsilon)] + O(r^4).
\end{equation}
\end{lemma}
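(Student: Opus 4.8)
The claim is just a second-order Taylor expansion of $g$ in the small perturbation $r\eta$, averaged against the standard normal law of $\eta$. Recall from Eq.(\ref{eq:def_noise}) that $\zhat=\epsilon+r\eta$ with $\eta\sim\mathcal N(0,1)$ independent of $\epsilon$. The plan is therefore to condition on $\epsilon$, expand $g(\epsilon+r\eta)$ in powers of $r\eta$ up to cubic order with an explicit fourth-order remainder, and then take expectations over the joint law of $(\epsilon,\eta)$, exploiting independence.

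Concretely, I would first invoke Taylor's theorem: since $g$ is four times differentiable,
\[
  g(\epsilon+r\eta)=g(\epsilon)+r\eta\,g_1(\epsilon)+\tfrac12 r^2\eta^2 g_2(\epsilon)+\tfrac16 r^3\eta^3 g_3(\epsilon)+\tfrac1{24}r^4\eta^4 g_4(\zeta),
\]
where $\zeta=\zeta(\epsilon,\eta,r)$ lies between $\epsilon$ and $\epsilon+r\eta$; using the integral form of the remainder instead of the Lagrange form sidesteps any measurability concern about $\zeta$. Taking $\mathbb{E}$ and using that $\eta$ is independent of $\epsilon$ with $\mathbb{E}[\eta]=\mathbb{E}[\eta^3]=0$ and $\mathbb{E}[\eta^2]=1$, the linear and cubic terms vanish and the quadratic term contributes exactly $\tfrac12 r^2\,\mathbb{E}_\epsilon[g_2(\epsilon)]$, giving the stated leading correction. (Equivalently, one may observe that $\eta\mapsto\mathbb{E}_\eta[g(\epsilon+r\eta)]$ is the Gaussian convolution $(G_{r^2}\ast g)(\epsilon)$, which solves the heat equation in the time variable $s=r^2$, so its expansion to first order in $s$ has coefficient $\tfrac12 g_2$; then average over $\epsilon$.)

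It remains to bound the remainder $\mathcal R:=\tfrac1{24}r^4\,\mathbb{E}[\eta^4 g_4(\zeta)]$. Using $|g_4|\le K$ and $\mathbb{E}[\eta^4]=3$ yields $|\mathcal R|\le \tfrac{K}{8}r^4=O(r^4)$, uniformly in $r$. The only points needing a sentence of care are (i) that the quantities $\mathbb{E}_\epsilon[g(\epsilon)]$ and $\mathbb{E}_\epsilon[g_2(\epsilon)]$ in the statement are finite — which must be assumed or inferred from moment conditions on $\epsilon$, since a bound on $g_4$ only forces $g_2$ to grow at most quadratically and $g$ at most quartically — and (ii) that exchanging $\mathbb{E}$ with the finite Taylor sum is legitimate, which is immediate once each term is integrable. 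I expect the only mild obstacle to be this integrability bookkeeping (and the choice of the integral remainder form); the analytic content is the elementary Gaussian-moment computation above.
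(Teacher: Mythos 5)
Your proposal is correct and matches the paper's own argument: a fourth-order Taylor expansion of $g(\epsilon+r\eta)$ about $\epsilon$, cancellation of the odd terms via the symmetry of $\eta$ and its independence from $\epsilon$, and a remainder bound of $O(r^4)$ from $|g_4|\leq K$ together with $\mathbb{E}[\eta^4]=3$. The only additions (the integral-form remainder, the heat-equation remark, and the integrability bookkeeping) are harmless refinements of the same proof.
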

\begin{proof}
By definition of the random variable \(\zhat\) and of the expectation operator,
\begin{eqnarray}
        \label{eq:E_g_xi}
\mathbb{E}_{\zhat} \left[
        g(\zhat)
 \right] &=& \iint g(\varepsilon + r \eta) dF_\varepsilon dF_\eta ,
\end{eqnarray}
where $dF_\varepsilon $ and $dF_\eta$ are the CDFs of the random variables $\varepsilon$, and $\eta$, respectively. 
Making a Taylor expansion of \(g\) up to fourth order gives
\begin{align}
\begin{split}
 g(\epsilon+r\eta)&= %\iint \left[ 
        g(\varepsilon) + 
        r \, \eta \,  g_1(\epsilon) + 
        \frac{1}{2} \, r^2 \, \eta^2 \, g_2(\epsilon)+
        \frac16 \, r^3 \, \eta^3 \, g_3(\epsilon) +
        \frac1{24} \, r^4 \, \eta^4 \, g_4(\tilde \zhat) ,
\end{split}
\end{align}
where $\tilde\zhat$ is an intermediate point in the interval $[\epsilon,\epsilon+r\eta]$. As $\eta$ is symmetrically distributed, upon inserting this expansion into Eq.(\ref{eq:E_g_xi}), odd terms cancel. Also, given that the fourth derivative is bounded, the remainder term is indeed \(O(r^4)\), and Eq. (\ref{eq:E_xi_epsilon}) follows. 
\end{proof}

We now arrive at our main result regarding the solution of the system of equations (\ref{eq:first_equation}) and (\ref{eq:second_equation}).

\begin{theorem}
As $\kappa\to0$, the solution of the system of Equations (\ref{eq:first_equation})-(\ref{eq:second_equation}) admits the following asymptotic form
\begin{align}
        \label{eq:assumed_form}
\begin{split}
c(\kappa) &= \kappa \, c_1 + \kappa^2 \, c_2 + O(\kappa^3) \;,\\
r^2(\kappa) &= \kappa \, r_1 + \kappa^2 \, r_2 +O(\kappa^3) \;.
\end{split}
\end{align}
The coefficients $c_1,c_2,r_1,r_2$ are given by
\begin{equation}
c_1 = \frac{1}{A_2}, \quad
c_2 = \frac{T_1}{A_2^3} - \frac{B_1 A_4}{A_2^4}, \qquad
r_1 = \frac{B_1}{A_2^2}, \quad  
r_2 = \frac{3 B_1 T_1}{A_2^4} - \frac{2 B_1^2 A_4}{A_2^5} + \frac{2 B_2}{A_2^3},
        \label{eq:c12_r12}
\end{equation}
where 
\begin{align}
\label{eq:coeffs}
        A_2 = \mathbb{E}_\varepsilon [f_{[2]}], \:
        A_4 = \mathbb{E}_\varepsilon [1/2 \, f_{[4]}], \:
        T_1 = \mathbb{E}_\varepsilon [f_{[2]}^2 + f_{[1]} f_{[3]}], \: 
        B_1 = \mathbb{E}_\varepsilon [f_{[1]}^2], \: 
        B_2 = \mathbb{E}_\varepsilon [f_{[1]}^2 f_{[2]}],
\end{align}
and expectations are taken only with respect to the distribution of $\epsilon$.

\end{theorem}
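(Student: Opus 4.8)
The plan is to linearize the coupled fixed-point system (\ref{eq:first_equation})--(\ref{eq:second_equation}) about the base point $\kappa=0$, where $r(0)=c(f,0)=0$, by feeding in the two local expansions already established: the proximal-operator expansion (\ref{eq:prox_approx}) and the change-of-measure identity (\ref{eq:E_xi_epsilon}). Substituting the ansatz $c(\kappa)=\kappa c_1+\kappa^2 c_2+O(\kappa^3)$ and $r^2(\kappa)=\kappa r_1+\kappa^2 r_2+O(\kappa^3)$ and matching coefficients of like powers of $\kappa$ will produce a triangular cascade of scalar equations that determine $c_1,r_1,c_2,r_2$ in that order.

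First I would reduce (\ref{eq:first_equation}) to a relation in $c,r^2,\kappa$. Differentiating (\ref{eq:prox_approx}) gives $\tfrac{\mathrm d}{\mathrm d z}\prox_c(z)=1-c\,f_{[2]}(z)+c^2\big(f_{[2]}^2(z)+f_{[1]}(z)f_{[3]}(z)\big)+O(c^3)$, whose second derivative in $z$ is $-c\,f_{[4]}(z)+O(c^2)$. Applying (\ref{eq:E_xi_epsilon}) to this function converts the $\zhat$-average into an $\epsilon$-average plus a correction $\tfrac12 r^2\,\mathbb E_\epsilon[-c\,f_{[4]}]$, so that, in the moment notation (\ref{eq:coeffs}), (\ref{eq:first_equation}) becomes $c A_2-c^2 T_1+c\,r^2 A_4+O(c^3,c^2r^2,r^4)=\kappa$. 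Likewise, from (\ref{eq:prox_approx}), $\zhat-\prox_c(\zhat)=c\,f_{[1]}(\zhat)-c^2 f_{[1]}(\zhat)f_{[2]}(\zhat)+O(c^3)$; squaring gives $c^2 f_{[1]}^2+O(c^3)$, whose second derivative contributes $2c^2 T_1$ under (\ref{eq:E_xi_epsilon}), so (\ref{eq:second_equation}) turns into $c^2 B_1-2c^3 B_2+r^2 c^2 T_1+O(c^4,c^3r^2,r^4)=\kappa r^2$.

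Then I would extract the coefficients. The $O(\kappa)$ term of the first reduced equation gives $c_1=1/A_2$; the $O(\kappa^2)$ term of the second gives $r_1=c_1^2 B_1=B_1/A_2^2$; the $O(\kappa^2)$ term of the first gives $c_2$ as a linear combination of $c_1^2 T_1$ and $c_1 r_1 A_4$; and the $O(\kappa^3)$ term of the second gives $r_2$ in terms of $c_1 c_2 B_1$, $c_1^3 B_2$, and $r_1 c_1^2 T_1$. Back-substituting the earlier coefficients yields the closed forms (\ref{eq:c12_r12}); the exact least-squares value $r^2(\kappa)=\tfrac{\kappa}{1-\kappa}\sigma^2$, which forces $r_1=r_2=\sigma^2$, serves as a consistency check on the constants and their signs.

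The main obstacle is the bookkeeping: one must confirm that every term surviving to orders $\kappa^2$ (for $c$) and $\kappa^3$ (for $r^2$) has been retained, and that the discarded remainders are genuinely higher order once $c\asymp\kappa$ and $r^2\asymp\kappa$ — in particular the $O(c^3)$ tail of (\ref{eq:prox_approx}) and the $O(r^4)$ tail of (\ref{eq:E_xi_epsilon}), together with the mixed $c^2r^2$ and $c^3r^2$ terms. A minor but worthwhile observation is that the prox expansion need only be carried to $O(c^3)$: in the squared residual any $O(c^3)$ error is multiplied by $c\,f_{[1]}$ and hence affects only order $c^4$, so it never enters $r_2$. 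Once this accounting is settled, solving the cascade is routine algebra in $A_2,A_4,T_1,B_1,B_2$.
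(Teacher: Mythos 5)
Your proposal follows essentially the same route as the paper's proof: substitute the ansatz into Eqs.~(\ref{eq:first_equation})--(\ref{eq:second_equation}), reduce them via the proximal expansion (\ref{eq:prox_approx}) and the change-of-measure identity (\ref{eq:E_xi_epsilon}) to the pair $\kappa = cA_2 + cr^2A_4 - c^2T_1 + O(\kappa^3)$ and $\kappa r^2 = c^2B_1 + c^2r^2T_1 - 2c^3B_2 + O(\kappa^4)$, and then solve the triangular cascade for $c_1, r_1, c_2, r_2$ by matching powers of $\kappa$. The reduced equations, the order of extraction, and the bookkeeping of the discarded remainders all agree with the paper's argument, so the proposal is correct.
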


\begin{proof}
As common in applied mathematics, when analyzing the solutions of equations with a small parameter, we assume the solution is of the form (\ref{eq:assumed_form}), show that this expansion is consistent with the form of the equations, and extract explicit expressions for the coefficients by comparing terms of equal order in \(\kappa\) ~\citep{bender_advanced_1999}. 

To this end, let us first simplify the original set of equations, in the limit as $\kappa\to 0$. Inserting (\ref{eq:prox_approx}) into (\ref{eq:first_equation})
yields
\[
\kappa = \mathbb{E}_{\zhat}\left[c\loss_2(\zhat)-c^2(\loss_2^2(\zhat)+\loss_1(\zhat)\loss_3(\zhat))
\right] + O(c^3).
\]
Since to leading order $c=O(\kappa)$, the error in the above equation is $O(\kappa^3)$. Next, using Eq. (\ref{eq:E_xi_epsilon}), the terms on the right hand side may be further approximated as follows, 
\begin{align*}
        & \mathbb{E}_\zhat[c\loss_2(\zhat)] =
         c\mathbb{E}_\epsilon[\loss_2] + \frac12 cr^2\mathbb{E}[\loss_4] + O(cr^4) ,  \\
        & \mathbb{E}_\zhat[c^2(\loss_2^2(\zhat)+\loss_1(\zhat)\loss_3(\zhat)] =    
         c^2\mathbb{E}_\epsilon[\loss_2^2+\loss_1\loss_3] +O(c^2r^2).
\end{align*}
Given the assumed expansion for \(c(\kappa)\) and \(r^{2}(\kappa)\), 
both error terms are $O(\kappa^3)$ and
 the first equation reads 
\begin{align}
\begin{split}
\label{eq:1st_eq_kappa}
        \kappa = c\mathbb{E}_\epsilon[\loss_2] + 
        \tfrac12 c{r^2}\mathbb{E}_\epsilon[\loss_4]
        -c^2\mathbb{E}_\epsilon[\loss_2^2+\loss_1\loss_3] + O(\kappa^3). 
\end{split}
\end{align}

Next, we simplify the second equation (\ref{eq:second_equation}). To this end, note that
as $\kappa\to0$ and also $c\to 0$, 
\begin{align}
\begin{split}
        (\zhat-prox_c(\zhat))^2 =
        (c\loss_1-c^2\loss_1\loss_2+O(c^3))^2 = 
        c^2\loss_1^2-2c^3\loss_1^2\loss_2 + O(\kappa^4).
\end{split}
\end{align}
Thus, Eq.(\ref{eq:second_equation}) reads
\begin{align}
\begin{split}
        \kappa r^2 = \mathbb{E}_\zhat[(\zhat-\prox_c(\zhat)^2] =
        \mathbb{E}_\zhat[c^2 \loss_1^2(\zhat)-2c^3\loss_1^2(\zhat)\loss_2(\zhat)] + O(\kappa^4) .
\end{split}
\end{align}
Using Eq.(\ref{eq:E_xi_epsilon}), we further simplify the right hand side, to read
\begin{align}
\begin{split}
     \label{eq:2nd_eq_kappa}
        \kappa r^2 = 
        c^2\mathbb{E}_\epsilon[f_{[1]}^2] + c^2r^2\mathbb{E}_\epsilon[\loss_2^2+\loss_1\loss_3] -  
         2 c^3 \mathbb{E}_\epsilon[\loss_1^2 \loss_2] + O(\kappa^4).
\end{split}
\end{align}
For ease of notation, we thus write the system of Equations (\ref{eq:1st_eq_kappa}) and (\ref{eq:2nd_eq_kappa}) as follows, 
\begin{align}\label{eq:clean_approximations}
\begin{split}
\kappa &= c A_2 + c r^2 A_4 -c^2 T_1 + O(\kappa^3), \\
\kappa r^2 &= c^2 B_1 + c^2 r^2 T_1 - 
        2 c^3 B_2+ 
        O(\kappa^4),
\end{split}
\end{align}
where the coefficients $A_2,A_4,B_1,B_2$ and $T_1$ are given in Eq.~(\ref{eq:coeffs}). 

To determine the coefficients \(c_1,c_2,r_1,r_2\) we now insert the expansion (\ref{eq:assumed_form}) into Eq.(\ref{eq:clean_approximations}), and compare powers of $\kappa$. This gives
\begin{align}
\kappa =& A_2 c_1\kappa  +
        A_2 c_2\kappa^2  +
        A_4 c_1 r_1 \kappa^2 - 
        T_1 c_1^2 \kappa^2 + O(\kappa^3) \label{eq:kappa1}, \\
        r_1 \kappa^2 + r_2\kappa^3 =& B_1(c_1^2\kappa^2 +2c_1 c_2\kappa^3 ) + 
        T1 c_1^2 r_1\kappa^3  -
        B_2 c_1^32\kappa^3 + O(\kappa^4). \label{eq:kappa2} 
\end{align}
Thus, comparing the \(O(\kappa)\) terms in Eq. (\ref{eq:kappa1}) gives
$
c_1 =\frac{1}{A_2}.
$
Similarly, comparing the \(O(\kappa^2)\) terms in Eq. (\ref{eq:kappa2}) yields
$
r_1 = B_1 c_1^2 = \frac{B_1}{A_2^2}.
$
Next, comparing the \(O(\kappa^2)\) terms in Eq. (\ref{eq:kappa1}) gives
\[
c_2 = \frac{T_1c_1^2 - A_4 c_1 r_1}{A_2} = \frac{T_1}{A_2^3} - \frac{B_1A_4}{A_2^4}. 
\]
Finally, comparing the $O(\kappa^3)$ terms in Eq. (\ref{eq:kappa2}) yields the expression for $r_2$ in Eq. (\ref{eq:c12_r12}).
%\begin{align*}
%        r_2 =& 
%       2B_1 c_1 c_2 +
%        T_1 c_1^2 r_1 -
%        2 B_2 c_1^3 =
%        \frac{3 B_1 T_1}{A_2^4} - 
%        \frac{2B_1^2 A_4}{A_2^5} - 
%        \frac{2B_2}{A_2^3}.
%\end{align*}
\end{proof}

%%%%%%%%% Proof of high-dimensional distribution %%%%%%%%%%%%%%%
%\subsection{Proof of Corollary~\ref{cor:limit_dist_high_dim}}
%\label{apx:proof_dist_high_dim}
%
%From \citep[Supplementary]{bean_optimal_2013}, for a contrast $v$, as $p\to\infty$, 
%$\frac{v' \parallelized- v' \theta^*}{r(\kappa) \sqrt{v' \Sigma^{-1} v \frac{1}{pm}}}$
%
%$\sqrt{p}(\machinewise-\theta^*)$ 
%converges in distribution to a Gaussian distribution,
%$\mathcal{N}(0, r^2(\kappa) \Sigma^{-1})$.
%The corollary readily follows since $\parallelized$ has $m$ times smaller variance, and  $p = N\kappa/m$. \hfill $\Box$

%%%%%%%% Proof of Scope of theorem 1 %%%%%%%%%%%%%%%
\section{Scope of Theorem~\ref{thm:fixed_p_loss}}
\label{apx:proof_scope}

The assumptions of Theorem~\ref{thm:fixed_p_loss} are standard in the M-Estimation literature and apply to many common learning tasks.
Here is a non-comprehensive list of examples, relevant for parallelization.

%Recalling assumptions:
%\item $\hat\theta_n$ is consistent: $\machinewise=\theta^* + o_P(1)$.\label{as:consistency}
%\item $R(\theta)$ admits a second order Taylor expansion at $\theta^*$ with non singular Hessian $V_{\theta^*}$.\label{as:taylor}
%\item $f(Z,\theta)$ is differentiable at $\theta^*$ a.s. or in probability.
%\item $f(Z,\theta)$ is a.s. Lipschitz near $\theta^*$: 
%$| f(Z,\theta_1) - f(Z,\theta_2)|\leq \partial f(Z) \Vert \theta_1-\theta_2 \Vert$ with Lipschitz coefficient $\partial f(Z)$ bounded in squared expectation: $\mathbb{E}[ \partial f(Z)^2] <\infty$.
%\item {$\hat\theta_n$ is a sequence of (almost) risk minimizers with slack vanishing faster than $n^{-1}$: 
%$ \hat R_n(\machinewise)   \leq \inf_\theta \hat R_n(\theta) + o_P(n^{-1}) $.\label{as:almost_maximizer}}

\begin{example}[Non-Linear Least Squares]
\label{eg:non_linear_ls}
Here the loss function is $f(Z,\theta):=(Y-g_\theta(X))^2$, where \(g_{\theta}\)
is a smooth function indexed by a parameter \(\theta\). 
Applicability of Theorem~\ref{thm:fixed_p_loss} to this example, under suitable regularity conditions on the family $g_\theta$ is discussed in \cite[Example 5.27]{vaart_asymptotic_1998}.
\end{example}

\begin{example}[Non-Linear Quantile Regression]
\label{eg:quantile_regression}
The loss function corresponding to a quantile level $\tau$, is the tilted hinge loss $f(Z,\theta):=|(Y-g_\theta(X))(\tau-I_{\{(Y-g_\theta(X))<0\}})|$.
This case is similar to Example~\ref{eg:non_linear_ls} with the complication that the loss function has a single non differentiability point. 
Assumption~\ref{as:as_differentiability} is still satisfied for continuous $X$, since the probability of $f(Z,\theta)$ to be non-differentiable at $\theta^*$ is $0$.
\end{example}

\begin{example}[Robust Regression]
Consider non-linear robust regression with the Huber loss function, 
\begin{align*}
 f(Z,\theta) :=
  \begin{cases}
   \frac{1}{2} (Y-g_\theta(X))^2 & \text{if } |Y-g_\theta(X)| \leq \delta \\
   \delta \left( |Y-g_\theta(X)|-\frac{1}{2}\delta \right)       & \text{otherwise } 
  \end{cases}
\end{align*}
The linear case with $g_\theta(x)=X'\theta$ is discussed in \citep[Examples 5.28 \& 5.29]{vaart_asymptotic_1998}.
Theorem \ref{thm:fixed_p_loss} also applies to the  non-linear case, under appropriate assumptions on $g_\theta$. 
\end{example}

\begin{example}[Binary Regression]
\label{eg:ml_binary_regression}
Here the binary response $Y$ is assumed to arise from the generative model $P(Y|X)=\Psi(X'\theta)$, with a known and sufficiently regular $\Psi$. With the loss function taken as the negative log likelihood,
this case is discussed in \citep[Example 5.40]{vaart_asymptotic_1998}.
This setting covers popular generalized linear models such as Logistic, Probit, and Complementary Log-Log regression.
\end{example}

\begin{example}[SVM]
For $Y \in \{-1,1\}$ and $f(Z,\theta):=\max \{0,1- Y \, X'\theta \} + \frac{\lambda}{2} \Vert \theta \Vert^2_2$ we get the SVM problem with $l_2$ regularization. 
This problem satisfies the risk regularity assumption \ref{as:taylor} \citep[e.g.][]{shamir_is_2012}. 
The differentiability of the empirical risk \ref{as:as_differentiability} is settled, for continuous $X$, as in Example~\ref{eg:quantile_regression}.
\end{example}

%\newpage
%\clearpage

%\bibliographystyle{imaiai}
\bibliographystyle{abbrvnat}
\bibliography{Part1}

\end{document}